\newtheorem{lemma}{Lemma}
\newtheorem{proposition}{Proposition}
\newtheorem{theorem}{Theorem}
\newtheorem{corollary}{Corollary}
\newtheorem{definition}{Definition}
\newtheorem{assumption}{Assumption}
\newtheorem{example}{Example}
\newenvironment{proof}[1][Proof]{{\it #1. } }{\ \rule{0.5em}{0.5em}}
\newcommand{\Rb}{\mathbb{R}}
\newcommand{\Dc}{\mathcal{D}}
\newcommand{\Lc}{\mathcal{L}}
\newcommand{\Vc}{\mathcal{V}}
\newcommand{\jing}[1]{{\color{magenta}#1}}
\newcommand{\jing}[1]{#}
\newcommand{\jingc}[1]{{\color{magenta}(Jing: #1)}}
\newcommand{\jingc}[1]{}
\title{Non-asymptotic Convergence of Training Transformers for Next-token Prediction}
\author{%
  Ruiquan Huang \\
  Department of Electrical Engineering\\
  The Pennsylvania State University\\
  State College, PA 16801 \\
    \And
  Yingbin Liang \\
  Department of Electrical and Computer Engineering \\
   The Ohio State University\\
   Columbus, OH 43210 \\
   \And
  Jing Yang \\
  Department of Electrical Engineering\\
  The Pennsylvania State University\\
  State College, PA 16801 \\
}
\begin{document}

\maketitle

\begin{abstract}

Transformers have achieved extraordinary success in modern machine learning due to their excellent ability to handle sequential data, especially in next-token prediction (NTP) tasks. However, the theoretical understanding of their performance in NTP is limited, with existing studies focusing mainly on asymptotic performance. This paper provides a fine-grained non-asymptotic analysis of the training dynamics of a one-layer transformer consisting of a self-attention module followed by a feed-forward layer. We first characterize the essential structural properties of training datasets for NTP using a mathematical framework based on partial orders. 
Then, we design a two-stage training algorithm, where the pre-processing stage for training the feed-forward layer and the main stage for training the attention layer exhibit fast convergence performance. Specifically, both layers converge {\it sub-linearly} to the direction of their corresponding max-margin solutions. We also show that the cross-entropy loss enjoys a {\it linear} convergence rate. Furthermore, we show that the trained transformer presents non-trivial prediction ability with dataset shift, which sheds light on the remarkable generalization performance of transformers. Our analysis technique involves the development of novel properties on the attention gradient and further in-depth analysis of how these properties contribute to the convergence of the training process. Our experiments further validate our theoretical findings.

\end{abstract}

\section{Introduction}
The transformer architecture \citep{vaswani2017attention} has revolutionized the field of machine learning, establishing itself as a foundation model for numerous applications, including natural language processing (NLP)~\citep{devlin2018bert}, computer vision~\citep{dosovitskiy2020image}, and multi-modal signal processing~\citep{tsai2019multimodal}. In particular, transformers achieve tremendous empirical success in large language models (LLMs) such as GPT-3 \citep{brown2020language}. Despite the empirical success, limited theoretical understanding of transformers have caused a series of critical concerns about their robustness, interpretability, and bias issues~\citep{bommasani2021opportunities,belkin2024necessity}.

To overcome these issues, recent advances in transformer theory have investigated the convergence of training transformers under theoretically amenable setting such as linear regression~\citep{mahankali2023one,zhang2023trained,huang2023context} and binary classification~\citep{tarzanagh2023max,tarzanagh2023transformers,vasudeva2024implicit,li2023theoretical}. Nevertheless, one of the fundamental task in LLMs and other generative models is next-token prediction (NTP), which involves predicting the next word or token in a sequence, given the previous tokens. In NTP, a few recent theoretical studies have started to investigate the training dynamics of transformers~\citep{tian2023scan,li2024mechanics}. However, those works lack of fine-grained {\em non-asymptotic} convergence analysis of the training process, posing the following open questions for further investigation:
\begin{center} 
{\it How fast does the training of a transformer converge in NTP?}
\end{center}
In addition, a pre-trained transformer empirically exhibits non-trivial generalization ability. A follow-up question from a theoretical point of view is that  
\begin{center} 
{\it Can we show the generalization capability of a trained transformer on unseen data?}
\end{center}
In this paper, we take a first step towards addressing the aforementioned questions by studying the training dynamics of a single layer transformer consisting of a self-attention layer and a feed-forward layer for NTP. We summarize our contribution as follows.
\begin{itemize}[leftmargin=*]\itemsep=0pt
\item We develop a mathematical framework based on partial order to formally characterize the essential structural properties of the training dataset for next-token prediction. In particular, we introduce a realizable setting for training datasets where the loss can be minimized to near zero, which admits a {\it collocation} and {\it query-dependent partial orders}. A collocation is a set of token pairs where each token is directly paired with its subsequent token. Query-dependent partial orders is a set of partial orders where each partial order classifies tokens into three categories: optimal tokens, non-optimal tokens and non-comparable tokens. These structural properties define favorable max-margin problems on both the feed-forward layer and the self-attention layer.   
    
\item Second, we design a two-stage training algorithm based on normalized gradient descent. In stage 1 of pre-processing, we use the collocation to train the feed-forward layer. In stage 2, we use the entire dataset to train the self-attention layer. We show that the feed-forward layer and the query-key attention matrix converge sublinearly in direction respectively to the max-margin solution for classifying next token from all other tokens in the preprocessing dataset, and to the max-margin solution for classifying the optimal from non-optimal tokens.
In addition, the norm of the transformer parameters grows linearly, which further yields a {\it linear} convergence rate of the cross-entropy loss. Our two-stage algorithm decouples the training of the feed-forward and attention layers without losing optimality, as stage 1's max-margin solution is judiciously designed to facilitate stage 2's fine-grained classification for optimal token prediction.


\item Third, we show that the trained transformer has generalization ability for making non-trivial prediction on unseen data. In particular, the transformer is trained to learn an extended query-dependent partial order, 
where the non-comparable tokens are inserted in between the optimal tokens and non-optimal tokens. Thus, the trained transformer will attend to non-comparable tokens if optimal tokens are not in a new sentence and further make desirable prediction. 
\end{itemize}


\section{Related Work}


Inspired by \citet{brown2020language}, who demonstrated that pre-trained transformers can learn in-context - i.e., learn new tasks during inference with only a few samples - a series of works focus on the expressiveness power of transformers~\citep{akyurek2022learning,bai2023transformers,von2023transformers,fu2023transformers,giannou2023looped,lin2023transformers}. These studies have shown that there exist parameter configurations such that transformers can perform various algorithms such as gradient descent. Additionally, \citet{edelman2022inductive} showed that transformers can represent a sparse function.

Regarding the training dynamics and optimization of transformers under in-context learning, \citet{ahn2024transformers,mahankali2023one,zhang2023trained,huang2023context} studied the dynamics of a single attention layer, single-head transformer for the in-context learning of linear regression tasks. 
\citet{cui2024superiority} proved that multi-head attention outperforms single-head attention. \citet{cheng2023transformers} showed that local optimal solutions in transformers can perform gradient descent in-context for non-linear functions. \citet{kim2024transformers} studied the nonconvex mean-field dynamics of transformers, and \citet{nichani2024transformers} established a convergence rate of $\tilde{O}(1/t)$  for the training loss in learning a causal graph. Additionally, \citet{chen2024training} investigated the gradient flow in training multi-head attention. \citet{chen2024provably} proposed a supervised training algorithm for multi-head transformers.

Another line of research focuses on the training dynamics of transformers for binary classification problems. \citet{tarzanagh2023max, tarzanagh2023transformers} demonstrated an equivalence between the optimization dynamics of a single attention layer and a certain SVM problem. While \citet{tarzanagh2023max, tarzanagh2023transformers} only proved an asymptotic convergence result, \citet{vasudeva2024implicit} improved the convergence rate to  $t^{-3/4}$. \citet{li2023theoretical} studied the training dynamics of vision transformers and showed that the generalization error can approach zero given sufficient training samples. Additionally, \citet{deora2023optimization} investigated the training and generalization error under the neural tangent kernel (NTK) regime.

For transformers trained on next-token prediction (NTP), \citet{tian2023scan} analyzed the training dynamics of a single-layer transformer, while \citet{tian2023joma} studied the joint training dynamics of multi-layer transformers. \citet{li2024mechanics} demonstrated the asymptotic convergence of transformers trained with a logarithmic loss function for NTP. Although these works provided valuable insights into the training dynamics of transformers for NTP, they did not provide the finite-time convergence analysis, which is the focus of this paper. We remark that \citet{thrampoulidis2024implicit} studied NTP without transformer structure.


Our work is also related to the classical implicit bias framework for training neural networks (NNs). In particular \citet{soudry2018implicit,nacson2019convergence,ji2021characterizing,ji2021fast} established convergence rate of gradient descent-based optimization. \citet{phuong2020inductive,frei2022implicit,kou2024implicit} studied the implicit bias of ReLU/Leaky-ReLU networks on orthogonal data. A comprehensive survey is provided in \citet{vardi2023implicit}. However, these works focused on classical neural networks, whereas we investigate the implicit bias of transformers for NTP.
 
\section{Problem Setup}

{\bf Notations.} All vectors considered in this paper are column vectors. We use $\mathbbm{1}\{A\}$ to denote the indicator function of $A$, i.e., $\mathbbm{1}\{A\} = 1$ if $A$ holds, and $\mathbbm{1}\{A\} = 0$ otherwise. $\|W\|$ represents the Frobenious norm of the matrix $W$. For a vector $v$, we use $[v]_i$ to denote the $i$-th coordinate of $v$. We use $\phi(v)$ to denote the softmax function, i.e., $[\phi(v)]_i = \exp(v_i)/\sum_j \exp(e_j^\top v)$, which can be applied to any vector with arbitrary dimension.  We use $\{e_i\}_{i\in [|\Vc|]}$ to denote the canonical basis of $\Rb^{|\Vc|}$, i.e., $[e_i]_j = \mathbbm{1}\{i=j\}$. The inner product $\langle A, B\rangle$ of two matrices $A,B$ equals to $\mathrm{Trace}(AB^\top)$.


{\bf Next-token prediction.} We consider the task of next-token prediction, which aims to predict the subsequent token in a token sequence given its preceding tokens. Formally, suppose that there exists a finite vocabulary set $\mathcal{V} \subset \Rb^d$ that consists of all possible tokens, where $d$ is the dimension of the embedding. Each token $x\in\Vc$ is associated with a unique index $\mathrm{I}(x)\in\{1,2,\ldots,|\Vc|\}$, where $\mathrm{I}$ is the index function.
An $L$-length sentence $X = [x_1,\ldots, x_L]\in\Vc^L\subset \Rb^{d\times L}$ is a sequence of $L$ tokens, where $L$ is an integer. We assume that the maximum length of sentences is $L_{\max}$. The subsequent tokens in sentences are generated from a set of ground-truth model $\{p^*_L: \Vc^L \rightarrow \Vc\}_{L< L_{\max}}$, where $p^*_L$ generates the next token $x_{L+1}$ given the sentence $X$ for any $1\leq L<L_{\max}$. The task of next-token prediction requires us to learn all models $\{p_L^*\}_{L<L_{\max}}$ given a training dataset $\Dc_0 = \{ (X, x_{L+1})| L< L_{\max}, X\in\Vc^L, x_{L+1}\in\Vc \}$. Notably, if $X=[x_1,\ldots,x_L]\in\Dc_0$, then for any $\ell<L$, $([x_1,\ldots,x_\ell], x_{\ell+1})$ is also a training sample, since it follows $p_{\ell}^*$ as well.  

{\bf Decoder-only transformer.} A decoder-only transformer is a stack of blocks consisting of a self-attention layer and a feed-forward layer. For simplicity, we consider one-layer transformer, where the self-attention layer is determined by three matrices: $W_{\mathrm{k}}\in\Rb^{d\times d_1}$, $W_{\mathrm{q}}\in\Rb^{d_1\times d}$ and $W_{\mathrm{v}}\in\Rb^{d_2\times d}$, namely key, query, and value matrices, and the feed-forward layer is determined by $W_{\mathrm{o}} \in \Rb^{|\Vc|\times d_2}$. Here $d_1,d_2$ are hidden dimensions. Mathematically, given the input $X = [x_1\ldots, x_L]$, we write the one-layer transformer as $\mathrm{T}_\theta(X) := \phi(W_{\mathrm{o}}  W_{\mathrm{v}}  X  \phi(X^\top W_{\mathrm{k}} W_{\mathrm{q}}   x_{L} )) \in [0,1]^{|\Vc|}$,
where $\theta := (W_{\mathrm{{o}}}, W_{\mathrm{v}}, W_{\mathrm{k}}, W_{\mathrm{q}} )$, and $\phi$ is the softmax function. We note that the inner softmax function $\phi$ is part of the attention model, and the outer softmax function $\phi$ is the decoder that generates a probability distribution over $\Vc$ for token prediction.

{\bf Reparameterization.} 
We reparameterize the transformer architecture by consolidating the key and query matrices into a unified matrix $W_{\mathrm{kq}}$, such that $W_{\mathrm{kq}} = W_{\mathrm{k}}W_{\mathrm{q}}$. Similarly, we reparameterize the product of the feed-forward ($W_{\mathrm{{o}}}$) and value ($W_{\mathrm{v}}$) matrices as a single matrix $W_{\mathrm{{o}v}}$, defined as $W_{\mathrm{{o}v}} = W_{\mathrm{{o}}} W_{\mathrm{v}}$. Such a reparameterization is commonly adopted in transformer theory works~\citep{huang2023context,tian2023scan,li2024mechanics,nichani2024transformers}. Thus, the transformer under those reparameterization is given by $\mathrm{T}_\theta(X) := \phi(W_{\mathrm{{o}v}}  X  \phi(X^\top W_{\mathrm{kq}}   x_{L} )) \in [0,1]^{|\Vc|}.$

{\bf Cross-entropy loss.} Given the training dataset $\Dc_0$ and the transformer model, we seek to learn $p_*$ by minimizing (training) the {\em cross-entropy} loss $\Lc(\theta)$ defined as follows:
\begin{align*}
    \Lc(\theta) =  - \frac{1}{|\Dc_0|}\sum_{(X,x_{L+1})\in \Dc_0} \log e_{\mathrm{I}(x_{L+1})}^\top\mathrm{T}_{\theta}(X), 
\end{align*}
where $\mathrm{I}(x_{L+1})$ is the index of $x_{L+1}$ in $\Vc$.

\section{Realizable Training Dataset and Two-Stage Algorithm}\label{sec:dataset} 

In this section, we first provide a mathematical framework based on partial order to formally characterize a realizable training dataset for next-token prediction. We will then describe a two-stage algorithm for next-token prediction that we study.

\subsection{Realizable Training Dataset}


We characterize a realizable training dataset via two structural properties, where the training loss can be made arbitrarily close to zero. We first provide some intuitions about those two properties.

{\bf Existence of ``collocation''.} First, we note that if a sentence $X=[x_1,\ldots,x_L]$ is a legal training sample, $([x_1], x_2)$ is also in the training dataset. In addition, the output of a transformer given one single input token only depends on $W_{\mathrm{ov}}$, i.e. the feed-forward layer. Since training loss can be arbitrarily close to 0, there exists a sequence $\{W_t\}$ such that $\lim_{t\rightarrow\infty}-\sum_{x\in\Dc_0}\log e_\iota(x)^\top\phi(W_t x) = 0$, where $\iota(x)$ is the index of next token of $x$, and the summation is over the case when $x$ is the first token. Due to that $\phi(W_tx)$ is a probability distribution, the equality holds only when $\iota$ is injective, since otherwise it is an entropy of some distribution which is strictly greater than 0. Therefore, there exists an injective map $\mathrm{n}: \Vc\rightarrow\Vc$ such that every sentence starts with $x$, must have a unique next token $\mathrm{n}(x)$. We call the set of pairs $\{x,\mathrm{n}(x)\}_{x\in\Vc}$ a {\em collocation}.
We remark that $p^*_1 = \mathrm{n}$.

{\bf Existence of ``order''.} Second, let us consider the output of a transformer $\mathrm{T}_\theta$ given a legal sentence $X=[x_1,\ldots,x_L]$ with the next token $x_{L+1} = p_L^*(X)$. The transformer first calculates a convex combination of $ x_1,\ldots,x_L $ with corresponding weight $\varphi_\ell \propto \exp(x_\ell^\top W_{\mathrm{kq}} x_L)$ for each $\ell\leq L$. Then, the transformer outputs $\phi(\sum_\ell W_{\mathrm{ov}} x_\ell \varphi_\ell)$. Recall that the collocation forces $x_\ell$ to map to $\mathrm{n}(x_\ell)$, thus $\phi(W_{\mathrm{ov}}x_\ell)$ has a peak value at the coordinate equal to $\mathrm{I(n}(x_\ell))$ (the index of $\mathrm{n}(x_\ell)$). Hence, $\mathrm{T}_{\theta}(X)$ can only have peak value at the coordinates within the set $\{\mathrm{I(n}(x_\ell))\}_{\ell\leq L}$. If the training loss can be arbitrarily close to 0, it is desirable to have $\mathrm{n}^{-1}(x_{L+1})\in\{x_\ell\}_{\ell\leq L}$. Therefore, for those $x_\ell$ with $\mathrm{n}(x_\ell) = x_{L+1}$, $\varphi_\ell$ must be larger than $\varphi_{\ell'}$ with $\mathrm{n}(x_{\ell'}) \neq x_{L+1}$. Finally, it worth noting that $\varphi_\ell$ depends on the final token $x_L$. This observation motivates us to define {\em query-dependent partial orders} on $\Vc$.

\begin{definition}[$x^q$-partial order]\label{def:partial order}
    Fix a token $x^q$. An $x^q$-partial order assigns an ordering relationship $>_{x^q}$ for certain pairs of tokens in $\Vc$, and is created as follows. Let $\Dc_0^{x^q}$ be the set of all legal sentences in the training dataset that has the final token (query) $x^q$. Then, for any pair of tokens $x,x'\in\Vc$, we assign $x>_{x^q} x'$ if there exists a sentence $X=[x_1,\ldots,x_L]\in\Dc_0^{x^q}$ and $x,x'$ are tokens in $X$ such that $\mathrm{n}(x) = x_{L+1}\neq \mathrm{n}(x')$, where $x_{L+1}$ is the next token of $X$. 
\end{definition}
Note that \Cref{def:partial order} is a ``constructive definition'' which might not be well-defined. However, as we are under the setting when the training loss can be arbitrarily close to 0, the aforementioned discussion shows that if $x >_{x^q} x'$, then $\varphi_\ell > \varphi_{\ell'}$, where $x = x_\ell$ and $x'=x_{\ell'}$ in some sentence. Thus, $\exp(x W_{\mathrm{kq}} x^q) > \exp(x' W_{\mathrm{kq}} x^q)$, which indeed need to be well-defined. Otherwise, we will have contradictions such as  $\exp(x W_{\mathrm{kq}} x^q) > \exp(x' W_{\mathrm{kq}} x^q)< \exp(x W_{\mathrm{kq}} x^q)$. Mathematically, a well-defined (strict) partial order $>$ on a set $\mathcal{V}$ satisfies two axioms~\citep{yannakakis1982complexity}: (i) there is no $x>x$;  (ii) if $x>x'$ and $x'>x''$, then $x>x''$. Thus, $x^q$-partial order created by $\Dc_0$ is well-defined for every $x^q\in\Vc$.

Finally, let us discuss the impact of query-dependent partial orders on $\Dc_0$. For a given query $x^q$, the partial order $>_{x^q}$ divides tokens in $\Vc$ into four disjoint types. 
\begin{itemize}[leftmargin=*]\itemsep=0pt
    \item {\em (Strict) optimal tokens.} A token $ x$ is optimal, if there is no $x'$ such that $x'>_{x^q} x$\footnote{This is also related to the maximal element in a partially ordered set.}. \item {\em Confused tokens.} A token $x$ is confused, if there exists $x',x''$ such that $x'>_{x^q} x >_{x^q} x''$. 
    \item {\em (Strict) non-optimal tokens.} A token $x$ is non-optimal if there is no $x'$ such that $x >_{x^q} x'$\footnote{This is also related to the minimal element in a partially ordered set.}.
    \item {\em Non-comparable tokens.} A token $x$ is non-comparable if there is no $x'$ such that $x>_{x^q}x'$ or $x'>_{x^q} x$.
\end{itemize}

In this work, we assume that there are no confused tokens. This assumption simplifies the problem, making it tractable to provide explicit convergence in direction for training a transformer in \Cref{sec:main}. In summary, we make the following structural assumption on the training dataset.

\begin{assumption}[Realizable training dataset]\label{assm: realizable dataset}
$\Dc_0$ admits (i) a collocation $\{x,\mathrm{n}(x)\}_{x\in\Vc}$; (ii) well-defined query-dependent partial orders, where every $x^q$-partial order has no confused tokens.
\end{assumption}

We remark that combining the collocation and query-dependent partial orders, we can regenerate the training dataset as follows. For any sentence with only one token $X=[x]$, the next token is $\mathrm{n}(x)$. For other sentences $X=[x_1,\ldots,x_L]$, let $x_\ell$ be optimal under the partial order $>_{x_{L}}$, and then the next token of $X$ is $\mathrm{n}(x_\ell)$. We next provide a simple example that justifies \Cref{assm: realizable dataset}.
\begin{example}\label{eg:SOV}
    Consider a language system where the vocabulary consists of four tokens $\{$S, V, O, P$\}$, where S,V,O,P respectively 
    stand for subject, verb, object, and punctuation mark. 
    This system admits the commonly adopted word order~\citep{dryer1991svo}: S, V, O, P. Let the training dataset be $\{\text{SVOP, VOP, OPP, PSV}\}$. 

    Let us create the corresponding collocation and the query-dependent partial orders from the dataset. The collocation is $\{(\text{S, V} ), (\text{V, O}), (\text{O, P}), (\text{P, S})\}$. That is, if a sentence starts with a subject, then the next token is a verb. Similarly, if a sentence starts with a verb, then the next token is an object, and so on. The query-dependent partial orders are created as follows:

    \noindent
    \begin{minipage}[t]{0.45\textwidth}
    Partial order under query S. S$>_{\text{S}}$P.  \\
    Partial order under query O. O$>_{\text{O}}$S, O$>_{\text{O}}$V.
    \end{minipage}%
    \hfill%
    \begin{minipage}[t]{0.45\textwidth}
     Partial order under query V. V$>_{\text{V}}$S. \\
    Partial order under query P. O$>_{\text{P}}$P.
    \end{minipage}
    
    \if{0}
    \begin{itemize}[leftmargin=*]\itemsep=0pt
        \item Partial order under query S. 
        S$>_{\text{S}}$P. 
        \item Partial order under query V. V$>_{\text{V}}$S. 
        \item Partial order under query O. O$>_{\text{O}}$S, O$>_{\text{O}}$V. 
        \item Partial order under query P. 
        O$>_{\text{P}}$P.
    \end{itemize}
    \fi
    Therefore, if a sentence starts with S (subject), the next token is V (verb) according to the collocation. Then, for the sentence SV, since the query is V and V$>_{\text{V}}$S, the next token of the sentence coincides with the next token of V, which is exactly O (object). Finally, for the sentence SVO, following similar argument,  the next token is P (punctuation mark). This example satisfies \Cref{assm: realizable dataset} and aligns with real-world scenarios. An illustration is provided in \Cref{fig:SOV}.
    \begin{figure}[h]
    \centering
    \includegraphics[scale=0.4]{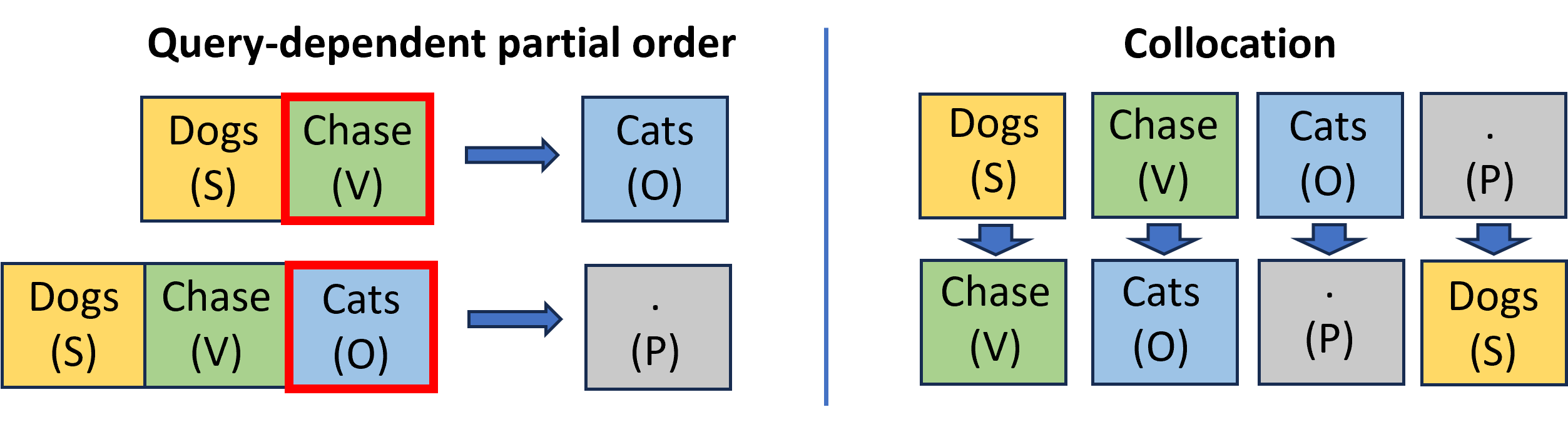}
    \caption{The left plot shows the mapping from sentence to the next token. The red rectangle indicates the optimal token in the corresponding sentence. The right plot shows the collocation relationship.}
    \label{fig:SOV}
\end{figure} 
\end{example}



{\bf Additional notations of training data.} It is worth noting that there are only finite number of distinct sentences. For ease of presentation, we introduce the following notations. Suppose there are $N$ distinct sentences in the training dataset $\Dc_0$ indexed by $n\in\{1,\ldots,N\}$. For each distinct sentence $X^{(n)}$, we calculate its frequency $\pi^{(n)}\in[0,1]$ in dataset $\Dc_0$ as 
\(\pi^{(n)} = \frac{\sum_{(X,x_{L+1}) \in \Dc_0}\mathbbm{1}\{X = X^{(n)}\}}{ |\Dc_0| }.\)

Building upon this, with a little abuse of notation, we use  
$\mathrm{n}(X^{(n)})\in\Vc$ to denote the subsequent token of the sentence $X^{(n)}$ and $\mathrm{In}(X^{(n)})$ to denote the index of $\mathrm{n}(X^{(n)})$. 

We further denote $X_{-1}^{(n)}$ as the final token of $X^{(n)}$, and let $\bar{\mathrm{T}}_{\theta}(X) = W_{\mathrm{{o}v}}X\phi(X^\top W_{\mathrm{kq}}X_{-1}^{(n)})$. Then, the loss function $\Lc(\theta)$ can be rewritten as follows:
\begin{align}
\Lc(\theta) = \sum_n \pi^{(n)} \left( \log\left( \sum_v \exp\left( e_{v}^\top \bar{\mathrm{T}}_{\theta}(X^{(n)}) 
\right) \right) - e_{\mathrm{In}(X^{(n)})}^\top \bar{\mathrm{T}}_{\theta} (X^{(n)})  \right). \label{eqn: loss}
\end{align}

\subsection{Training Algorithm}
For the realizable dataset satisfying \Cref{assm: realizable dataset}, we propose a two-stage training algorithm using normalized gradient descent (NGD). The pseudo code of the algorithm is presented in \Cref{alg}. In \Cref{sec:main}, we show that the two-stage algorithm decouples the training of the feed-forward and attention layers without losing the optimality. This is because the training in stage 1 is designed to yield a suitable max-margin solution, which will enable the training of stage 2 to solve a fine-grained classifcation problem and identify the optimal token for prediction.


In the first stage of pre-processing, we use the collocation set to train the feed-forward layer $W_{\mathrm{ov}}$. For simplicity, we introduce the following notation for the training loss of the feed-forward layer. Given a collocation $\{x,\mathrm{n}(x)\}_{x\in\Vc}$, which can be obtained through extracting all length-2 sentences in the training dataset $\Dc_0$, we use normalized gradient descent to train $W_{\mathrm{ov}}$. Equivalently, the loss function can be written as  
\[ \Lc_0(W_{\mathrm{{o}v}}) = -\sum_{x\in\Vc} \log\frac{ \exp(e_{\mathrm{In}(x)}^\top W_{\mathrm{{o}v}} x) }{\sum_{v\leq |\Vc|} \exp(e_v^\top W_{\mathrm{{o}v}} x)}, \]
where the self-attention elements are removed because the attention matrices are not trained here. Based on the above loss function, we initialize $W_{\mathrm{ov}}^{(0)} = 0\in\Rb^{|\Vc|\times d}$, and subsequently take an update at each time $t$ by NGD as in line 4 of \Cref{alg}.

In the second stage, we fix the trained feed-forward layer and train the self-attention layer based on the loss function given in \Cref{eqn: loss} and using the entire dataset $\Dc_0$. Specifically, we initialize $W_{\mathrm{kq}} = 0\in\Rb^{d\times d}$, and subsequently take an update at each time $t$ by NGD as in line 7 of \Cref{alg}.

\begin{algorithm}
    \caption{Two-stage Normalized Gradient Descent}
    \label{alg}
    \begin{algorithmic}[1]
        \STATE {\bfseries Initialization:} $W_{\mathrm{ov}}^{(0)} = 0\in\Rb^{|\Vc|\times d}$, $W_{\mathrm{kq}} = 0\in\Rb^{d\times d}$.
        \STATE {\bfseries Input:} A collocation $\{x,\mathrm{n}(x)\}_{x\in\Vc}$, and a training dataset $\Dc_0$, learning rate $\eta_0,\eta.$
        \FOR{$t\in\{0,1,...,T-1\}$}
        \STATE Update $W_{\mathrm{ov}}^{(t+1)}$ as
        \(
        W_{\mathrm{{o}v}}^{(t+1)} = W_{\mathrm{{o}v}}^{(t)} - \eta_0 \frac{ \nabla_{W_{\mathrm{{o}v}}}\Lc_0(W_{\mathrm{{o}v}}^{(t)}) }{ \|\nabla_{W_{\mathrm{{o}v}}}\Lc_0(W_{\mathrm{{o}v}}^{(t)})\| }.
        \) 
        \ENDFOR
        \FOR{$t\in\{0,\ldots,T_1-1\}$}
        \STATE Update $W_{\mathrm{kq}}^{(t+1)}$ as 
        \( 
            W_{\mathrm{kq}}^{(t+1)} = W_{\mathrm{kq}}^{(t)} - \eta \frac{ \nabla_{W_{\mathrm{kq}}}\Lc(\theta^{(t)}) }{\|\nabla_{W_{\mathrm{kq}}}\Lc(\theta^{(t)})\|}  
        \), where $\theta^{(t)} = (W_{\mathrm{ov}}^{(T)}, W_{\mathrm{kq}}^{(t)})$.
        \ENDFOR
    \end{algorithmic}
\end{algorithm}

\section{Training Dynamics of the Transformer}\label{sec:main}
In this section, we present the convergence result for \Cref{alg}. Before we proceed, we first introduce the following technical assumption, which has been commonly adopted in the previous theoretical studies of transformers~\citep{huang2023context,li2024mechanics,tian2023scan}.
\begin{assumption}\label{assm:orthornomal}
    The vocabulary set is orthornormal. Namely, the embedding has unit norm, i.e., $\|x\|=1$, and $x^\top x'=0$ holds for any distinct tokens $x$ and $x'$.
\end{assumption}

\subsection{Convergence of Training $W_{\mathrm{ov}}$}


To characterize the training dynamics of $W_{\mathrm{ov}}$, we observe that the collocation \( \{(x, \mathrm{n}(x))\}_{x\in\Vc} \) defines the following hard-margin problem:
\begin{align}
        W_{\mathrm{{o}v}}^* &= \arg\min \|W\|,  \quad  \text{s.t.} \quad~ (e_{v^*} - e_v) W x \geq 1, \quad \forall v^*=\mathrm{In}(x), v\neq  \mathrm{In}(x).\label{eqn: opt V}
        \end{align}
It can be shown that 
$\lim_{B\rightarrow+\infty}\Lc_0(BW_{\mathrm{ov}}^*) = 0$. Thus, the loss function $\Lc_0$ trains $W_{\mathrm{ov}}$ to be the max-margin solution with $W_{\mathrm{ov}}x$ distinguishing the next token $\mathrm{n}(x)$ from all other tokens in $\mathcal V$. 

Since $\Lc_0(\cdot)$ is convex, we have the following convergence result on the training of $W_{\mathrm{{o}v}}^{(t)}$.
\begin{proposition}\label{prop: V converge}
    Let $W_{\mathrm{ov}}^*$ be defined in \Cref{eqn: opt V}.
    Under Assumptions \ref{assm: realizable dataset}-\ref{assm:orthornomal}, let $W_{\mathrm{ov}}^{(t)}$ be updated by  \Cref{alg}. Then, for any $t\geq 2$, we have $\frac{t \eta_0 }{2\|W_{\mathrm{{o}v}}^*\|}  \leq \|W_{\mathrm{{o}v}}^{(t)}\| \leq t\eta_0 $ and the following bound holds:

    \[\left\langle \frac{W_{\mathrm{{o}v}}^{(t)}}{\|W_{\mathrm{{o}v}}^{(t)}\|}, \frac{W_{\mathrm{{o}v}}^*}{\|W_{\mathrm{{o}v}}^*\|} \right\rangle \geq 1- \frac{ 5\|W_{\mathrm{{o}v}}^*\|^3  \log(2|\Vc|)\log t  }{ t\eta_0 }.\]  
    
    Moreover, the loss function $\Lc_0$ satisfies that $\Lc_0(W_{\mathrm{ov}}^{(t)})\leq O(\exp(-\eta_0 t/(4\|W_{\mathrm{ov}}^*\|)))$.
\end{proposition}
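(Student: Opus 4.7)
The loss $\Lc_0$ is convex in $W_{\mathrm{ov}}$ (a sum of log-softmax losses with linear arguments), and the margin constraints defining $W_{\mathrm{ov}}^*$ in \Cref{eqn: opt V} give $\lim_{B\to\infty}\Lc_0(BW_{\mathrm{ov}}^*)=0$. The plan is to instantiate the implicit-bias analysis of normalized gradient descent on a convex, exponentially-tailed loss, specialized to the softmax multi-class setting with orthonormal features, and to establish (i) the two norm bounds, (ii) the linear-rate loss bound, and (iii) the direction alignment, in that order.

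The upper bound $\|W_{\mathrm{ov}}^{(t)}\|\leq t\eta_0$ is immediate by induction from $W_{\mathrm{ov}}^{(0)}=0$, $\|W_{\mathrm{ov}}^{(t+1)}-W_{\mathrm{ov}}^{(t)}\|=\eta_0$, and the triangle inequality. For the lower bound, I would first write
\[
\nabla\Lc_0(W)\;=\;\sum_{x\in\Vc}\bigl(p(x;W)-e_{\mathrm{In}(x)}\bigr)x^\top,\qquad p(x;W):=\phi(Wx),
\]
and exploit the unit margin $(e_{\mathrm{In}(x)}-e_v)^\top W_{\mathrm{ov}}^* x\geq 1$ to derive
\[
-\langle \nabla\Lc_0(W),W_{\mathrm{ov}}^*\rangle \;=\; \sum_x\sum_{v\neq \mathrm{In}(x)} p_v(x;W)\,(e_{\mathrm{In}(x)}-e_v)^\top W_{\mathrm{ov}}^* x \;\geq\; S(W),
\]
where $S(W):=\sum_x(1-p_{\mathrm{In}(x)}(x;W))$. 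Combining $\|p(x;W)-e_{\mathrm{In}(x)}\|_2\leq \sqrt{2}\,(1-p_{\mathrm{In}(x)}(x;W))$ with $\|x\|=1$ and the triangle inequality in Frobenius norm yields $\|\nabla\Lc_0(W)\|_F\leq \sqrt{2}\,S(W)$. Dividing these two estimates produces the per-step progress bound $-\langle \nabla\Lc_0/\|\nabla\Lc_0\|,\,W_{\mathrm{ov}}^*/\|W_{\mathrm{ov}}^*\|\rangle\geq 1/(\sqrt{2}\,\|W_{\mathrm{ov}}^*\|)$, so that $\langle W_{\mathrm{ov}}^{(t)},W_{\mathrm{ov}}^*/\|W_{\mathrm{ov}}^*\|\rangle\geq t\eta_0/(\sqrt{2}\,\|W_{\mathrm{ov}}^*\|)$, and a fortiori $\|W_{\mathrm{ov}}^{(t)}\|\geq t\eta_0/(2\|W_{\mathrm{ov}}^*\|)$.

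The exponential loss bound will then follow by a margin-amplification argument on top of the linear growth of $\langle W_{\mathrm{ov}}^{(t)},W_{\mathrm{ov}}^*\rangle$. Decomposing $W_{\mathrm{ov}}^{(t)}=\alpha_t W_{\mathrm{ov}}^*+R_t$ with $R_t\perp W_{\mathrm{ov}}^*$, every training margin is at least $\alpha_t-\sqrt{2}\|R_t\|$; once this lower bound grows linearly in $t$, the softmax outputs concentrate on the correct token geometrically and $\Lc_0(W_{\mathrm{ov}}^{(t)})=O(\exp(-\eta_0 t/(4\|W_{\mathrm{ov}}^*\|)))$ drops out. The cosine bound is then obtained by feeding the exponentially small loss back into the orthogonal-component recursion $\|R_{t+1}\|^2-\|R_t\|^2\leq \eta_0^2(1-\beta_t^2)$, where $\beta_t:=-\langle \nabla\Lc_0(W_{\mathrm{ov}}^{(t)})/\|\nabla\Lc_0(W_{\mathrm{ov}}^{(t)})\|,\,W_{\mathrm{ov}}^*/\|W_{\mathrm{ov}}^*\|\rangle$; a refined bound $\beta_t\geq 1/\|W_{\mathrm{ov}}^*\|-O(S(W_{\mathrm{ov}}^{(t)}))$ once the softmax is concentrated, combined with the exponential decay of $S(W_{\mathrm{ov}}^{(t)})$, yields $\sum_{s=1}^t(1-\beta_s^2)=O(\log t)$, and hence the advertised $\log t/t$ cosine rate.

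The main obstacle is this last coupling. The simple estimates used above only give a constant cosine lower bound $1/(\sqrt{2}\,\|W_{\mathrm{ov}}^*\|)$, not $1-O(\|W_{\mathrm{ov}}^*\|^3\log t/t)$. Closing the gap requires sharpening $\beta_t$ from the worst-case $\Theta(1/\|W_{\mathrm{ov}}^*\|)$ to $(1-o(1))/\|W_{\mathrm{ov}}^*\|$ once the loss is exponentially small, which amounts to showing that the support of the softmax gradient identifies the KKT dual of the hard-margin program up to $O(\Lc_0)$. This is precisely where \Cref{assm:orthornomal} (keeping the Gram structure of $\nabla\Lc_0$ diagonal so that the contribution of each token decouples) and the softmax/max-margin geometry (the gradient concentrating on ``support'' tokens whose rank-one contributions are nearly parallel to $W_{\mathrm{ov}}^*$) both play essential roles.
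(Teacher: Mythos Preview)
Your norm bounds are correct and essentially match the paper's Lemma~1 (your $\sqrt{2}$ for $\|\nabla\Lc_0\|_F\leq\sqrt{2}\,S(W)$ is even a bit sharper than the paper's $2$). The per-step correlation lower bound $\beta_t:=-\langle\nabla\Lc_0/\|\nabla\Lc_0\|,\,W_{\mathrm{ov}}^*/\|W_{\mathrm{ov}}^*\|\rangle\geq 1/(\sqrt{2}\,\|W_{\mathrm{ov}}^*\|)$ is also the right starting point.

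The genuine gap is in the cosine/loss coupling. Your recursion $\|R_{t+1}\|^2-\|R_t\|^2\leq\eta_0^2(1-\beta_t^2)$ silently drops the cross term $2\eta_0\langle R_t,\,g_t^\perp\rangle$, where $g_t^\perp$ is the component of the normalized negative gradient orthogonal to $W_{\mathrm{ov}}^*$; there is no reason for this term to be nonpositive, and convexity alone does not force it. Without control of this term you cannot bootstrap $\|R_t\|=o(t)$, so the margin lower bound $\alpha_t-\sqrt{2}\|R_t\|$ is not yet shown to grow, and the exponential loss decay becomes circular. Your final paragraph correctly senses that something more is needed, but the proposed remedy (sharpening $\beta_t\to 1/\|W_{\mathrm{ov}}^*\|$ via KKT/support structure) is itself downstream of the directional convergence you are trying to prove: the gradient only concentrates on support constraints \emph{after} you know the iterates are near the max-margin ray.

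The paper avoids the orthogonal decomposition altogether with a direct convexity comparison (its Lemma~2). If $W_{\mathrm{ov}}^{(t)}/\|W_{\mathrm{ov}}^{(t)}\|\neq W_{\mathrm{ov}}^*/\|W_{\mathrm{ov}}^*\|$, the optimality of $W_{\mathrm{ov}}^*$ forces some margin to be violated at scale $\|W_{\mathrm{ov}}^{(t)}\|/\|W_{\mathrm{ov}}^*\|$, hence $\Lc_0(W_{\mathrm{ov}}^{(t)})>\tfrac12\exp(-\|W_{\mathrm{ov}}^{(t)}\|/\|W_{\mathrm{ov}}^*\|)$; meanwhile $W':=\big(\|W_{\mathrm{ov}}^{(t)}\|/\|W_{\mathrm{ov}}^*\|+2\log(2|\Vc|)\big)W_{\mathrm{ov}}^*$ satisfies the opposite inequality. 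Convexity then gives $\langle\nabla\Lc_0(W_{\mathrm{ov}}^{(t)}),\,W_{\mathrm{ov}}^{(t)}-W'\rangle>0$, i.e.
\[
\Big\langle \tfrac{W_{\mathrm{ov}}^{(t)}}{\|W_{\mathrm{ov}}^{(t)}\|},\,\nabla\Lc_0(W_{\mathrm{ov}}^{(t)})\Big\rangle
\;\geq\;
\Big(1+\tfrac{2\|W_{\mathrm{ov}}^*\|\log(2|\Vc|)}{\|W_{\mathrm{ov}}^{(t)}\|}\Big)
\Big\langle \tfrac{W_{\mathrm{ov}}^*}{\|W_{\mathrm{ov}}^*\|},\,\nabla\Lc_0(W_{\mathrm{ov}}^{(t)})\Big\rangle.
\]
Feeding this into the NGD step and telescoping compares $\langle W_{\mathrm{ov}}^{(t)},\,W_{\mathrm{ov}}^*/\|W_{\mathrm{ov}}^*\|\rangle$ directly with $\|W_{\mathrm{ov}}^{(t)}\|$, up to a defect $\sum_{t'}\eta_0\alpha_{t'}$ with $\alpha_{t'}\sim 1/\|W_{\mathrm{ov}}^{(t')}\|\sim 1/t'$, yielding the $1-O(\log t/t)$ cosine bound with no tracking of $R_t$ and no refinement of $\beta_t$. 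The loss bound then follows \emph{from} the cosine bound and the linear norm growth (via the resulting margin lower bound), not the other way around.
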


\Cref{prop: V converge} states that during the training stage 1, the feed-forward layer $W_{\mathrm{ov}}^{(t)}$ converges in direction to $W_{\mathrm{ov}}^*/\|W_{\mathrm{ov}}^*\|$ at a rate of $O(\log t/t)$, which classifies the next token from all other tokens. 
In addition, since the norm of $W_{\mathrm{ov}}^{(t)}$ increases linearly, the loss $\Lc_0(W_{\mathrm{ov}}^{(t)})$ converges linearly to zero, i.e., $\Lc_0(W_{\mathrm{ov}}^{(t)}) = O(\exp(-C_0 t))$ for some constant $C_0$.

\subsection{Convergence of Training $W_{\mathrm{kq}}$}
Recall that after the training stage 1 with $T$ steps, we obtain a trained feed-forward layer $W_{\mathrm{ov}}^{(T)}$. Then, we fix $W_{\mathrm{ov}}^{(T)}$ and 
use normalized gradient descent to train $W_{\mathrm{kq}}$. To characterize the training dynamics of the key-query matrix $W_{\mathrm{kq}}$, we note that each query-dependent partial order also defines a hard-margin problem. Let $l(n)\subset\{1,\ldots,L^{(n)}\}$ be the set of indices of the optimal tokens of $X^{(n)}$. Recall that $x_\ell$ is optimal if there is no $x_{\ell'}$ such that $x_{\ell'} >_{X_{-1}^{(n)}} x_{\ell} $ and $\mathrm{In}(x_\ell) = \mathrm{In}(X^{(n)})$. That is, $W_{\mathrm{kq}}X_{-1}^{(n)}$ should correctly classify optimal tokens $x_{\ell}$ and non-optimal tokens $x_{\ell'}$. This is formalized in the following problem:
\begin{align}
        W_{\mathrm{kq}}^* &= \arg\min \|W\|,\quad \text{s.t.} \quad ( x_{\ell_*}^{(n)} - x_{\ell}^{(n)}) W X_{-1}^{(n)} \geq 1, \quad \forall \ell_*\in l{(n)}, \ell\notin l{(n)}, \forall n. \label{eqn: opt KQ} 
\end{align}
We will show that the loss function in \Cref{eqn: loss} given the well trained $W_{\mathrm{ov}}^{(T)}$ will train $W_{\mathrm{kq}}$ towards the max-margin solution $W_{\mathrm{kq}}^*$ in direction for classifying between the optimal and non-optimal token. 
We further make the following technical assumption.
\begin{assumption}\label{assm: opt number large}
    For any sample $X^{(n)}$, the number of optimal tokens is not less than the number of non-optimal tokens. Formally, for any non-optimal token $x$ in $X^{(n)}$, we have
    \( |l(n)| \geq \sum_{\ell}\mathbbm{1}\{x_\ell = x\}. \)
\end{assumption}
\Cref{assm: opt number large} is consistent with practical and empirical observations, where optimal tokens often demonstrate higher relevance, making them more frequent in subsequent outcomes. 

We now present the convergence result for the training of the key-query matrix in stage 2.
\begin{theorem}\label{thm: IB rate}
    Let Assumptions \ref{assm: realizable dataset}
    -\ref{assm: opt number large} hold. Let $W_{\mathrm{kq}}^*$ be the solution of \Cref{eqn: opt KQ}. Let $\eta< O(1)$ and $W_{\mathrm{kq}}^{(t)}$ be updated by \Cref{alg}. Then, for any $t\geq 2$, we have that $\frac{t \eta }{2\|W_{\mathrm{kq}}^*\|}  \leq \|W_{\mathrm{kq}}^{(t)}\| \leq t\eta $. In addition, the following inequality holds.
        \begin{align*}
        \left\langle \frac{W_{\mathrm{kq}}^{(t)}}{ \|W_{\mathrm{kq}}^{(t)}\| }, \frac{W_{\mathrm{kq}}^*}{\|W_{\mathrm{kq}}^*\|} \right\rangle \geq 1- \frac{ 54 NL_{\max}^4\|W_{\mathrm{kq}}^*\|^4 \log^2 t  }{ t\eta }.
    \end{align*}
\end{theorem}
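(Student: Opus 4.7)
The plan is to adapt the normalized-gradient-descent analysis of Proposition 1 to the non-convex Stage-2 loss, exploiting the rank-one structure of the attention gradient together with the nearly optimal $W_{\mathrm{ov}}^{(T)}$ produced in Stage 1. Because $W_{\mathrm{kq}}$ enters $\Lc$ only through the inner softmax $\varphi^{(n,t)} := \phi((X^{(n)})^\top W_{\mathrm{kq}}^{(t)} X_{-1}^{(n)})$, the gradient decomposes as a sum over samples of rank-one terms $x_\ell^{(n)} (X_{-1}^{(n)})^\top$ whose scalar coefficients are the products of $\varphi_\ell^{(n,t)}$ and per-token logit gaps measured through $W_{\mathrm{ov}}^{(T)}$. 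Under Assumption~\ref{assm:orthornomal}, Proposition~\ref{prop: V converge} forces $W_{\mathrm{ov}}^{(T)} x_\ell^{(n)}$ to be sharply concentrated on the coordinate $\mathrm{In}(\mathrm{n}(x_\ell^{(n)}))$, so these coefficients are negative for optimal indices $\ell\in l(n)$ and non-negative for the rest, up to an $\varepsilon_T$ residual of order $\exp(-C_0 T)$.

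The core step is a gradient--margin correlation lemma of the form
\[\left\langle -\nabla_{W_{\mathrm{kq}}}\Lc(\theta^{(t)}),\, \tfrac{W_{\mathrm{kq}}^*}{\|W_{\mathrm{kq}}^*\|}\right\rangle \,\ge\, \frac{1}{\|W_{\mathrm{kq}}^*\|}\sum_n \pi^{(n)} g_n^{(t)} \sum_{\ell\notin l(n)} \varphi_\ell^{(n,t)} \,-\, \varepsilon_T,\]
where $g_n^{(t)}\in[0,1]$ is the per-sample softmax gap at the target token. I would prove it by pairing each non-optimal occurrence $\ell$ in $X^{(n)}$ with a distinct optimal index $\ell_*\in l(n)$---Assumption~\ref{assm: opt number large} is exactly what makes this pairing feasible---and applying the Stage-2 margin constraint $(x_{\ell_*}^{(n)} - x_\ell^{(n)})^\top W_{\mathrm{kq}}^* X_{-1}^{(n)}\ge 1$ to each pair. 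The same bookkeeping also furnishes an upper bound $\|\nabla_{W_{\mathrm{kq}}}\Lc(\theta^{(t)})\|\le C L_{\max}^2 \sum_n \pi^{(n)} g_n^{(t)} \sum_{\ell\notin l(n)} \varphi_\ell^{(n,t)}$, so the normalized step contributes at least $\eta/(C L_{\max}^2\|W_{\mathrm{kq}}^*\|)$ of projection onto $W_{\mathrm{kq}}^*/\|W_{\mathrm{kq}}^*\|$. Summing over $t$ yields $\langle W_{\mathrm{kq}}^{(t)},\, W_{\mathrm{kq}}^*/\|W_{\mathrm{kq}}^*\|\rangle \ge t\eta/(2\|W_{\mathrm{kq}}^*\|)$ and hence the claimed norm lower bound; the upper bound $\|W_{\mathrm{kq}}^{(t)}\|\le t\eta$ is the triangle inequality.

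For the directional statement, I would use
\[1 - \left\langle \tfrac{W_{\mathrm{kq}}^{(t)}}{\|W_{\mathrm{kq}}^{(t)}\|},\, \tfrac{W_{\mathrm{kq}}^*}{\|W_{\mathrm{kq}}^*\|}\right\rangle \,\le\, \frac{\|W_{\mathrm{kq}}^{(t)}\| - \langle W_{\mathrm{kq}}^{(t)},\, W_{\mathrm{kq}}^*/\|W_{\mathrm{kq}}^*\|\rangle}{\|W_{\mathrm{kq}}^{(t)}\|}\]
and bound the numerator by accumulating the per-step slack in the correlation lemma. Each step loses at most $\sum_{\ell\notin l(n)}\varphi_\ell^{(n,s)}$; once $\|W_{\mathrm{kq}}^{(s)}\|$ exceeds a constant multiple of $\|W_{\mathrm{kq}}^*\|$, the margin constraints combined with Assumption~\ref{assm:orthornomal} force $\varphi_\ell^{(n,s)} = O(\|W_{\mathrm{kq}}^*\|^2/s)$ for non-optimal $\ell$, so $\sum_{s\le t}1/s = O(\log t)$ gives one logarithmic factor. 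The second $\log t$ factor comes from a bootstrap: the $1/s$ attention decay is justified only once alignment reaches $\Omega(1)$, and iterating the argument once upgrades that rough alignment to the sharper rate stated in the theorem.

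The hard part will be establishing the correlation lemma uniformly in $t$ in the presence of two complications absent from Proposition~\ref{prop: V converge}: softmax-induced non-convexity of $\Lc$ in $W_{\mathrm{kq}}$, and the existence of \emph{non-comparable} tokens, which contribute no margin constraint but carry positive attention mass. The first requires showing that per-step correlation is strictly positive for \emph{every} iterate, not merely on average or asymptotically, which is why the rank-one decomposition and the optimal-vs-non-optimal pairing must be done index-wise. The second is handled by verifying that attention mass on non-comparable tokens enters symmetrically into both the loss gradient and the max-margin dual, so it cancels out of the correlation. A subsidiary but essential calibration is choosing the Stage-1 horizon $T$ large enough that $\varepsilon_T$ never dominates the main term across all $t\le T_1$, which uses the exponential loss decay from Proposition~\ref{prop: V converge} in an essential way.
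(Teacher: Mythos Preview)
Your norm-growth argument (the correlation lemma plus the gradient-norm upper bound) is essentially the paper's Step~2 and is fine in outline. But the directional-convergence part has a real gap, and it is precisely the step you flag as the ``bootstrap.''

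You want to control the per-step slack $\|W_{\mathrm{kq}}^{(s+1)}\|-\|W_{\mathrm{kq}}^{(s)}\| - \eta\bigl\langle -\nabla/\|\nabla\|,\,W_{\mathrm{kq}}^*/\|W_{\mathrm{kq}}^*\|\bigr\rangle$ by bounding $\sum_{\ell\notin l(n)}\varphi_\ell^{(n,s)}$, and you claim that once alignment is $\Omega(1)$ this quantity is $O(1/s)$. This is circular: to get $(x_{\ell_*}^{(n)}-x_\ell^{(n)})^\top W_{\mathrm{kq}}^{(s)} X_{-1}^{(n)}\ge cs$ you need cosine alignment within $O(1/\|W_{\mathrm{kq}}^*\|^2)$ of $1$, not merely $\Omega(1)$. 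The norm bound only gives cosine $\ge 1/(2\|W_{\mathrm{kq}}^*\|)$, which is nowhere near that threshold, so the bootstrap never starts. The paper avoids this circularity entirely: its key Step~3 lemma proves
\[
\bigl\langle \nabla_{W_{\mathrm{kq}}}\Lc(\theta^{(t)}),\,W_{\mathrm{kq}}^{(t)}\bigr\rangle \;\ge\; (1+\alpha_t)\,\bigl\langle \nabla_{W_{\mathrm{kq}}}\Lc(\theta^{(t)}),\,W_{\mathrm{kq}}^{*}\bigr\rangle\,\frac{\|W_{\mathrm{kq}}^{(t)}\|}{\|W_{\mathrm{kq}}^*\|},
\qquad \alpha_t=O\!\Bigl(\tfrac{\log\|W_{\mathrm{kq}}^{(t)}\|}{\|W_{\mathrm{kq}}^{(t)}\|}\Bigr),
\]
for \emph{every} iterate, with $\alpha_t$ depending only on $\|W_{\mathrm{kq}}^{(t)}\|$ and not on alignment. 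The proof is a case analysis per sample: either $\varphi_-^{(n,t)}\ge\exp\bigl(-(1+\beta_0/2)\|W_{\mathrm{kq}}^{(t)}\|/\|W_{\mathrm{kq}}^*\|\bigr)$, in which case the margin constraint is violated and the projection onto $W_{\mathrm{kq}}^{(t)}$ can be compared termwise to that onto $W_{\mathrm{kq}}^*$; or $\varphi_-^{(n,t)}$ is below this threshold, in which case the sample's contribution is already exponentially small and can be absorbed into a Type-1 sample's $A^{(n_0,*)}$ term. This delivers $\alpha_t=O(\log t/t)$ directly from $\|W_{\mathrm{kq}}^{(t)}\|=\Theta(t)$, and summing gives the $\log^2 t$ numerator.

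Two smaller points. First, you invoke Assumption~\ref{assm: opt number large} only to pair indices in the correlation lemma, but the paper uses it in a separate inductive argument (its Step~1) to show $\varphi_+^{(n,t)}\ge 1/L_{\max}$ for all $t$; without that monotonicity the ratio in your correlation lemma (numerator versus $\|\nabla\Lc\|$) is not uniformly bounded below. Second, non-comparable tokens under $>_{x^q}$ by definition never occur in any training sentence with query $x^q$, so they contribute nothing to $\nabla_{W_{\mathrm{kq}}}\Lc$ along that query column and require no cancellation argument here; they only matter for the generalization result.
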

\Cref{thm: IB rate} states that the key-query matrix $W_{\mathrm{kq}}$ converges in direction to the max-margin solution $W_{\mathrm{kq}}^*/\|W_{\mathrm{kq}}^*\|$ at a convergence rate of $O(\log^2 t/t)$. We further show that the norm of $W_{\mathrm{kq}}$ also grows linearly in $t$, i.e., $\|W_{\mathrm{kq}}\| = \Omega(t)$. Combining these results, we have the following theorem on the convergence of the loss function and the training accuracy.
\begin{theorem}[Loss Convergence]\label{thm:loss}
    For any training sentence $X^{(n)} = [x_1^{(n)},\ldots,x_L^{(n)}]$, let $\varphi_\ell^{(n,t)}\propto\exp(x_\ell^{(n)} W_{\mathrm{kq}}^{(t)} x_{L}^{(n)})$ be the attention weight. Under the conditions in \Cref{thm: IB rate}, there is an absolute constant $c_0$ such that when $T\geq c_0\|W_{\mathrm{ov}}^*\|^5\log(|\Vc|)\log T/\eta_0$ and $t \geq c_0 N L_{\max}^4\|W_{\mathrm{kq}}^*\|^6 \log^2 t/\eta$, the optimal token weight satisfies
    \( \min_n\sum_{\ell_*\in l(n)}\varphi_{\ell_*}^{(n,t)} \geq ({1 + L_{\max}\exp\left(-t C_1)\right)})^{-1}.\)
    In addition, the loss function $\Lc$ converges linearly\footnote{For fixed $W_{\mathrm{ov}}^{(T)}$, the minimum loss value $\Lc^*=|\Vc|\exp(-TC_0)$. Then \Cref{eq:lossconv2} implies $\Lc(\theta^{(t)})-\Lc^* \leq \Lc^* TC_0 O(e^{-C_1t})$ for sufficiently large $t$, which further implies the linear convergence in $t$.} to its minimal value:
    \begin{align}\label{eq:lossconv2}
        \Lc(\theta^{(t)}) =\Lc(W_{\mathrm{ov}}^{(T)}, W_{\mathrm{kq}}^{(t)}) \leq  |\Vc|\exp\left( - TC_0\left(1- \frac{2L_{\max}}{ L_{\max} + \exp(C_1 t) } \right) \right),
    \end{align}
where $C_0 =\frac{\eta_0}{4\|W_{\mathrm{ov}}^*\|^2} $ and $ C_1 = \eta/(4L_{\max}\|W_{\mathrm{kq}}^*\|^2)$.
\end{theorem}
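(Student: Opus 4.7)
The plan is to translate the directional/norm convergences from Proposition~\ref{prop: V converge} and Theorem~\ref{thm: IB rate} into quantitative margin lower bounds on $\bar{\mathrm{T}}_{\theta^{(t)}}(X^{(n)})$, and then substitute these into the log-loss expression~\eqref{eqn: loss}. I would work sample-wise and take a maximum over $n$ at the end.

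First I would prove the attention-weight claim. Fix $n$, an optimal $\ell_*\in l(n)$, and a non-optimal $\ell\notin l(n)$; the max-margin constraint~\eqref{eqn: opt KQ} gives $(x_{\ell_*}^{(n)}-x_\ell^{(n)})^\top W_{\mathrm{kq}}^*X_{-1}^{(n)}\ge 1$. Writing $W_{\mathrm{kq}}^{(t)}/\|W_{\mathrm{kq}}^{(t)}\|=W_{\mathrm{kq}}^*/\|W_{\mathrm{kq}}^*\|+E_t$, Theorem~\ref{thm: IB rate} gives $\|E_t\|\le\sqrt{2\delta_t}$ with $\delta_t=O(NL_{\max}^4\|W_{\mathrm{kq}}^*\|^4\log^2 t/(t\eta))$, together with $\|W_{\mathrm{kq}}^{(t)}\|\ge t\eta/(2\|W_{\mathrm{kq}}^*\|)$. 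Using $\|x_{\ell_*}^{(n)}-x_\ell^{(n)}\|\le\sqrt{2}$ and $\|X_{-1}^{(n)}\|=1$ from Assumption~\ref{assm:orthornomal}, the stated threshold on $t$ is exactly what makes the alignment term small enough that $(x_{\ell_*}^{(n)}-x_\ell^{(n)})^\top W_{\mathrm{kq}}^{(t)}X_{-1}^{(n)}\ge C_1 t$ with $C_1=\eta/(4L_{\max}\|W_{\mathrm{kq}}^*\|^2)$ (the extra $L_{\max}$ is the slack I deliberately keep for the summation below). Exponentiating yields $\varphi^{(n,t)}_\ell\le\exp(-C_1 t)\,\varphi^{(n,t)}_{\ell_*}$; summing over the at most $L_{\max}$ non-optimal indices and using $\sum_{\ell_*}\varphi_{\ell_*}^{(n,t)}+\sum_{\ell\notin l(n)}\varphi_\ell^{(n,t)}=1$ rearranges into $\sum_{\ell_*\in l(n)}\varphi^{(n,t)}_{\ell_*}\ge (1+L_{\max}\exp(-C_1 t))^{-1}$ uniformly in $n$.

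I would then derive the loss bound. Let $v^*=\mathrm{In}(X^{(n)})$, $\bar{\mathrm{T}}:=\sum_\ell\varphi^{(n,t)}_\ell W_{\mathrm{ov}}^{(T)}x_\ell^{(n)}$, and $a_n:=\sum_{\ell_*\in l(n)}\varphi^{(n,t)}_{\ell_*}$. Using $\log(1+u)\le u$,
\[ -\log\phi(\bar{\mathrm{T}})_{v^*}\le\sum_{v\ne v^*}\exp\bigl(\bar{\mathrm{T}}_v-\bar{\mathrm{T}}_{v^*}\bigr). \]
Repeating the Step~1 argument with $W_{\mathrm{ov}}$ in place of $W_{\mathrm{kq}}$, i.e.\ combining~\eqref{eqn: opt V} with Proposition~\ref{prop: V converge} and the threshold on $T$, I obtain $(e_v-e_{v^*})^\top W_{\mathrm{ov}}^{(T)}x_{\ell_*}^{(n)}\le -TC_0$ for each optimal $\ell_*$ and $(e_v-e_{v^*})^\top W_{\mathrm{ov}}^{(T)}x_\ell^{(n)}\le +TC_0$ for each non-optimal $\ell$ in the worst case. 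Weighting by $\varphi^{(n,t)}_\ell$ gives $\bar{\mathrm{T}}_v-\bar{\mathrm{T}}_{v^*}\le -(2a_n-1)\,TC_0$. The elementary identity
\[ 2a_n-1\ge 1-2(1-a_n)\ge 1-\frac{2L_{\max}}{L_{\max}+\exp(C_1 t)} \]
follows directly from Step~1; substituting, summing over $v\ne v^*$ to pick up the factor $|\Vc|-1\le|\Vc|$, and then averaging over $n$ with weights $\pi^{(n)}$ (which sum to one) produces the advertised bound~\eqref{eq:lossconv2}.

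The main obstacle is the simultaneous management of two alignment errors. Both $\|E_t\|$ (for attention) and the analogous error for $W_{\mathrm{ov}}^{(T)}$ decay only as $O(\log t/\sqrt{t})$ and $O(\sqrt{\log T/T})$ after square-rooting, but they multiply linearly-growing norms $\|W_{\mathrm{kq}}^{(t)}\|=\Theta(t\eta)$ and $\|W_{\mathrm{ov}}^{(T)}\|=\Theta(T\eta_0)$ and then sit inside exponents, so a crude bound would wipe out the linear convergence rate entirely. The thresholds on $T$ and $t$ are calibrated exactly so that each alignment term consumes at most a constant fraction (essentially one-half) of the ideal margin, leaving the constants $C_0$ and $C_1$ that appear in the statement. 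A secondary subtlety is verifying that the $(2a_n-1)$ cancellation is robust: the worst-case $+TC_0$ bound for non-optimal tokens must be attained for only one adversarial $v$ at a time, and here Assumption~\ref{assm: opt number large} enters indirectly by guaranteeing that the max-margin problems~\eqref{eqn: opt V}--\eqref{eqn: opt KQ} used above are simultaneously feasible with the values assumed.
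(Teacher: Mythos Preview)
Your overall strategy is correct and matches the paper's: convert the directional convergence and norm growth of Proposition~\ref{prop: V converge} and Theorem~\ref{thm: IB rate} into per-sample margin bounds, then insert these into the log-sum-exp form of the loss to obtain~\eqref{eq:lossconv2}. The attention-weight computation in your Step~1 is essentially identical to the paper's.

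There is, however, a genuine gap in your Step~2. You claim that ``repeating the Step~1 argument'' yields $(e_v-e_{v^*})^\top W_{\mathrm{ov}}^{(T)}x_\ell^{(n)}\le +TC_0$ for every non-optimal $\ell$. But the max-margin constraint~\eqref{eqn: opt V} only gives the \emph{lower} bound $(e_{\mathrm{In}(x)}-e_v)^\top W_{\mathrm{ov}}^* x\ge 1$; it says nothing about how \emph{large} $(e_{\mathrm{In}(x_\ell)}-e_{v^*})^\top W_{\mathrm{ov}}^* x_\ell$ can be, nor does it control $(e_v-e_{v^*})^\top W_{\mathrm{ov}}^* x_\ell$ when neither $v$ nor $v^*$ equals $\mathrm{In}(x_\ell)$. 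Directional convergence alone therefore cannot deliver the stated upper bound, and in fact a naive use of $\|W_{\mathrm{ov}}^{(T)}\|\le T\eta_0$ would give $T\eta_0/\|W_{\mathrm{ov}}^*\|$ rather than $TC_0=T\eta_0/(4\|W_{\mathrm{ov}}^*\|^2)$ for the worst-case non-optimal contribution. The paper closes this gap via an exact structural fact proved by induction from the zero initialization: $(e_i-e_{i'})^\top W_{\mathrm{ov}}^{(t)}x=0$ for all $i,i'\ne\mathrm{In}(x)$ and all $t$ (Lemma~\ref{lemma: V init zero} and Corollary~\ref{coro:V nice}). This forces the non-optimal contribution to be exactly zero whenever $v\ne\mathrm{In}(x_\ell)$, and bounded by the same $\Delta$ (up to the paper's stated simplifying assumption) when $v=\mathrm{In}(x_\ell)$, which is what produces the clean $(2a_n-1)$ factor. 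You should invoke this lemma rather than the max-margin constraint here.

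Two minor points. Your ``the extra $L_{\max}$ is the slack I deliberately keep'' is not where the $L_{\max}$ in $C_1$ actually originates; in the paper it comes from the norm lower bound $\|W_{\mathrm{kq}}^{(t)}\|\ge t\eta/(2L_{\max}\|W_{\mathrm{kq}}^*\|)$ in Lemma~\ref{lemma: GD of KQ grows}. And your final remark that Assumption~\ref{assm: opt number large} enters to guarantee feasibility of~\eqref{eqn: opt V}--\eqref{eqn: opt KQ} is misplaced: those problems are feasible under Assumptions~\ref{assm: realizable dataset}--\ref{assm:orthornomal} alone; Assumption~\ref{assm: opt number large} is used upstream in the proof of Theorem~\ref{thm: IB rate} (specifically in Lemma~\ref{lemma: opt weight lower bound}) and plays no direct role in the present argument.
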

\Cref{thm:loss} shows that the training loss converges to its minimum value at a linear convergence rate. 
Furtherm, for $T=\Omega(\log(1/\epsilon_0))$ $ t= \Omega(\log(1/\epsilon))$, the optimal token weight is given by $1/(1+\epsilon)$ for any $\epsilon>0$, which is close to 1. This implies that the trained transformer attends to the optimal token and thus outputs the correct next token $\mathrm{n}(x_{\ell_*}^{(n)})$ with probability $1-O(\epsilon_0)$.

\subsection{Proof Sketch of \Cref{thm: IB rate}}
The proof consists of the following three main steps. 
The key proof step lies in carefully analyzing the projection of gradient $\nabla_{W_{\mathrm{kq}}} \Lc(\theta^{(t)})$ onto the token-query outer product $x_{\ell}^{(n)}(X_{-1}^{(n)})^\top$, max-margin attention weight matrix $W_{\mathrm{kq}}^*$, and the trained attention weight matrix $W_{\mathrm{kq}}^{(t)}$.

{\bf Step 1 (\Cref{lemma: opt weight lower bound}).} 
By analyzing  $\left\langle \nabla_{W_{\mathrm{kq}}} \Lc(\theta^{(t)}), x_{\ell}^{(n)}(X_{-1}^{(n)})^\top \right\rangle$, we characterize the dynamics of attention weights. Using mathematical induction, we show that the lower bound of optimal token weight is $1/L_{\max}$.



\if{0}
\begin{align*}
    &\left\langle \nabla_{W_{\mathrm{kq}}} \Lc(\theta), W_{\mathrm{kq}}' \right\rangle  \\
    &\quad \approx \sum_n\pi^{(n)}    ( [\mathrm{T}_{\theta}^{(n)} ]_{ \mathrm{In}( X^{(n)}) } -1 ) \sum_{\ell_*\in l(n)} \varphi_{\ell_*}^{(n,\theta)} \left(x_{\ell_*}^{(n)} - \sum_{\ell'} \varphi_{\ell'}^{(n,\theta)} x_{\ell'}^{(n)} \right)^\top W_{\mathrm{kq}}' X_{-1}^{(n)} \\
    &\quad\quad  +\sum_n\pi^{(n)} \sum_{\ell \notin l(n) }  [\mathrm{T}_{\theta}^{(n)}]_{\mathrm{In}(x_\ell^{(n)}) } \varphi_\ell^{(n,\theta)} \left(x_\ell^{(n)} - \sum_{\ell'}\varphi_{\ell'}^{(n,\theta)} x_{\ell'}^{(n)} \right)^\top W_{\mathrm{kq}}' X_{-1}^{(n)}
\end{align*}
\fi

\if{(0)}
Note that the optimal weight at iteration $t$ can be written as 
\begin{align*}
    \varphi_{\ell_*}^{(n,t)} = \frac{ 1 }{ \sum_{\ell} \exp(x_\ell^{(n)} - x_{\ell_*}^{(n)})^\top W_{\mathrm{kq}}^{(t)} X_{-1}^{(n)} }.
\end{align*}
Due to the updating rule, it suffices to prove that \((x_\ell^{(n)} - x_{\ell_*}^{(n)})^\top \nabla_{W_{\mathrm{kq}}}\Lc(\theta^{(t)}) X_{-1}^{(n)} \geq 0\). In fact, we can prove a stronger inequality as follows:
\begin{align*}
    (x_\ell^{(n)})^\top \nabla_{W_{\mathrm{kq}}}\Lc(\theta^{(t)}) X_{-1}^{(n)} \geq 0\geq ( x_{\ell_*}^{(n)})^\top \nabla_{W_{\mathrm{kq}}}\Lc(\theta^{(t)}) X_{-1}^{(n)}.
\end{align*}
\fi


{\bf Step 2 (\Cref{lemma: GD of KQ grows}).} Then, we show that the cosine similarity between the negative gradient and $W_{\mathrm{kq}}^*$ is strictly larger than the minimum optimal token weight. Utilizing step 1, due to the NGD update, the norm of the key-query matrix $W_{\mathrm{kq}}^{(t)}$ can be shown to grow linearly.

\if{(0)}
Combining with the result in step 1 and the updaing rule in \Cref{eqn: KQ update rule}, we have that $W_{\mathrm{kq}}^{(t)}$ at least constantly grows along the direction $W_{\mathrm{kq}}^*/\|W_{\mathrm{kq}}^*\|$. Mathematically,
\begin{align*}
    \|W_{\mathrm{kq}}^{(t)}\| &\geq \left\langle W_{\mathrm{kq}}^{(t)}, W_{\mathrm{kq}}^* /\|W_{\mathrm{kq}}^*\| \right\rangle  = \sum_{t'=0}^{t-1}\left\langle -\eta\frac{\nabla_{W_{\mathrm{kq}}}\Lc(\theta^{(t)})}{\|\nabla_{W_{\mathrm{kq}}}\Lc(\theta^{(t)})\|}, W_{\mathrm{kq}}^* /\|W_{\mathrm{kq}}^*\| \right\rangle  \geq \Omega(\frac{\eta t}{L_{\max}\|W_{\mathrm{kq}}^*\|}).
\end{align*}
\fi

{\bf Step 3 (\Cref{lemma: GD of KQ align w optimal}).} Finally, we carefully compare the difference between the projections from gradient to the trained attention matrix and max-margin attention matrix. By separately evaluating the impact of the optimal and non-optimal tokens on those projections, we can show the following inequality for some constant $C_0$:
\begin{align*}
        \left\langle \nabla_{W_{\mathrm{kq}}}\Lc(\theta^{(t)}), W_{\mathrm{kq}}^{(t)} 
        \right\rangle 
        \geq \left( 1+\frac{ C_0 \log  \|W_{\mathrm{kq}}^{(t)}\| }{  \|W_{\mathrm{kq}}^{(t)}\| }  \right)
        \left\langle \nabla_{W_{\mathrm{kq}}}\Lc(\theta^{(t)}), W_{\mathrm{kq}}^{*} 
        \right\rangle \frac{\|W_{\mathrm{kq}}^{(t)}\|}{\|W_{\mathrm{kq}}^*\|}.
    \end{align*} 
Utilizing step 2's result that $\|W_{\mathrm{kq}}^{(t)}\|$ grows linearly, the dynamics of the attention layer can be shown to converge in direction to the max-margin solution in~\Cref{eqn: opt KQ}. 

\section{Generalization Ability}\label{sec:gen}
In this section, we prove the generalization ability of the trained transformers. Recall that \Cref{thm: IB rate} shows that $W_{\mathrm{kq}}^{(t)}$ converges to $W_{\mathrm{kq}}^*\|W_{\mathrm{kq}}^{(t)}\|/\|W_{\mathrm{kq}}^*\|$. To characterize the generalization ability, it is desirable to use the property of $W_{\mathrm{kq}}^*$, which is given in the following result. 
\begin{proposition}\label{prop:generalization}
    Under Assumptions \ref{assm: realizable dataset}-\ref{assm:orthornomal}, fix a query token $x^{q}$, let $\mathcal{O}_{x^q}, \mathcal{N}_{x^q},\mathcal{M}_{x^q}\subset\Vc$ be the set of optimal tokens, the set of non-optimal tokens, and the set of non-comparable tokens, under $x^q$-partial order, respectively. 
    Then, the solution $W_{\mathrm{kq}}^*$ of \Cref{eqn: opt KQ}  satisfies $x_0^\top W_{\mathrm{kq}}^* x^q = 0$ for $x_0\in\mathcal{M}_{x^q}$, and
    \begin{align*}
        x_*^\top W_{\mathrm{kq}}^ * x^q = \frac{|\mathcal{N}_{x^q}|}{|\mathcal{O}_{x^q}| + |\mathcal{N}_{x^q}|}, \quad x^\top W_{\mathrm{kq}}^ * x^q = - \frac{|\mathcal{O}_{x^q}|}{|\mathcal{O}_{x^q}| + |\mathcal{N}_{x^q}|},\quad \forall x_*\in\mathcal{O}_{x^q}, x\in\mathcal{N}_{x^q}.
    \end{align*}
\end{proposition}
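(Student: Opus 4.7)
The orthonormality Assumption \ref{assm:orthornomal} lets me decompose any $W \in \mathbb{R}^{d \times d}$ as $W = \sum_{x^q \in \mathcal{V}} (W x^q)(x^q)^\top$, so that $\|W\|^2 = \sum_{x^q} \|W x^q\|^2$. Each constraint in \Cref{eqn: opt KQ} couples $W$ only through the single column $W X_{-1}^{(n)}$, so the quadratic program separates across queries. It therefore suffices to analyze, for each fixed $x^q$, the vector $v := W_{\mathrm{kq}}^* x^q$ minimizing $\|v\|^2$ subject to $(x_{\ell_*}^{(n)} - x_\ell^{(n)})^\top v \geq 1$ over all samples $X^{(n)}$ with $X_{-1}^{(n)} = x^q$ and all $(\ell_*, \ell) \in l(n) \times l(n)^c$.

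Expanding $v = \sum_{y \in \mathcal{V}} \alpha_y\, y$ with $\alpha_y = y^\top v$ (orthonormality), the per-query constraints become $\alpha_{x_*} - \alpha_x \geq 1$ for each pair $(x_*, x)$ of optimal/non-optimal tokens co-occurring in some sample whose last token is $x^q$. Any coordinate $\alpha_y$ that never appears in a constraint can be set to zero with strict reduction in norm. By Definition \ref{def:partial order} combined with the no-confused-tokens clause of Assumption \ref{assm: realizable dataset}, a token $x_0 \in \mathcal{M}_{x^q}$ cannot appear in a sample with query $x^q$ alongside a comparable token, for otherwise it would be forced into $\mathcal{O}_{x^q}$ or $\mathcal{N}_{x^q}$. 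Hence $\alpha_{x_0} = 0$, which is the first claim.

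For the remaining coordinates I would check that the candidate assignment $\alpha_{x_*} = |\mathcal{N}_{x^q}|/(|\mathcal{O}_{x^q}| + |\mathcal{N}_{x^q}|)$ and $\alpha_x = -|\mathcal{O}_{x^q}|/(|\mathcal{O}_{x^q}| + |\mathcal{N}_{x^q}|)$ satisfies KKT. Primal feasibility is immediate since every active constraint is tight and equals $1$. For optimality I would exhibit nonnegative dual multipliers $\mu_{x_*, x}$ supported on the co-occurring pairs with $\sum_x \mu_{x_*, x} = \alpha_{x_*}$ and $\sum_{x_*} \mu_{x_*, x} = -\alpha_x$; stationarity then recovers the candidate $v$ and complementary slackness holds trivially. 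The norm squared matches the dual value $\mathbf{1}^\top \mu = |\mathcal{O}_{x^q}||\mathcal{N}_{x^q}|/(|\mathcal{O}_{x^q}|+|\mathcal{N}_{x^q}|)$, closing the argument.

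The main obstacle is this dual-certificate construction, because the bipartite co-occurrence graph on $\mathcal{O}_{x^q} \times \mathcal{N}_{x^q}$ need not be complete; if it were, $\mu_{x_*, x} = 1/(|\mathcal{O}_{x^q}|+|\mathcal{N}_{x^q}|)$ would trivially work. My plan to handle the general case is sample-by-sample: each sample $X^{(n)}$ with query $x^q$ contributes a complete bipartite block between its optimal and non-optimal tokens, so I can distribute local multiplier mass (weighted by sample frequency $\pi^{(n)}$ and the appropriate per-block normalization) and then verify that these local contributions aggregate to the prescribed uniform marginals. Equivalently, I will invoke a max-flow/Hall-type feasibility argument on the bipartite graph of active pairs to guarantee the transportation problem has a nonnegative solution with the desired row and column sums.
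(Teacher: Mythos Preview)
Your proposal is essentially the paper's argument: use orthonormality to decouple the problem column-by-column (the paper does this by rotating to the canonical basis via $U$ and minimizing each column of $\tilde W = U W_{\mathrm{kq}} U^\top$ separately, which is exactly your $v = W_{\mathrm{kq}}^* x^q$), set the coordinates of non-comparable tokens to zero because they never enter a constraint, and then solve the residual quadratic program via Lagrangian/KKT.

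The only difference is in how the dual step is handled. The paper writes the dual $L^*(\lambda)$ and simply sets $\nabla_\lambda L^* = 0$, obtaining $w_i - w_j = 1$ for every constrained pair $(i,j)$; from this it asserts that all optimal coordinates coincide and all non-optimal coordinates coincide, reducing to the one-parameter problem $\min_x k(1+x)^2 + m x^2$, whose solution gives the stated values. It does \emph{not} verify that the resulting multipliers are nonnegative, i.e., it does not address the incomplete-bipartite-graph obstacle you correctly flagged. Your transportation/Hall framing is the natural way to certify this, and is a refinement the paper does not carry out; but since the paper's proof simply takes the unconstrained dual stationary point as the maximizer, your plan already matches (and slightly exceeds) the level of rigor in the paper.
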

Recall that non-comparable tokens (see \Cref{sec:dataset}) under a query $x^q$ never appears in any training sentence data with the same query $x^q$. Thus, \Cref{prop:generalization} implies an interesting generalization capability -- each $x^q$-partial order can automatically incorporate more relationships to expand the query-dependent partial orders.
Combining \Cref{prop:generalization} with \Cref{thm: IB rate}, we obtain the following theorem on $W_{\mathrm{kq}}^{(t)}$. 
\begin{theorem}\label{thm:generalization}
    Under the conditions and notations in \Cref{prop:generalization}, let $T=\Omega(\log(1/\epsilon))$, and $t=\Omega(\log(1/\epsilon))$. Then there exists a constant $C_0$ such that
    \begin{align*}
        (x_* - x_0)^\top W_{\mathrm{kq}}^{(t)} x^q \geq C_0 t,\quad (x_0 - x)^\top W_{\mathrm{kq}}^{(t)} x^q \geq C_0 t, \quad \forall  x_*\in\mathcal{O}_{x^q}, x_0\in\mathcal{M}_{x^q}, x\in\mathcal{N}_{x^q}.
    \end{align*}
    Moreover, if the trained transformer takes input $X$ with query $x^q$ that consists of a non-comparable token $x_0$ and non-optimal tokens, then the prediction made by $\mathrm{T}_{\theta^{(t)}}(X)$ is $\mathrm{n}(x_0)$ with high probability. 
\end{theorem}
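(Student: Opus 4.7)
The plan is to combine the explicit values of $W_{\mathrm{kq}}^*$ supplied by \Cref{prop:generalization} with the directional convergence and linear norm growth of $W_{\mathrm{kq}}^{(t)}$ established in \Cref{thm: IB rate}, and then to push the resulting attention-concentration bound through the feed-forward layer $W_{\mathrm{ov}}^{(T)}$ using \Cref{prop: V converge}.

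For the two inequalities, I would first invoke \Cref{prop:generalization} to record that $(x_*-x_0)^\top W_{\mathrm{kq}}^* x^q$ and $(x_0-x)^\top W_{\mathrm{kq}}^* x^q$ equal the strictly positive constants $|\mathcal{N}_{x^q}|/(|\mathcal{O}_{x^q}|+|\mathcal{N}_{x^q}|)$ and $|\mathcal{O}_{x^q}|/(|\mathcal{O}_{x^q}|+|\mathcal{N}_{x^q}|)$, respectively. Set $\alpha_t=\|W_{\mathrm{kq}}^{(t)}\|/\|W_{\mathrm{kq}}^*\|$ and decompose $W_{\mathrm{kq}}^{(t)}=\alpha_t W_{\mathrm{kq}}^*+E_t$. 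The cosine bound $1-\langle W_{\mathrm{kq}}^{(t)}/\|W_{\mathrm{kq}}^{(t)}\|,W_{\mathrm{kq}}^*/\|W_{\mathrm{kq}}^*\|\rangle\leq \delta_t=O(\log^2 t/t)$ from \Cref{thm: IB rate} translates into $\|E_t\|\leq\sqrt{2\delta_t}\,\|W_{\mathrm{kq}}^{(t)}\|=O(\sqrt{t}\log t)$ via the upper bound $\|W_{\mathrm{kq}}^{(t)}\|\leq t\eta$. By \Cref{assm:orthornomal} the tokens are unit vectors, so $|(x_*-x_0)^\top E_t x^q|\leq \|x_*-x_0\|\,\|E_t\|=O(\sqrt{t}\log t)$. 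Meanwhile the main term $\alpha_t (x_*-x_0)^\top W_{\mathrm{kq}}^* x^q\geq \tfrac{t\eta}{2\|W_{\mathrm{kq}}^*\|^2}\cdot \tfrac{|\mathcal{N}_{x^q}|}{|\mathcal{O}_{x^q}|+|\mathcal{N}_{x^q}|}=\Omega(t)$, so for $t$ large enough it dominates the error, yielding $(x_*-x_0)^\top W_{\mathrm{kq}}^{(t)} x^q\geq C_0 t$ for some $C_0>0$; the second inequality is symmetric.

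For the prediction claim, fix an input sentence $X$ whose tokens are the non-comparable $x_0$ together with non-optimal tokens, and whose query is $x^q$. The softmax attention weight on $x_0$ satisfies
\[
\varphi_0 \;\geq\; \frac{1}{1+L_{\max}\,\exp\bigl(-(x_0-x)^\top W_{\mathrm{kq}}^{(t)} x^q\bigr)} \;\geq\; \frac{1}{1+L_{\max}\exp(-C_0 t)},
\]
so $1-\varphi_0=O(L_{\max}e^{-C_0 t})$. Consequently $\bar{\mathrm T}_{\theta^{(t)}}(X)= W_{\mathrm{ov}}^{(T)} x_0+O(e^{-C_0 t})$. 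By \Cref{prop: V converge} and the margin constraints in \Cref{eqn: opt V}, the coordinate $\mathrm{In}(x_0)$ of $W_{\mathrm{ov}}^* x_0$ exceeds every other coordinate by at least $1$, while $\|W_{\mathrm{ov}}^{(T)}\|$ grows linearly in $T$ with directional error only $O(\log T/T)$. For $T=\Omega(\log(1/\epsilon))$ the linear separation dominates this directional error, the outer softmax places mass $1-O(\epsilon)$ on $\mathrm{In}(x_0)=\mathrm{I}(\mathrm{n}(x_0))$, and the prediction becomes $\mathrm{n}(x_0)$.

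The main obstacle is translating the Frobenius-level directional bounds on $W_{\mathrm{kq}}^{(t)}-\alpha_t W_{\mathrm{kq}}^*$ and on $W_{\mathrm{ov}}^{(T)}-\beta_T W_{\mathrm{ov}}^*$ into token-level inner-product bounds, and verifying that the $O(\sqrt{t}\log t)$ attention-side error and the analogous feed-forward error remain strictly dominated by the $\Omega(t)$ and $\Omega(T)$ leading terms once they enter the exponents of both softmaxes. Once this is controlled, the argument reduces to combining the structural identities from \Cref{prop:generalization} with the attention-concentration mechanism already present in the proof of \Cref{thm:loss}.
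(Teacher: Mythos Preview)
Your proposal is correct and follows essentially the same approach as the paper: decompose $W_{\mathrm{kq}}^{(t)}$ into its $W_{\mathrm{kq}}^*$-component plus a Frobenius-small remainder using \Cref{thm: IB rate}, combine with the explicit values from \Cref{prop:generalization} to obtain the $\Omega(t)$ gaps, then push the resulting attention concentration through the feed-forward layer. The only minor difference is that for the feed-forward step the paper invokes the exact structure $(e_{\mathrm{In}(x)}-e_v)^\top W_{\mathrm{ov}}^{(T)} x=\Delta$ from \Cref{coro:V nice} (which also gives $(e_i-e_{i'})^\top W_{\mathrm{ov}}^{(T)} x=0$ for $i,i'\neq \mathrm{In}(x)$), making the softmax computation cleaner, whereas you rely on the generic directional bound from \Cref{prop: V converge}; both routes work, but note that your error term $O(e^{-C_0 t})$ in $\bar{\mathrm T}_{\theta^{(t)}}(X)$ should carry an extra factor $\|W_{\mathrm{ov}}^{(T)}\|=O(T)$, which is harmless under the stated scaling of $T$ and $t$.
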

\Cref{thm:generalization} suggests that a new partial order is created by the trained transformer. Specifically, it inserts the non-comparable tokens between the optimal and non-optimal tokens. The trained transformer can generalize the token prediction to such new sentences as given in \Cref{thm:generalization}.


We use \Cref{eg:SOV} to illustrate the generalization ability described above.
\begin{example}[Generalization to unseen data in Example 1]

    Recall that in \Cref{eg:SOV}, the training dataset consists of four sentences: SVOP, VOP, OPP, and PSV. Consider the partial order $>_{\text{P}}$ under the punctuation mark P. We have that O$>_{\text{P}}$P and O is an optimal token, P is a non-optimal token, and S,V are non-comparable tokens. We then have the following non-trivial prediction by the trained transformer.
     
    {\bf Case 1.} Non-comparable tokens are learned to be ``larger'' than non-optimal tokens. 
  
    Consider a new (unseen) input sentence SP. Since S is non-comparable before training, but is ``larger'' than P under the trained key-query matrix $W_{\mathrm{kq}}^{(t)}$, the next predicted token is $\mathrm{n}(\text{S})$ = V.
    
    {\bf Case 2.} Optimal tokens remain optimal over all tokens after training.
     
    Consider a new (unseen) input OSP. O is optimal and S is still ``smaller'' than O under the trained P-partial order. The trained transformer will consistently predict P. 
     
\end{example}
In both of the above cases, the trained transformer provides desirable prediction for the unseen sentences. We further note that the effectiveness of both cases can vary during the inference time of the trained transformer. For instance, if the input sequence is SP (subject-punctuation), the output is SPV (subject-P-verb), which follows a logical subject-verb order and is desirable. However, in cases where the input is VP (verb-punctuation), it may be preferable to terminate the sequence after the verb, i.e., VPP, as the verb alone can suffice to convey the intended meaning.

\section{Experiment}\label{sec:exp}
 In this section, we verify our theoretical findings via an experiment on a synthetic dataset. Specifically, we randomly generate a realizable dataset as described in \Cref{assm: realizable dataset} with $|\Vc|=20$. Then, we train $W_{\mathrm{ov}}$ and $W_{\mathrm{kq}}$ by \Cref{alg},
 each with 900 iterations. The parameters are chosen as $d=|\Vc|$, $\eta_0=0.2/\sqrt{d}$, and $\eta=0.05/\sqrt{d}$.
 In \Cref{fig:six_figures}, the first three plots show the dynamics of the training stage 1, which indicates the convergence of the loss $\Lc_0(W_{\mathrm{ov}}^{(t)})$ to its minimum value, the convergence of $W_{\mathrm{{o}v}}^{(t)}$ in direction to $W_{\mathrm{{o}v}}^{*}$,
and the linear increase of the norm $\|W_{\mathrm{ov}}^{(t)}\|$, respectively. These results verify \Cref{prop: V converge}. The last three plots show the dynamics of the training stage 2, which indicates the convergence of the loss $\Lc(\theta^{(t)})$, the convergence of $W_{\mathrm{kq}}^{(t)}$ in direction to $W_{\mathrm{kq}}^{*}$, 
and the linear increase of the norm $\|W_{\mathrm{kq}}^{(t)}\|$. These results  verify \Cref{thm: IB rate} and \Cref{thm:loss}.  All experiments are conducted on a PC equipped with an i5-12400F processor and 16GB of memory.

\begin{figure}[htbp]
    \centering

    \begin{minipage}[b]{0.16\textwidth}
        \includegraphics[height=1.6cm]{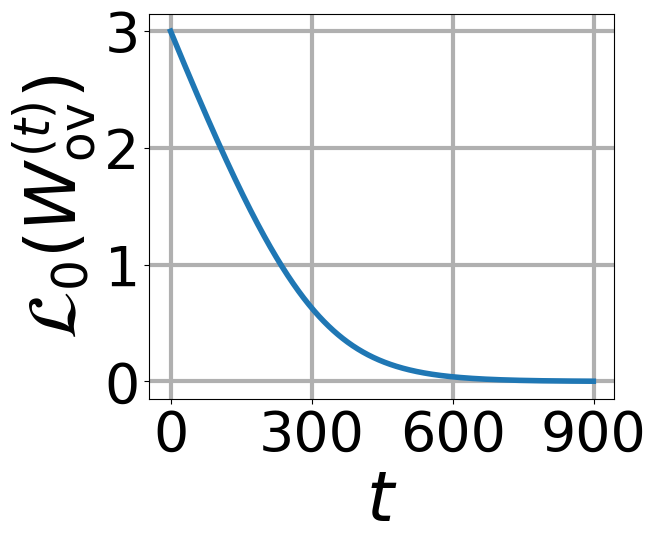}
        \label{fig:fig1}
    \end{minipage}
    \hfill
    \begin{minipage}[b]{0.16\textwidth}
        \includegraphics[height=1.6cm]{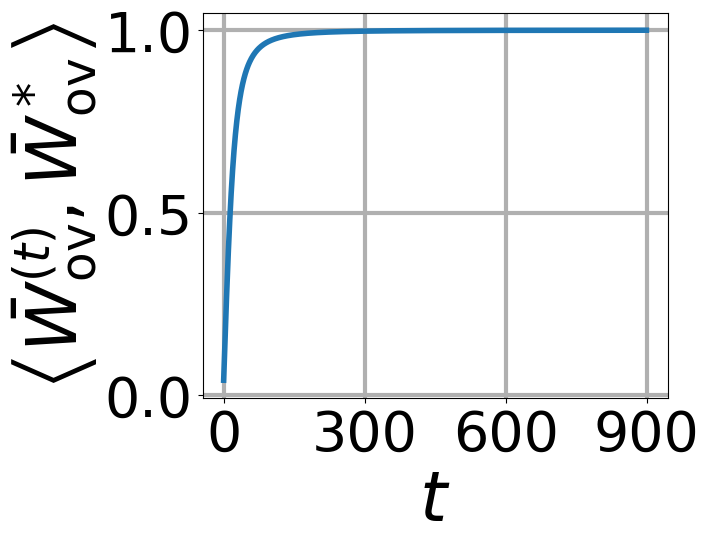}
        \label{fig:fig2}
    \end{minipage}
    \hfill
    \begin{minipage}[b]{0.16\textwidth}
        \includegraphics[height=1.6cm]{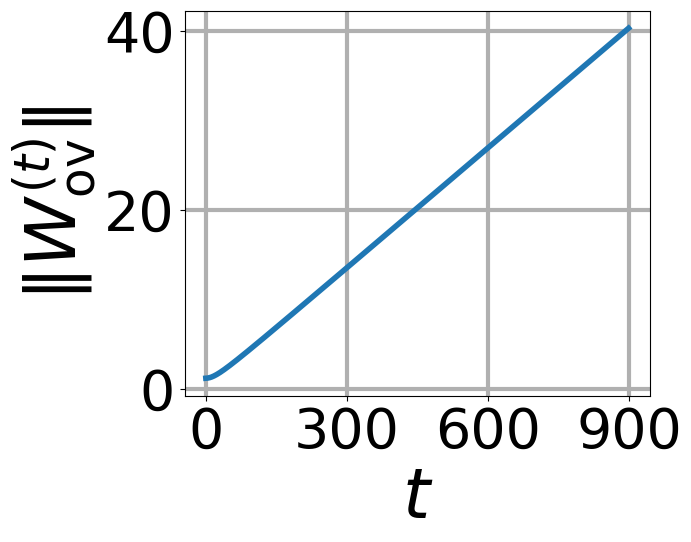}
        \label{fig:fig3}
    \end{minipage}
    \hfill
    \begin{minipage}[b]{0.16\textwidth}
        \includegraphics[height=1.6cm]{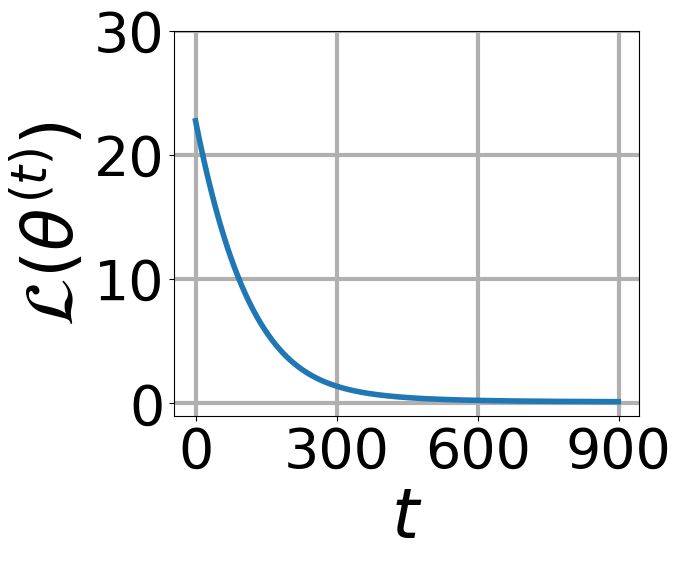}
        \label{fig:fig4}
    \end{minipage}
    \hfill
    \begin{minipage}[b]{0.16\textwidth}
        \includegraphics[height=1.6cm]{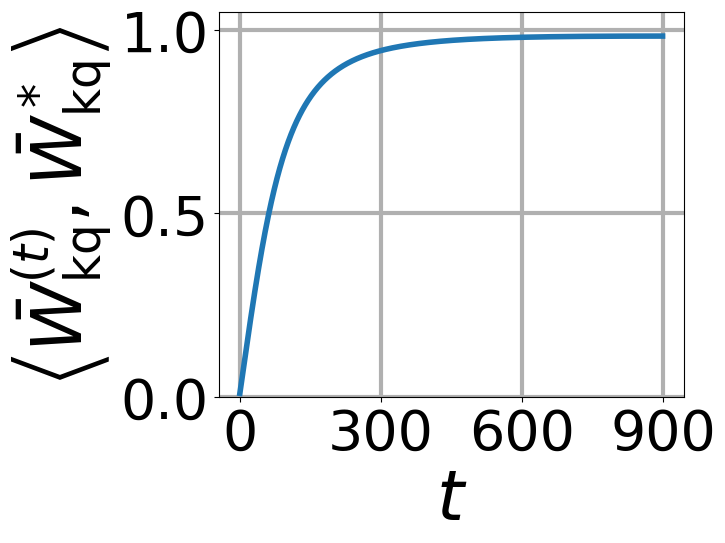}
        \label{fig:fig5}
    \end{minipage}
    \hfill
    \begin{minipage}[b]{0.16\textwidth}
        \includegraphics[height=1.6cm]{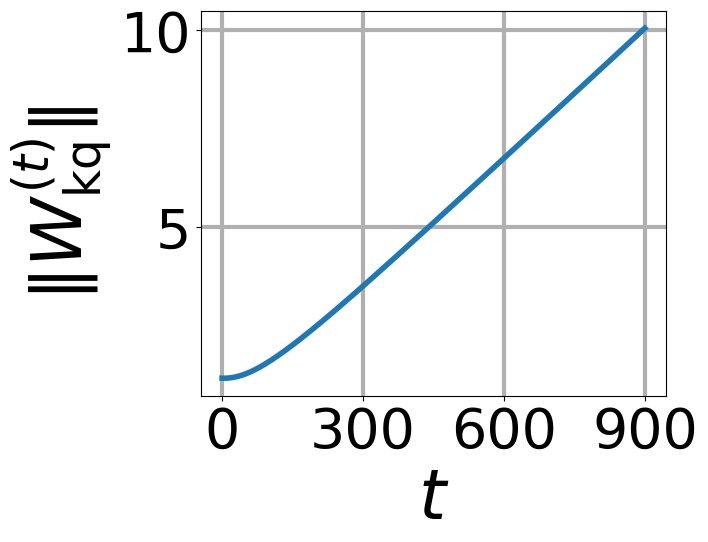}
        \label{fig:fig6}
    \end{minipage}

    \caption{Training dynamics of single-layer transformer for NTP.}
    \label{fig:six_figures}
\end{figure}

\section{Conclusion}
In this work, we investigated the training dynamics of a single-layer transformer for NTP. We first characterized two structural properties of the training dataset under the realizable setting where the training loss can be made arbitrarily close to zero.
These properties allow us to define two max-margin solutions for both the feed-forward layer and the self-attention layer. Then, we showed that both layers converge in direction to their corresponding max-margin solutions sub-linearly, which further yields a linear convergence of the training loss for NTP. We further showed that the well trained transformer can have non-trivial prediction ability on unseen data, which sheds light on the generalization capability of transformers. Our experiments verify our theoretical findings.

\bibliography{main}
\bibliographystyle{apalike}

\newpage

\appendix

\tableofcontents

\newpage

\section{Expression of Gradients}
We first provide the general formula for the gradients of both layers.

\begin{align}
    &\nabla_{W_{\mathrm{{o}v}}}\Lc_0(W_{\mathrm{{o}v}}) = \sum_{x\in \Vc}  \left( \mathrm{T}_{0}(x) - e_{\mathrm{In}(x)} \right) x^\top, \label{eqn: GD of V}\\
    &\nabla_{W_{\mathrm{kq}}}\Lc(\theta) \nonumber \\
    &\quad = \sum_n \pi^{(n)} X^{(n)}\left( \mathrm{diag}(\phi_\theta(X^{(n)}) - \phi_\theta(X^{(n)})\phi_\theta(X^{(n)})^\top \right)(X^{(n)})^\top W_{\mathrm{{o}v}}^\top\left( \mathrm{T}_{\theta}(X^{(n)}) -p^{(n)} \right)(X_{-1}^{(n)})^\top. \label{eqn: GD of KQ}
\end{align}

\section{Proof of \Cref{prop: V converge}}
Recall that  we use the loss, 

\[ \Lc_0(W_{\mathrm{{o}v}}) = - \sum_{x\in\Vc} \log \frac{ \exp\left( e_{\mathrm{In}(x)}^\top  W_{\mathrm{{o}v}} x \right) }{ \sum_{i\in[|\Vc|]} \exp\left( e_i^\top W_{\mathrm{{o}v}} x \right) }  .\]

The updating rule of $W_{\mathrm{{o}v}}$ is that

\begin{align}
W_{\mathrm{{o}v}}^{(t+1)} = W_{\mathrm{{o}v}}^{(t)} - \eta_0 \frac{ \nabla_{W_{\mathrm{{o}v}}}\Lc_0(W_{\mathrm{{o}v}}^{(t)}) }{ \|\nabla_{W_{\mathrm{{o}v}}}\Lc_0(W_{\mathrm{{o}v}}^{(t)})\| }.\label{eqn: V update rule}
\end{align}

We know that $\Lc_0$ is convex respect to $W_{\mathrm{{o}v}}$. Therefore, we have

\begin{align*}
    \left\langle W_{\mathrm{{o}v}}^{(t)} - W_{\mathrm{{o}v}}', \nabla_{W_{\mathrm{{o}v}}}\Lc_0(\theta^{(t)}) \right\rangle \geq \Lc_0(\theta^{(t)}) - \Lc_0(\theta').
\end{align*}

\if{0}
\begin{definition}[Initial optimal direction]
    \begin{align*}
        W_{\mathrm{{o}v}}^* &= \arg\min \|W\| \\
        & s.t. \quad~ (e_{v^*} - e_v) W \sum_\ell x_\ell^{(n)}\varphi_\ell^{(n)} \geq 1, \hspace{1cm} \forall v^*=\mathrm{I}_{\mathrm{n}}(X^{(n)}), v\neq  \mathrm{I}_{\mathrm{n}}(X^{(n)})
    \end{align*}
 $\mathrm{I}_{\mathrm{n}} (X^{(n)})\in[|\Vc|]$ is the {\bf index} of the next word of $X^{(n)}$
\end{definition}
\fi

It is clear that the loss function $\Lc$ reaches the minimum $0$ when $W_{\mathrm{{o}v}} = \Delta W_{\mathrm{{o}v}}^*, $ as $ \Delta \rightarrow  \infty$.

\begin{lemma}\label{lemma: GD of V grows}
Under the initialization $W_{\mathrm{{o}v}}^{(0)}$ and the updating rule \Cref{eqn: V update rule} with step size $\eta$, the following inequality holds.

    \[  t\eta_0  + \|W_{\mathrm{{o}v}}^{(0)}\|\geq \|W_{\mathrm{{o}v}}^{(t)}\| \geq \frac{t\eta_0}{2\|W_{\mathrm{{o}v}}^*\|} - \|W_{\mathrm{{o}v}}^{(0)}\|. \]
\end{lemma}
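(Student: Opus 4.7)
The plan is to decouple the claim into an easy upper bound and a more substantive lower bound, both of which follow from controlling how each normalized step $-\eta_0 \nabla_{W_{\mathrm{ov}}} \Lc_0(W^{(t)})/\|\nabla_{W_{\mathrm{ov}}} \Lc_0(W^{(t)})\|$ interacts with the max-margin direction $W_{\mathrm{ov}}^*$. The upper bound is immediate: each increment has Frobenius norm exactly $\eta_0$, so the triangle inequality applied iteratively to the update rule \Cref{eqn: V update rule} gives $\|W_{\mathrm{ov}}^{(t)}\| \leq \|W_{\mathrm{ov}}^{(0)}\| + t\eta_0$.

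For the lower bound, I would project $W_{\mathrm{ov}}^{(t)}$ onto $W_{\mathrm{ov}}^*/\|W_{\mathrm{ov}}^*\|$, use $\|W_{\mathrm{ov}}^{(t)}\| \geq \langle W_{\mathrm{ov}}^{(t)}, W_{\mathrm{ov}}^*\rangle / \|W_{\mathrm{ov}}^*\|$, and unroll the recursion to obtain
\begin{align*}
\langle W_{\mathrm{ov}}^{(t)}, W_{\mathrm{ov}}^*\rangle \;\geq\; -\|W_{\mathrm{ov}}^{(0)}\|\,\|W_{\mathrm{ov}}^*\| + \eta_0 \sum_{s=0}^{t-1} \frac{-\langle \nabla_{W_{\mathrm{ov}}} \Lc_0(W^{(s)}), W_{\mathrm{ov}}^*\rangle}{\|\nabla_{W_{\mathrm{ov}}} \Lc_0(W^{(s)})\|}.
\end{align*}
Thus the whole task reduces to showing that the per-step ratio $-\langle \nabla \Lc_0, W_{\mathrm{ov}}^*\rangle / \|\nabla \Lc_0\|$ is uniformly at least $1/2$, which is the crux of the argument.

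To establish that ratio I would exploit the formula $\nabla_{W_{\mathrm{ov}}} \Lc_0(W) = \sum_{x\in\Vc} (\phi(Wx) - e_{\mathrm{In}(x)}) x^\top$ from \Cref{eqn: GD of V} together with the margin constraints $(e_{\mathrm{In}(x)} - e_v)^\top W_{\mathrm{ov}}^* x \geq 1$. Writing $p_x := [\phi(Wx)]_{\mathrm{In}(x)}$ and $A := \sum_x (1-p_x)$, a direct expansion yields $\langle \phi(Wx) - e_{\mathrm{In}(x)}, W_{\mathrm{ov}}^* x\rangle = \sum_{i\neq \mathrm{In}(x)} \phi(Wx)_i (e_i - e_{\mathrm{In}(x)})^\top W_{\mathrm{ov}}^* x \leq -(1-p_x)$ by the constraint, which sums to $-\langle \nabla \Lc_0, W_{\mathrm{ov}}^*\rangle \geq A$. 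In the opposite direction, $\|(\phi(Wx) - e_{\mathrm{In}(x)}) x^\top\| = \|\phi(Wx) - e_{\mathrm{In}(x)}\|_2 \|x\| \leq \|\phi(Wx) - e_{\mathrm{In}(x)}\|_1 = 2(1-p_x)$, so triangle inequality (or, more tightly, orthonormality from \Cref{assm:orthornomal}) gives $\|\nabla \Lc_0\| \leq 2A$. Dividing yields the desired per-step lower bound of $1/2$, and plugging into the unrolled inequality completes the lower bound $\|W_{\mathrm{ov}}^{(t)}\| \geq t\eta_0/(2\|W_{\mathrm{ov}}^*\|) - \|W_{\mathrm{ov}}^{(0)}\|$.

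The main obstacle is the tight matching between the two sides of the alignment ratio: the upper bound on $\|\nabla \Lc_0\|$ must scale linearly in $A$ (not $\sqrt{A}$, which Cauchy--Schwarz would naively give), and the lower bound on $-\langle \nabla \Lc_0, W_{\mathrm{ov}}^*\rangle$ must sharply exploit the margin constraints on every non-target coordinate. Once these two ingredients are in place at the correct scale, the rest of the argument is mechanical bookkeeping with the update rule.
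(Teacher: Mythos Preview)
Your proposal is correct and follows essentially the same route as the paper: bound $-\langle \nabla_{W_{\mathrm{ov}}}\Lc_0, W_{\mathrm{ov}}^*\rangle \geq A$ via the margin constraints, bound $\|\nabla_{W_{\mathrm{ov}}}\Lc_0\| \leq 2A$, conclude the per-step alignment is at least $1/2$, then project and unroll. The only cosmetic difference is that the paper bounds $\|\nabla\Lc_0\|$ by writing it as $\langle \nabla\Lc_0/\|\nabla\Lc_0\|, \nabla\Lc_0\rangle$ and applying $\|AB\|\leq\|A\|\|B\|$ termwise, whereas you use the triangle inequality together with $\|\phi(Wx)-e_{\mathrm{In}(x)}\|_2 \leq \|\phi(Wx)-e_{\mathrm{In}(x)}\|_1 = 2(1-p_x)$; both arrive at the same $2A$ bound.
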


\begin{proof} Using \Cref{eqn: GD of V}, we have

\begin{align}
     \left\langle W_{\mathrm{{o}v}}^*, \nabla_{W_{\mathrm{{o}v}}}\Lc_0(W_{\mathrm{{o}v}}^{(t)}) \right\rangle  
     & = \sum_{x\in\Vc} \left( \mathrm{T}_0^{(t)}(x) - e_{\mathrm{In}(x)} \right)^\top W_{\mathrm{{o}v}}^* x \nonumber \\
     & = \sum_{x\in\Vc} \sum_{i\in [|\Vc|]} [\mathrm{T}_0^{(t)}(x)]_i (e_i - e_{\mathrm{In}(x)})^\top W_{\mathrm{{o}v}}^* x \nonumber \\
     &\overset{(a)}\leq - \sum_{x\in\Vc} \sum_{i\neq \mathrm{In}(x)} [\mathrm{T}_0^{(t)}(x)]_i, \label{eqn: GD of V dot opt < 0}
\end{align}
where $(a)$ is due the constraints that $W_{\mathrm{{o}v}}^*$ satisfies. On the other hand,

\begin{align*}
     \|\nabla_{W_{\mathrm{{o}v}}}\Lc(W_{\mathrm{{o}v}}^{(t)})\|  
    & = \left\langle \frac{\nabla_{W_{\mathrm{{o}v}}}\Lc(W_{\mathrm{{o}v}}^{(t)})}{\|\nabla_{W_{\mathrm{{o}v}}}\Lc(W_{\mathrm{{o}v}}^{(t)})\|}, \nabla_{W_{\mathrm{{o}v}}}\Lc_0(W_{\mathrm{{o}v}}^{(t)}) \right\rangle \\
    & = \sum_{x\in\Vc} \sum_{i} [\mathrm{T}_0^{(t)}(x)]_i (e_i - e_{\mathrm{In}(x)})^\top \frac{ \nabla_{W_{\mathrm{{o}v}}}\Lc(\theta^{(t)}) }{ \|\nabla_{W_{\mathrm{{o}v}}}\Lc(\theta^{(t)})\| } x  \\
    &\overset{(a)}\leq 2 \sum_{x\in\Vc}\sum_{i\neq \mathrm{In}(x)} [\mathrm{T}_0^{(t)}(x)]_i,
\end{align*}
where $(a)$ follows from $\|AB\|\leq \|A\|\|B\|$ for any matrices $A$ and $B$. Thus, we obtain
\[ \left\langle W_{\mathrm{{o}v}}^*, \frac{ \nabla_{W_{\mathrm{{o}v}}} \Lc(\theta^{(t)}) } { \|\nabla_{W_{\mathrm{{o}v}}} \Lc(\theta^{(t)})\| } \right\rangle \leq -1/2  \]

To lower bound the norm of $W_{\mathrm{{o}v}}$, we recall the updating rule (\Cref{eqn: V update rule}).

\begin{align*}
    \|W_{\mathrm{{o}v}}^{(t)}\| &= \left\|W_{\mathrm{{o}v}}^{(0)} -\sum_{t'<t} \eta_0 \frac{\nabla_{W_{\mathrm{{o}v}}}\Lc_0(W_{\mathrm{{o}v}}^{(t')})}{ \|\nabla_{W_{\mathrm{{o}v}}}\Lc_0(W_{\mathrm{{o}v}}^{(t')})\| } \right\| \\
    &\geq \left\langle W_{\mathrm{{o}v}}^{(0)} -\sum_{t'<t} \eta_0 \frac{\nabla_{W_{\mathrm{{o}v}}}\Lc_0(W_{\mathrm{{o}v}}^{(t')})}{ \|\nabla_{W_{\mathrm{{o}v}}}\Lc_0(W_{\mathrm{{o}v}}^{(t')})\| }, \frac{W_{\mathrm{{o}v}}}{\|W_{\mathrm{{o}v}}^*\|}  \right\rangle \\
    &\geq \frac{t\eta_0}{2\|W_{\mathrm{{o}v}}^*\|} - \|W_{\mathrm{{o}v}}^{(0)}\|.
\end{align*}

For the LHS of the inequality, it suffices to note that at each iteration, the norm $\|W_{\mathrm{{o}v}}^{(t)}\|$ increases most $\eta_0$ due to normalized gradient descent.
\end{proof}

\begin{lemma}\label{lemma: GD of V align w opt}
    At each iteration $t$, the following inequality holds.
    \begin{align*}
        \left\langle \frac{W_{\mathrm{{o}v}}^{(t)} }{\|W_{\mathrm{{o}v}}^{(t)}\| }, \nabla_{W_{\mathrm{{o}v}}}\Lc_0(W_{\mathrm{{o}v}}^{(t)}) \right\rangle \geq \left(1+ \frac{ 2\|W_{\mathrm{{o}v}}^* \| }{\|W_{\mathrm{{o}v}}^{(t)}\|} \log(2 |\mathcal{V}|) \right) \left\langle \frac{W_{\mathrm{{o}v}}^* }{ \|W_{\mathrm{{o}v}}^*\| } , \nabla_{W_{\mathrm{{o}v}}}\Lc_0(W_{\mathrm{{o}v}}^{(t)}) \right\rangle
    \end{align*}
\end{lemma}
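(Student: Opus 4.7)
The plan is to derive the inequality from convexity of $\Lc_0$ combined with a carefully scaled comparison against $W_{\mathrm{ov}}^*$. Convexity yields $\langle W_{\mathrm{ov}}^{(t)} - W', \nabla\Lc_0(W_{\mathrm{ov}}^{(t)})\rangle \geq \Lc_0(W_{\mathrm{ov}}^{(t)}) - \Lc_0(W')$ for every $W'$. I would set $W' = c\, W_{\mathrm{ov}}^*$ with the specific scalar $c = \|W_{\mathrm{ov}}^{(t)}\|/\|W_{\mathrm{ov}}^*\| + 2\log(2|\Vc|)$, so that after dividing through by $\|W_{\mathrm{ov}}^{(t)}\|$ the coefficient $c\|W_{\mathrm{ov}}^*\|/\|W_{\mathrm{ov}}^{(t)}\|$ equals exactly the target factor $1 + 2\|W_{\mathrm{ov}}^*\|\log(2|\Vc|)/\|W_{\mathrm{ov}}^{(t)}\|$. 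After rearranging, the lemma reduces to establishing the residual inequality $\Lc_0(W_{\mathrm{ov}}^{(t)}) \geq \Lc_0(cW_{\mathrm{ov}}^*)$, which I would split into matching upper and lower bounds.

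I would upper bound $\Lc_0(cW_{\mathrm{ov}}^*)$ using the margin constraint from \Cref{eqn: opt V}: each off-target shift obeys $(e_v - e_{\mathrm{In}(x)})^\top (cW_{\mathrm{ov}}^*) x \leq -c$, so the softmax partition function satisfies $Z_x(cW_{\mathrm{ov}}^*) \leq 1 + (|\Vc|-1) e^{-c}$. Applying $\log(1+y) \leq y$ and summing over the $|\Vc|$ tokens in $\Vc$ yields $\Lc_0(cW_{\mathrm{ov}}^*) \leq |\Vc|^2 e^{-c}$, which by my choice of $c$ simplifies to $\Lc_0(cW_{\mathrm{ov}}^*) \leq \tfrac{1}{4}\exp(-\|W_{\mathrm{ov}}^{(t)}\|/\|W_{\mathrm{ov}}^*\|)$.

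The matching lower bound on $\Lc_0(W_{\mathrm{ov}}^{(t)})$ is the crux. I would invoke the max-margin characterization of $W_{\mathrm{ov}}^*$: since $W_{\mathrm{ov}}^*/\|W_{\mathrm{ov}}^*\|$ attains the largest normalized margin $1/\|W_{\mathrm{ov}}^*\|$, for any $W$ there must exist a pair $(x_0, v_0)$ with $v_0 \neq \mathrm{In}(x_0)$ such that $(e_{\mathrm{In}(x_0)} - e_{v_0})^\top W x_0 \leq \|W\|/\|W_{\mathrm{ov}}^*\|$. Applied to $W = W_{\mathrm{ov}}^{(t)}$, this forces $Z_{x_0}(W_{\mathrm{ov}}^{(t)}) \geq 1 + \exp(-\|W_{\mathrm{ov}}^{(t)}\|/\|W_{\mathrm{ov}}^*\|)$, hence $\Lc_0(W_{\mathrm{ov}}^{(t)}) \geq \log(1 + \exp(-\|W_{\mathrm{ov}}^{(t)}\|/\|W_{\mathrm{ov}}^*\|)) \geq \tfrac{1}{2}\exp(-\|W_{\mathrm{ov}}^{(t)}\|/\|W_{\mathrm{ov}}^*\|)$, using $\log(1+y) \geq y/2$ on $y \in (0, 1]$. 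This dominates the upper bound on $\Lc_0(cW_{\mathrm{ov}}^*)$ by a factor of two, which yields the required residual non-negativity and hence the lemma.

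The main obstacle is the lower bound step above: the trivial bound $\Lc_0 \geq 0$ is insufficient because the residual must be at least of order $\exp(-\|W_{\mathrm{ov}}^{(t)}\|/\|W_{\mathrm{ov}}^*\|)$ up to constants. This forces one to exploit the geometric optimality of $W_{\mathrm{ov}}^*$ as the maximum-margin direction, rather than just its primal feasibility, to produce a loss value that cannot decay faster than the scaled max-margin solution's loss.
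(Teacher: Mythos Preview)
Your proposal is correct and follows essentially the same approach as the paper: both set $W' = cW_{\mathrm{ov}}^*$ with $c = \|W_{\mathrm{ov}}^{(t)}\|/\|W_{\mathrm{ov}}^*\| + 2\log(2|\Vc|)$, then use convexity together with the pair of bounds $\Lc_0(W_{\mathrm{ov}}^{(t)}) \geq \tfrac{1}{2}e^{-\|W_{\mathrm{ov}}^{(t)}\|/\|W_{\mathrm{ov}}^*\|}$ (from max-margin optimality) and $\Lc_0(cW_{\mathrm{ov}}^*) \leq \tfrac{1}{2}e^{-\|W_{\mathrm{ov}}^{(t)}\|/\|W_{\mathrm{ov}}^*\|}$ (from feasibility). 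The only cosmetic difference is that the paper handles the aligned case $W_{\mathrm{ov}}^{(t)} \parallel W_{\mathrm{ov}}^*$ separately as trivially satisfied, whereas your non-strict margin inequality covers both cases uniformly.
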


\begin{proof} First, we consider the case when \( W_{\mathrm{{o}v}}^{(t)} = W_{\mathrm{{o}v}}^* \frac{\|W_{\mathrm{{o}v}}^{(t)}\|}{\|W_{\mathrm{{o}v}}^*\|} \). Due to \Cref{eqn: GD of V dot opt < 0} in \Cref{lemma: GD of V grows}, we have

\[ \left\langle \frac{W_{\mathrm{{o}v}}^* }{ \|W_{\mathrm{{o}v}}^*\| } , \nabla_{W_{\mathrm{{o}v}}}\Lc_0(W_{\mathrm{{o}v}}^{(t)}) \right\rangle < 0 \]
In this case, the result is trivial.

Then, we consider the case when  \( W_{\mathrm{{o}v}}^{(t)} \neq  W_{\mathrm{{o}v}}^* \frac{\|W_{\mathrm{{o}v}}^{(t)}\|}{\|W_{\mathrm{{o}v}}^*\|} \). Due the optimality of $W_{\mathrm{{o}v}}^*$, which achieves the minimum norm satisfying the constraints in \Cref{eqn: opt V}, we must have that for some $x_0\in\Vc$, there exists $i_0\neq \mathrm{In}(x_0)$ such that the following inequality holds
\[ (e_{\mathrm{In}(x_0)} - e_i)^\top W_{\mathrm{{o}v}}^{(t)} x_0 < \frac{ \|W_{\mathrm{{o}v}}^{(t)}\| }{\|W_{\mathrm{{o}v}}^*\|}. \]
Therefore, the loss on $W_{\mathrm{{o}v}}^{(t)}$ can be lower bounded as follows.
\begin{align*}
    \Lc(W_{\mathrm{{o}v}}^{(t)})
    & =  \sum_{x\in\Vc}  \log \left( 1 + \sum_i \exp\left( (e_i -e_{\mathrm{In}(x)})^\top  W_{\mathrm{{o}v}}^{(t)} x \right) \right) \\
    &> \log \left(1+\exp\left( -\|W_{\mathrm{{o}v}}^{(t)}\|/\|W_{\mathrm{{o}v}}^*\|\right) \right) \\
    &\overset{(a)}> \frac{1}{2}\exp \left(-\|W_{\mathrm{{o}v}}^{(t)}\|/\|W_{\mathrm{{o}v}}^*\| \right), 
\end{align*}
where $(a)$ is due to the fact that $\log(1+x)\geq x/2$ when $0<x<1$.
On the other hand, let \(W_{\mathrm{{o}v}}' = \left( \frac{ \|W_{\mathrm{{o}v}}^{(t)}\| }{\|W_{\mathrm{{o}v}}^*\|}  + 2\log (2  |\mathcal{V}| ) \right) W_{\mathrm{{o}v}}^*.\) Then, the loss on $W_{\mathrm{{o}v}}'$ has the following upper bound.

\begin{align*}
    \Lc(W_{\mathrm{{o}v}}') 
    & =  \sum_{x\in\Vc}  \log \left( 1 + \sum_i \exp\left( (e_i -e_{\mathrm{In}(x)})^\top  W_{\mathrm{{o}v}}^{(t)} x \right) \right) \\
    &\leq \sum_{x\in\Vc} \log\left( 1 + (|\Vc|-1)\exp \left(-\|W_{\mathrm{{o}v}}^{(t)}\|/\|W_{\mathrm{{o}v}}^*\|-\log (2|\mathcal{V}|) \right) \right) \\
    &\overset{(a)}\leq \sum_{x\in\Vc} |\Vc| \exp \left(-\|W_{\mathrm{{o}v}}^{(t)}\|/\|W_{\mathrm{{o}v}}^*\| - 2\log (2|\Vc|) \right) \\
    &\leq \frac{1}{2} \exp(-\|W_{\mathrm{{o}v}}^{(t)}\|/\|W_{\mathrm{{o}v}}^*\| ),
\end{align*}
where $(a)$ is due to the fact that $\log(1+x)<x$ when $x>0$. Thus, $\Lc(W_{\mathrm{{o}v}}^{(t)}) > \Lc(W_{\mathrm{{o}v}}')$. Due to the convextiy of $\Lc_0$,  we have

\begin{align*}
    0 &< \left\langle W_{\mathrm{{o}v}}^{(t)} - W_{\mathrm{{o}v}}', \nabla_{W_{\mathrm{{o}v}}}\Lc_0(W_{\mathrm{{o}v}}^{(t)}) \right\rangle\\
    & = \left\langle W_{\mathrm{{o}v}}^{(t)} , \nabla_{W_{\mathrm{{o}v}}}\Lc_0(W_{\mathrm{{o}v}}^{(t)}) \right\rangle - \left( \frac{ \|W_{\mathrm{{o}v}}^{(t)}\| }{\|W_{\mathrm{{o}v}}^*\|}  + 2\log (2 |\mathcal{V}|) \right) \left\langle W_{\mathrm{{o}v}}^*, \nabla_{W_{\mathrm{{o}v}}}\Lc_0(W_{\mathrm{{o}v}}^{(t)}) \right\rangle,
\end{align*}
which finishes the proof.

\end{proof}

\begin{proposition}[Restatement of \Cref{prop: V converge}]
Under the zero  initialization $W_{\mathrm{{o}v}}^{(0)}=0$ and updating rule \Cref{eqn: V update rule}, for any $t\geq 2$, the following inequality holds.

\[\left\langle \frac{W_{\mathrm{{o}v}}^{(t)}}{\|W_{\mathrm{{o}v}}^{(t)}\|}, \frac{W_{\mathrm{{o}v}}^*}{\|W_{\mathrm{{o}v}}^*\|} \right\rangle \geq 1- \frac{     12\|W_{\mathrm{{o}v}}^*\|^3 \log(2|\Vc|)\log t  }{t\eta_0 }.\]    

Moreover, $\frac{t\eta_0}{2\|W_{\mathrm{{o}v}}^*\|}  
 \leq \|W_{\mathrm{{o}v}}^{(t)}\|\leq t\eta_0$.
\end{proposition}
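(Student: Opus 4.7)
The proposition packs together three conclusions, so my plan is to dispatch them in order. The norm sandwich $t\eta_0/(2\|W_{\mathrm{ov}}^*\|)\leq \|W_{\mathrm{ov}}^{(t)}\| \leq t\eta_0$ is immediate from \Cref{lemma: GD of V grows} applied at the zero initialization $W_{\mathrm{ov}}^{(0)} = 0$, so the real work lies in proving the cosine-similarity rate and the exponential decay of $\Lc_0$. For both of these I would combine the linear lower bound on $\|W_{\mathrm{ov}}^{(t)}\|$ from \Cref{lemma: GD of V grows} with the sharpened directional inequality of \Cref{lemma: GD of V align w opt}, feeding them into a scalar recursion for the angle.

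Introduce the scalars $r_t := \|W_{\mathrm{ov}}^{(t)}\|$, $q_t := \langle W_{\mathrm{ov}}^{(t)}, W_{\mathrm{ov}}^*/\|W_{\mathrm{ov}}^*\|\rangle$, and $\alpha_t := -\langle W_{\mathrm{ov}}^*/\|W_{\mathrm{ov}}^*\|, \nabla_{W_{\mathrm{ov}}}\Lc_0(W_{\mathrm{ov}}^{(t)})/\|\nabla_{W_{\mathrm{ov}}}\Lc_0(W_{\mathrm{ov}}^{(t)})\|\rangle$; the computation inside \Cref{lemma: GD of V grows} shows $\alpha_t \in [1/(2\|W_{\mathrm{ov}}^*\|), 1]$. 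Since $\cos = q_t/r_t$, the goal reduces to showing that the deficit $D_t := r_t - q_t$ grows only as $O(\|W_{\mathrm{ov}}^*\|^2\log(2|\Vc|)\log t)$. The projection updates exactly via $q_{t+1} = q_t + \eta_0\alpha_t$, while expanding $r_{t+1}^2 = r_t^2 - 2\eta_0\langle W_{\mathrm{ov}}^{(t)}, \nabla_{W_{\mathrm{ov}}}\Lc_0(W_{\mathrm{ov}}^{(t)})\rangle/\|\nabla_{W_{\mathrm{ov}}}\Lc_0(W_{\mathrm{ov}}^{(t)})\| + \eta_0^2$ and substituting \Cref{lemma: GD of V align w opt} yields
\begin{align*}
r_{t+1}^2 \leq (r_t + \eta_0\alpha_t)^2 + 4\eta_0\|W_{\mathrm{ov}}^*\|\log(2|\Vc|)\alpha_t + \eta_0^2(1-\alpha_t^2).
\end{align*}
Applying $\sqrt{a^2+b}\leq a + b/(2a)$ together with $r_t\geq t\eta_0/(2\|W_{\mathrm{ov}}^*\|)$ converts this into $r_{t+1} \leq r_t + \eta_0\alpha_t + O(\|W_{\mathrm{ov}}^*\|^2\log(2|\Vc|)/t)$, and subtracting the $q_t$-recursion cancels the $\eta_0\alpha_t$ increments to give the telescoping bound $D_{t+1} - D_t \leq O(\|W_{\mathrm{ov}}^*\|^2\log(2|\Vc|)/t)$. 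Summing produces $D_T = O(\|W_{\mathrm{ov}}^*\|^2\log(2|\Vc|)\log T)$, and dividing by $r_T \geq T\eta_0/(2\|W_{\mathrm{ov}}^*\|)$ yields the stated cosine rate.

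For the loss bound I would decompose $W_{\mathrm{ov}}^{(t)} = (q_t/\|W_{\mathrm{ov}}^*\|)W_{\mathrm{ov}}^* + W^\perp$, so $\|W^\perp\|^2 = r_t^2 - q_t^2 = D_t(r_t + q_t) = O(t\eta_0\|W_{\mathrm{ov}}^*\|^2\log t \log|\Vc|)$. Each max-margin constraint of \Cref{eqn: opt V} then delivers $(e_{\mathrm{In}(x)}-e_i)^\top W_{\mathrm{ov}}^{(t)} x \geq q_t/\|W_{\mathrm{ov}}^*\| - \sqrt{2}\|W^\perp\|$ via $\|e_{\mathrm{In}(x)} - e_i\| = \sqrt{2}$ and \Cref{assm:orthornomal}, and the $\Theta(t)$ term from $q_t/\|W_{\mathrm{ov}}^*\|$ dominates the $O(\sqrt{t\log t})$ perpendicular error for large $t$, so every margin grows linearly in $t$. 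Bounding $\Lc_0 = \sum_x \log(1 + \sum_{i\neq \mathrm{In}(x)}\exp(-\text{margin}))$ by $|\Vc|^2\exp(-\min_\text{margin})$ then gives the advertised exponential decay. The principal obstacle is closing the square-root expansion for $r_{t+1}$: the bound $\sqrt{a^2+b} \leq a + b/(2a)$ is summable only once $r_t$ is already linear in $t$, which is exactly what \Cref{lemma: GD of V grows} guarantees; the initial step contributes a constant deficit $D_1 = \eta_0(1-\alpha_0)$ that is absorbed into the final big-$O$ constants, consistent with the proposition restricting to $t\geq 2$.
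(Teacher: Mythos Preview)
Your proposal is correct and follows essentially the same route as the paper. Both arguments invoke \Cref{lemma: GD of V grows} for the norm sandwich, then combine \Cref{lemma: GD of V align w opt} with the linear norm growth to show that the deficit $r_t - q_t$ (your $D_t$) grows only logarithmically, before dividing by $r_t = \Omega(t)$; the only cosmetic difference is that you pass through $\sqrt{a^2+b}\le a+b/(2a)$ while the paper uses the equivalent $a^2-b^2\ge 2b(a-b)$ to relate $r_{t+1}^2-r_t^2$ to $r_{t+1}-r_t$.
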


\begin{proof}
The second argument about the norm of $W_{\mathrm{{o}v}}^{(t)}$ follows directly from \Cref{lemma: GD of V grows}. We aim to prove the first part as follows. 

Let $\alpha_t = \frac{2\|W_{\mathrm{{o}v}}^*\|}{\|W_{\mathrm{{o}v}}^{(t)}\|} \log (2|\Vc|)$. By \Cref{lemma: GD of V align w opt} and the updating rule \Cref{eqn: V update rule}, we have

\begin{align*}
    \left\langle W_{\mathrm{{o}v}}^{(t+1)} - W_{\mathrm{{o}v}}^{(t)}, \frac{W_{\mathrm{{o}v}}^*}{\|W_{\mathrm{{o}v}}^*\|} \right\rangle 
    & = - \eta_0 \left\langle \nabla_{W_{\mathrm{{o}v}}}\Lc(W_{\mathrm{{o}v}}^{(t)}), \frac{W_{\mathrm{{o}v}}^*}{\|W_{\mathrm{{o}v}}^*\|} \right\rangle \\
    &\geq - \frac{\eta_0}{1+\alpha_t} \left\langle \nabla_{W_{\mathrm{{o}v}}}\Lc(W_{\mathrm{{o}v}}^{(t)}), \frac{W_{\mathrm{{o}v}}^{(t)}}{\|W_{\mathrm{{o}v}}^{(t)}\|} \right\rangle \\
    & = \frac{1}{ 1 + \alpha_t } \left\langle W_{\mathrm{{o}v}}^{(t+1)} - W_{\mathrm{{o}v}}^{(t)}, \frac{W_{\mathrm{{o}v}}^{(t)}}{\|W_{\mathrm{{o}v}}^{(t)}\|} \right\rangle \\
    & =  \left( 1 - \frac{\alpha_t}{ 1 + \alpha_t } \right) \left\langle W_{\mathrm{{o}v}}^{(t+1)} - W_{\mathrm{{o}v}}^{(t)}, \frac{W_{\mathrm{{o}v}}^{(t)}}{\|W_{\mathrm{{o}v}}^{(t)}\|} \right\rangle\\
    &= \frac{1}{2\|W_{\mathrm{{o}v}}^{(t)}\|}\left(\|W_{\mathrm{{o}v}}^{(t+1)}\|^2 - \|W_{\mathrm{{o}v}}^{(t+1)} - W_{\mathrm{{o}v}}^{(t)}\|^2 - \|W_{\mathrm{{o}v}}^{(t)}\|^2\right) \\
    &\quad - \frac{\alpha_t }{ 1+\alpha_t } \left\langle W_{\mathrm{{o}v}}^{(t+1)} - W_{\mathrm{{o}v}}^{(t)}, \frac{W_{\mathrm{{o}v}}^{(t)}}{\|W_{\mathrm{{o}v}}^{(t)}\|} \right\rangle \\
    & \overset{(a)}= \frac{\|W_{\mathrm{{o}v}}^{(t+1)}\|^2 - \|W_{\mathrm{{o}v}}^{(t)}\|^2}{2\|W_{\mathrm{{o}v}}^{(t)}\|} - \frac{\eta^2}{2\|W_{\mathrm{{o}v}}^{(t)}\|} \\
    &\quad + \frac{\eta_0 \alpha_t}{ 1 + \alpha_t } \left\langle \frac{ \nabla_{W_{\mathrm{{o}v}}}\Lc(\theta^{(t)}) }{\|\nabla_{W_{\mathrm{{o}v}}}\Lc(\theta^{(t)})\|}, \frac{W_{\mathrm{{o}v}}^{(t)}}{\|W_{\mathrm{{o}v}}^{(t)}\|} \right\rangle \\
    &\overset{(b)}\geq \|W_{\mathrm{{o}v}}^{(t+1)}\| - \|W_{\mathrm{{o}v}}^{(t)}\| - \frac{\eta_0^2}{2\|W_{\mathrm{{o}v}}^{(t)}\|} - \frac{\eta_0 \alpha_t}{ 1+\alpha_t },
\end{align*}
where $(a)$ follows from that $\|W_{\mathrm{{o}v}}^{(t+1)} - W_{\mathrm{{o}v}}^{(t)}\| = \eta_0$, and $(b)$  is due to the fact that $x^2 - y^2 \geq 2y(x-y)$ for any $x,y\in\Rb$.

Summing over $t$ starting from $2$, we have
\begin{align*}
    \left\langle W_{\mathrm{{o}v}}^{(t)} - W_{\mathrm{{o}v}}^{(2)}, \frac{W_{\mathrm{{o}v}}^*}{\|W_{\mathrm{{o}v}}^*\|} \right\rangle \geq \|W_{\mathrm{{o}v}}^{(t)}\| - \|W_{\mathrm{{o}v}}^{(2)}\| - \sum_{t'=2}^{t-1} \frac{\eta_0^2}{2\|W_{\mathrm{{o}v}}^{(t')}\|}   - \sum_{t'= 2}^{t-1}\frac{\eta_0 \alpha_{t'} }{1+\alpha_{t'}}.
\end{align*}

Furthermore, due to \Cref{lemma: GD of V grows}, 
\begin{align*}
    \sum_{t'=2}^{t-1} \frac{1}{\|W_{\mathrm{{o}v}}^{(t')}\|} & \leq \sum_{t'=2}^{t-1} \frac{2 \|W_{\mathrm{{o}v}}^*\|/\eta_0}{t } \\
    &\leq \frac{2\|W_{\mathrm{{o}v}}^*\| }{\eta_0} \log t.
\end{align*}

Similarly, 
\begin{align*}
    \sum_{t'=2}^{t-1}\frac{\alpha_{t'}}{1+\alpha_{t'}} & \leq \sum_{t'=2}^{t-1} \frac{2\|W_{\mathrm{{o}v}}^*\|\log(2|\Vc|)}{\|W_{\mathrm{{o}v}}^{(t')}\|} \\
    &\leq \frac{4\|W_{\mathrm{{o}v}}^*\|^2\log(2|\Vc|) }{\eta_0} \log t
\end{align*}

Therefore,

\begin{align*}
    \left\langle \frac{W_{\mathrm{{o}v}}^{(t)}}{\|W_{\mathrm{{o}v}}^{(t)}\|}, \frac{W_{\mathrm{{o}v}}^*}{\|W_{\mathrm{{o}v}}^*\|} \right\rangle &\geq 1 - \frac{ \|W_{\mathrm{{o}v}}^{(2)}\|  + 2\eta_0\|W_{\mathrm{{o}v}}^*\|\log t +  4\|W_{\mathrm{{o}v}}^*\|^2\log(2|\Vc|)\log t} {\|W_{\mathrm{{o}v}}^{(t)}\|} \\
    &\overset{(a)}\geq 1- \frac{ 12\|W_{\mathrm{{o}v}}^*\|^3 \log(2|\Vc|)\log t   }{ t\eta_0 },
\end{align*}
where $(a)$ follows from \Cref{lemma: GD of V grows} and $ \|W_{\mathrm{{o}v}}^{(2)}\|\leq 2\eta_0 \leq \|W_{\mathrm{{o}v}}^*\|$, and $\|W_{\mathrm{ov}}^{(t)}\|\geq t\eta_0/(2\|W_{\mathrm{ov}}^*\|)$
    
\end{proof}

\section{Proof of \Cref{thm: IB rate} and \Cref{thm:loss}}

\subsection{Supporting Lemmas}

\begin{lemma}\label{lemma: V init zero}
    With zero initialization, under the updating rule 
   \Cref{eqn: V update rule}, for any iteration $t$, $W_{\mathrm{{o}v}}^{(t)}$ satisfies that 
    \[ (e_i - e_{i'})^\top W_{\mathrm{{o}v}}^{(t)} x = 0,\quad \forall i,i'\neq \mathrm{In}(x).  \]
\end{lemma}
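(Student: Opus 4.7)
The plan is to prove the claim by induction on $t$, exploiting two facts: (i) orthonormality (\Cref{assm:orthornomal}) decouples the action of the gradient on distinct vocabulary tokens, and (ii) the softmax is symmetric across indices that have equal logits, so the ``all equal except at $\mathrm{In}(x)$'' structure is preserved by one step of normalized gradient descent.

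\textbf{Step 1: reduce the gradient action on a token.} Using the expression for $\nabla_{W_{\mathrm{ov}}}\Lc_0$ in \Cref{eqn: GD of V} and the orthonormality $x'{}^\top x = \mathbbm{1}\{x'=x\}$ from \Cref{assm:orthornomal}, I would first show that for every $x\in\Vc$,
\begin{align*}
\nabla_{W_{\mathrm{ov}}}\Lc_0\bigl(W_{\mathrm{ov}}^{(t)}\bigr)\,x \;=\; \sum_{x'\in\Vc}\bigl(\mathrm{T}_0^{(t)}(x')-e_{\mathrm{In}(x')}\bigr)\,x'{}^\top x \;=\; \mathrm{T}_0^{(t)}(x)-e_{\mathrm{In}(x)}.
\end{align*}
Thus applying one NGD update to the token $x$ gives
\begin{align*}
W_{\mathrm{ov}}^{(t+1)}x \;=\; W_{\mathrm{ov}}^{(t)}x \;-\; \frac{\eta_0}{\|\nabla_{W_{\mathrm{ov}}}\Lc_0(W_{\mathrm{ov}}^{(t)})\|}\bigl(\mathrm{T}_0^{(t)}(x)-e_{\mathrm{In}(x)}\bigr),
\end{align*}
where the scalar $\eta_0/\|\nabla_{W_{\mathrm{ov}}}\Lc_0(W_{\mathrm{ov}}^{(t)})\|$ is independent of $i$.

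\textbf{Step 2: induction on the coordinate structure.} The base case $t=0$ is immediate since $W_{\mathrm{ov}}^{(0)}=0$ gives $W_{\mathrm{ov}}^{(0)}x=0$, whose coordinates are trivially equal outside $\mathrm{In}(x)$. For the inductive step, assume $[W_{\mathrm{ov}}^{(t)}x]_i = a_t$ for all $i\neq \mathrm{In}(x)$ and $[W_{\mathrm{ov}}^{(t)}x]_{\mathrm{In}(x)}=b_t$. Then $\mathrm{T}_0^{(t)}(x)=\phi(W_{\mathrm{ov}}^{(t)}x)$ has the same symmetry: its entry is $e^{a_t}/Z_t$ at every index $i\neq\mathrm{In}(x)$ and $e^{b_t}/Z_t$ at $\mathrm{In}(x)$, where $Z_t=(|\Vc|-1)e^{a_t}+e^{b_t}$. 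Since $e_{\mathrm{In}(x)}$ has support only at $\mathrm{In}(x)$, the vector $\mathrm{T}_0^{(t)}(x)-e_{\mathrm{In}(x)}$ also has equal entries at all $i\neq \mathrm{In}(x)$. Plugging into the formula from Step~1, $[W_{\mathrm{ov}}^{(t+1)}x]_i$ is the same value for every $i\neq \mathrm{In}(x)$, closing the induction. Taking the difference of two such coordinates yields $(e_i-e_{i'})^\top W_{\mathrm{ov}}^{(t)}x=0$ for all $i,i'\neq \mathrm{In}(x)$.

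\textbf{Anticipated obstacles.} I do not expect a substantive obstacle: the result is essentially a symmetry-preservation statement and the orthonormality assumption is exactly what is needed to decouple per-token dynamics. The only care point is to confirm that $\|\nabla_{W_{\mathrm{ov}}}\Lc_0(W_{\mathrm{ov}}^{(t)})\|>0$ along the trajectory so that the NGD update is well-defined; this holds because, unless the loss is already at its (unattained) infimum, at least one token $x$ has $\mathrm{T}_0^{(t)}(x)\neq e_{\mathrm{In}(x)}$, making the gradient nonzero. If $\|\nabla\|=0$ ever occurs, the iterate is simply not updated and the inductive invariant is trivially preserved.
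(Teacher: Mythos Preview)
Your proposal is correct and follows essentially the same approach as the paper: induction on $t$, using the gradient formula \Cref{eqn: GD of V} together with orthonormality to reduce $(e_i-e_{i'})^\top W_{\mathrm{ov}}^{(t+1)}x$ to $(e_i-e_{i'})^\top W_{\mathrm{ov}}^{(t)}x$ minus a scalar multiple of $[\mathrm{T}_0^{(t)}(x)]_i-[\mathrm{T}_0^{(t)}(x)]_{i'}$, which vanishes by the inductive hypothesis. Your write-up simply spells out in more detail what the paper states in one line.
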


\begin{proof}
    The proof follows directly from induction and the fact that
    \begin{align*}
        (e_i - e_{i'})^\top W_{\mathrm{{o}v}}^{(t+1)} x = (e_i - e_{i'})^\top W_{\mathrm{{o}v}}^{(t)} x - \frac{\eta_0 ( [\mathrm{T}_0^{(t)}]_i - [\mathrm{T}_0^{(t)}]_{i'} ) }{\|\nabla_{W_{\mathrm{{o}v}}}\Lc_0(W_{\mathrm{{o}v}}^{(t)})\|},\quad \forall i,i'\neq \mathrm{In}(x). 
    \end{align*}
\end{proof}

\begin{corollary}\label{coro:V nice}
    Under the settings in \Cref{prop: V converge}, let $T\geq 384\|W_{\mathrm{ov}}^*\|^5\log(2|\Vc|)\log T/\eta_0$, and $\Delta = T\eta_0/(4\|W_{\mathrm{ov}}^*\|^2)$
    \begin{align*}
     \left\{
     \begin{aligned}
     &(e_{\mathrm{I}\mathrm{n}(x)} - e_{i} )^\top W_{\mathrm{{o}v}} x \in (\Delta,  3\Delta), \forall i\neq \mathrm{I}\mathrm{n}(x) \\
     &(e_{i} - e_{i'} )^\top W_{\mathrm{{o}v}} x = 0, \forall i,i'\neq \mathrm{I}\mathrm{n}(x)
     \end{aligned}
     \right.
\end{align*}
\end{corollary}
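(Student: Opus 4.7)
The second identity, $(e_i - e_{i'})^\top W_{\mathrm{ov}}^{(T)} x = 0$ for every $i, i' \ne \mathrm{In}(x)$, is an invariant of the normalized gradient update and therefore follows immediately from \Cref{lemma: V init zero} applied at $t=T$. The substantive work is in the first identity.

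The plan is to combine the directional convergence of \Cref{prop: V converge} with an explicit solution of the max-margin problem~\eqref{eqn: opt V} under orthonormality. Writing $W_{\mathrm{ov}}^* = \sum_{x \in \Vc} v_x\,x^\top$ with $v_x \in \Rb^{|\Vc|}$, \Cref{assm:orthornomal} decouples \eqref{eqn: opt V} across $x$: each $v_x$ independently minimizes $\|v_x\|^2$ subject to $[v_x]_{\mathrm{In}(x)} - [v_x]_i \ge 1$ for all $i \ne \mathrm{In}(x)$. A one-line KKT argument shows that every margin constraint is active at the minimizer, and hence
\[
(e_{\mathrm{In}(x)} - e_i)^\top W_{\mathrm{ov}}^* x = 1 \qquad \forall x \in \Vc,\ \forall i \ne \mathrm{In}(x).
\]

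Next, I decompose $W_{\mathrm{ov}}^{(T)} = a\,W_{\mathrm{ov}}^* + W_\perp$ with $\langle W_\perp, W_{\mathrm{ov}}^*\rangle = 0$, and set $\epsilon_T := 12\|W_{\mathrm{ov}}^*\|^3\log(2|\Vc|)\log T/(T\eta_0)$. \Cref{prop: V converge} supplies both the alignment $\cos \ge 1 - \epsilon_T$ and the norm lower bound $\|W_{\mathrm{ov}}^{(T)}\| \ge T\eta_0/(2\|W_{\mathrm{ov}}^*\|)$, from which one derives $a \ge (1-\epsilon_T)\|W_{\mathrm{ov}}^{(T)}\|/\|W_{\mathrm{ov}}^*\|$ and $\|W_\perp\|^2 \le 2\epsilon_T\,\|W_{\mathrm{ov}}^{(T)}\|^2$. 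Plugging into
\[
(e_{\mathrm{In}(x)} - e_i)^\top W_{\mathrm{ov}}^{(T)} x = a + (e_{\mathrm{In}(x)} - e_i)^\top W_\perp x,
\]
the first summand simplifies to $a$ by the previous step, and Cauchy--Schwarz bounds the second summand in magnitude by $\sqrt{2}\,\|W_\perp\| \le 2\sqrt{\epsilon_T}\,\|W_{\mathrm{ov}}^{(T)}\|$.

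The main remaining obstacle is the constant chasing: the chosen threshold $T \ge 384\|W_{\mathrm{ov}}^*\|^5\log(2|\Vc|)\log T/\eta_0$ must be shown to drive $\sqrt{\epsilon_T}$ below a sufficiently small multiple of $1/\|W_{\mathrm{ov}}^*\|^2$, so that the perpendicular deviation is strictly dominated by $\Delta$, while the aligned component $a$ remains in the target window. This is a direct arithmetic check using $\Delta = T\eta_0/(4\|W_{\mathrm{ov}}^*\|^2)$ together with the norm bounds of \Cref{prop: V converge}, and yields $(e_{\mathrm{In}(x)} - e_i)^\top W_{\mathrm{ov}}^{(T)} x \in (\Delta, 3\Delta)$.
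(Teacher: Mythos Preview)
Your proposal is correct and follows essentially the same route as the paper: invoke \Cref{lemma: V init zero} for the second identity, then write $W_{\mathrm{ov}}^{(T)}$ as an aligned part plus a small deviation, control the deviation via the directional bound and norm bounds of \Cref{prop: V converge}, and finish with Cauchy--Schwarz and the arithmetic forced by the threshold on $T$. The only cosmetic difference is that you use the orthogonal decomposition $aW_{\mathrm{ov}}^*+W_\perp$ whereas the paper uses $\frac{\|W_{\mathrm{ov}}^{(T)}\|}{\|W_{\mathrm{ov}}^*\|}W_{\mathrm{ov}}^*+\|W_{\mathrm{ov}}^{(T)}\|\bigl(\tfrac{W_{\mathrm{ov}}^{(T)}}{\|W_{\mathrm{ov}}^{(T)}\|}-\tfrac{W_{\mathrm{ov}}^*}{\|W_{\mathrm{ov}}^*\|}\bigr)$; and you explicitly verify via KKT that $(e_{\mathrm{In}(x)}-e_i)^\top W_{\mathrm{ov}}^* x=1$, which the paper uses without separate justification (it is needed for the upper bound $3\Delta$, not just the lower bound).
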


\begin{proof}
    The second equality follows directly from \Cref{lemma: V init zero}. 

To show the first equation, we analyze
\begin{align*}
    (e_{\mathrm{I}\mathrm{n}(x)} & - e_{i} )^\top W_{\mathrm{{o}v}}^{(T)} x\\
    & = (e_{\mathrm{I}\mathrm{n}(x)}  - e_{i} )^\top \frac{W_{\mathrm{{o}v}}^*\|W_{\mathrm{{o}v}}^{(T)}\|}{\|W_{\mathrm{{o}v}}^*\|} x + \|W_{\mathrm{{o}v}}^{(T)}\|(e_{\mathrm{I}\mathrm{n}(x)} - e_{i} )^\top \left( \frac{ W_{\mathrm{{o}v}}^{(T)} }{\|W_{\mathrm{{o}v}}^{(T)}\|} - \frac{W_{\mathrm{{o}v}}^*}{\|W_{\mathrm{{o}v}}^*\|} \right) W_{\mathrm{{o}v}}^{(T)} x \\
    & \overset{(a)}= \frac{\|W_{\mathrm{{o}v}}^{(T)}\|}{\|W_{\mathrm{{o}v}}^*\|} - 2\sqrt{2}\|W_{\mathrm{{o}v}}^{(T)}\|\sqrt{\frac{12\|W_{\mathrm{ov}}^*\|^3\log(2|\Vc|)\log T}{ T\eta_0}}\\
    &\overset{(b)}\geq \frac{T\eta_0}{2\|W_{\mathrm{ov}}^*\|^2} -  \sqrt{24 T\eta_0 \|W_{\mathrm{ov}}^*\|\log(2|\Vc|)\log T } \\
    &\geq \frac{T\eta_0}{4\|W_{\mathrm{ov}}^*\|^2},
\end{align*}
    where $(a)$ follows from \Cref{prop: V converge}, and $(b)$ is due to \Cref{lemma: GD of V grows}.
  On the other hand, we also have
  \begin{align*}
      (e_{\mathrm{I}\mathrm{n}(x)}  - e_{i} )^\top W_{\mathrm{{o}v}}^{(T)} x &\leq \frac{\|W_{\mathrm{{o}v}}^{(T)}\|}{\|W_{\mathrm{{o}v}}^*\|} + 2\sqrt{2}\|W_{\mathrm{{o}v}}^{(T)}\|\sqrt{\frac{12\|W_{\mathrm{ov}}^*\|^3\log(2|\Vc|)\log T}{ T\eta_0}}\\
      &\leq \frac{3T\eta_0}{4\|W_{\mathrm{ov}}^*\|^2}
  \end{align*}
  The proof is finished.
\end{proof}

Thus, for simplicity, we further assume that $(e_{\mathrm{In}(x)} - e_i) W_{\mathrm{{o}v}}^{(T)} x = \Delta$ for all $x$, because  $(e_{\mathrm{In}(x)} - e_i) \hat{W}_{\mathrm{{o}v}} x =\Theta(\Delta)$ for large enough iteration. Next, we provide the general form of the projection of the gradient of Key-Query matrix $W_{\mathrm{kq}}$ follows from a notation for the token weight.

The token weight $\varphi_{\ell}^{(n,t)}$ of the token $x_{\ell}^{(n)}$ in the sentence $X^{(n)} = [x_1^{(n)},\ldots,x_L^{(n)}]$ under $\theta=(W_{\mathrm{ov}}^{(T)}, W_{\mathrm{kq}})$ is calculated as
\begin{align}
    \varphi_{\ell}^{(n,t)} = \frac{\exp\left((x_\ell^{(n)})^\top W_{\mathrm{kq}}  X_{-1}^{(n)}\right)}{\sum_{\ell'=1}^L \exp\left((x_{\ell'}^{(n)})^\top W_{\mathrm{kq}}  X_{-1}^{(n)}\right) }
\end{align}

\begin{lemma}[Projection of gradient of $W_{\mathrm{kq}}$]\label{lemma: proj of GD of KQ}
    If $W_{\mathrm{{o}v}}$ satisfies that
\begin{align*}
     \left\{
     \begin{aligned}
     &(e_{\mathrm{I}\mathrm{n}(x)} - e_{i} )^\top W_{\mathrm{{o}v}} x = \Delta, \forall i\neq \mathrm{I}\mathrm{n}(x) \\
     &(e_{i} - e_{i'} )^\top W_{\mathrm{{o}v}} x = 0, \forall i,i'\neq \mathrm{I}\mathrm{n}(x)
     \end{aligned}
     \right.
\end{align*}

we have 
\begin{align*}
    &\left\langle \nabla_{W_{\mathrm{kq}}} \Lc(\theta), W_{\mathrm{kq}}' \right\rangle \\
    &\quad =\Delta \sum_n\pi^{(n)}    ( [\mathrm{T}_{\theta}^{(n)} ]_{ \mathrm{In}( X^{(n)}) } -1 ) \sum_{\ell_*\in l(n)} \varphi_{\ell_*}^{(n,\theta)} \left(x_{\ell_*}^{(n)} - \sum_{\ell'} \varphi_{\ell'}^{(n,\theta)} x_{\ell'}^{(n)} \right)^\top W_{\mathrm{kq}}' X_{-1}^{(n)} \\
    &\quad\quad  + \Delta \sum_n\pi^{(n)} \sum_{\ell \notin l(n) }  [\mathrm{T}_{\theta}^{(n)}]_{\mathrm{In}(x_\ell^{(n)}) } \varphi_\ell^{(n,\theta)} \left(x_\ell^{(n)} - \sum_{\ell'}\varphi_{\ell'}^{(n,\theta)} x_{\ell'}^{(n)} \right)^\top W_{\mathrm{kq}}' X_{-1}^{(n)}.
\end{align*}

\end{lemma}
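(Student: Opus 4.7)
The plan is to substitute the gradient formula in \Cref{eqn: GD of KQ} into $\langle \nabla_{W_{\mathrm{kq}}}\Lc(\theta), W_{\mathrm{kq}}' \rangle$, reduce it to a scalar sum over sentences and token positions, and then use the hypotheses on $W_{\mathrm{ov}}$ to extract a factor of $\Delta$ cleanly. First, I would apply the Frobenius-inner-product identity $\langle u v^\top, B \rangle = u^\top B v$ with $v = X_{-1}^{(n)}$ and $u$ equal to the remaining rank-one factor of each summand; this converts the trace expression into $\sum_n \pi^{(n)} (g^{(n)})^\top \bigl(\mathrm{diag}(\varphi^{(n,\theta)}) - \varphi^{(n,\theta)} (\varphi^{(n,\theta)})^\top\bigr) h^{(n)}$, where I set $g^{(n)} := (X^{(n)})^\top W_{\mathrm{ov}}^\top (\mathrm{T}_\theta(X^{(n)}) - p^{(n)}) \in \Rb^{L^{(n)}}$ and $h^{(n)}_\ell := (x_\ell^{(n)})^\top W_{\mathrm{kq}}' X_{-1}^{(n)}$. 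The $\ell$-th component of the ``centered-softmax'' factor equals $\varphi^{(n,\theta)}_\ell \bigl(h^{(n)}_\ell - \sum_{\ell'} \varphi^{(n,\theta)}_{\ell'} h^{(n)}_{\ell'}\bigr)$, which exactly matches the $(x_\ell^{(n)} - \sum_{\ell'} \varphi^{(n,\theta)}_{\ell'} x_{\ell'}^{(n)})^\top W_{\mathrm{kq}}' X_{-1}^{(n)}$ factor appearing in the target formula.

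Next, I would simplify $g^{(n)}_\ell = (x_\ell^{(n)})^\top W_{\mathrm{ov}}^\top (\mathrm{T}_\theta(X^{(n)}) - p^{(n)})$ using the two stated constraints on $W_{\mathrm{ov}}$. Those constraints are equivalent to saying that every column $W_{\mathrm{ov}} x$ has the form $c(x) \cdot \mathbf{1}_{|\Vc|} + \Delta e_{\mathrm{In}(x)}$ for some scalar $c(x)$. Because $\mathrm{T}_\theta(X^{(n)})$ and $p^{(n)}$ are probability vectors whose difference is orthogonal to $\mathbf{1}_{|\Vc|}$, the constant portion drops out and I obtain $g^{(n)}_\ell = \Delta \bigl([\mathrm{T}_\theta^{(n)}]_{\mathrm{In}(x_\ell^{(n)})} - \mathbbm{1}\{\mathrm{In}(x_\ell^{(n)}) = \mathrm{In}(X^{(n)})\}\bigr)$. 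I would then partition the sum over $\ell$ using the equivalence $\ell \in l(n) \Leftrightarrow \mathrm{In}(x_\ell^{(n)}) = \mathrm{In}(X^{(n)})$: for $\ell_* \in l(n)$ the indicator is $1$ and $g^{(n)}_{\ell_*} = \Delta([\mathrm{T}_\theta^{(n)}]_{\mathrm{In}(X^{(n)})} - 1)$, while for $\ell \notin l(n)$ the indicator is $0$ and $g^{(n)}_\ell = \Delta [\mathrm{T}_\theta^{(n)}]_{\mathrm{In}(x_\ell^{(n)})}$. Collecting the two contributions reproduces the two-line expression in the statement.

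The argument is mostly bookkeeping, so the main delicate points are the two ``alignment'' observations rather than any inequality manipulation. The first is that the common-value structure of $W_{\mathrm{ov}} x$ is aligned with the simplex-tangent structure of $\mathrm{T}_\theta(X^{(n)}) - p^{(n)}$, which is what lets the constant $c(x) \mathbf{1}_{|\Vc|}$ vanish and isolates a single factor of $\Delta$. The second is the combinatorial identity $\ell \in l(n) \Leftrightarrow \mathrm{In}(x_\ell^{(n)}) = \mathrm{In}(X^{(n)})$, which relies on the collocation and ``no confused tokens'' parts of \Cref{assm: realizable dataset}: together these ensure that a position in a sentence is flagged as optimal precisely when its collocation image coincides with the sentence's true next token, so that the indicator $\mathbbm{1}\{\mathrm{In}(x_\ell^{(n)}) = \mathrm{In}(X^{(n)})\}$ partitions token positions into exactly the two index sets $l(n)$ and $\{1,\ldots,L^{(n)}\}\setminus l(n)$ used in the conclusion.
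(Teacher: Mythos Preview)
Your proposal is correct and takes essentially the same approach as the paper: both start from the gradient formula \Cref{eqn: GD of KQ}, reduce the Frobenius inner product to a sum over $n$ and $\ell$, simplify $(x_\ell^{(n)})^\top W_{\mathrm{ov}}^\top(\mathrm{T}_\theta^{(n)}-p^{(n)})$ via the two structural hypotheses on $W_{\mathrm{ov}}$, and then split into $\ell\in l(n)$ and $\ell\notin l(n)$ using the equivalence $\ell\in l(n)\Leftrightarrow \mathrm{In}(x_\ell^{(n)})=\mathrm{In}(X^{(n)})$. The only difference is cosmetic: you package the hypotheses as $W_{\mathrm{ov}}x=c(x)\mathbf{1}+\Delta e_{\mathrm{In}(x)}$ and kill the constant via orthogonality of $\mathrm{T}_\theta^{(n)}-p^{(n)}$ to $\mathbf{1}$, whereas the paper writes $\sum_i[\mathrm{T}_\theta^{(n)}]_i(e_i-e_{\mathrm{In}(X^{(n)})})^\top W_{\mathrm{ov}}x_\ell^{(n)}$ and does a case analysis on $i$; these are the same computation.
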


\begin{proof}
 Recall that $l(n)$ is the set of indices of the optimal tokens in the sample $X^{(n)}$. Thus, for any $\ell_*\in l(n)$, $\mathrm{In}(x_{\ell_*}^{(n)}) = \mathrm{In}(X^{(n)})$. In addition, we denote  $\mathrm{T}_\theta(X^{(n)})$  by $\mathrm{T}_\theta^{(n)}$ for simplicity.
  From \Cref{eqn: GD of KQ}, we have
\begin{align*}
    & \left\langle \nabla_{W_{\mathrm{kq}}} \Lc(\theta), W_{\mathrm{kq}}' \right\rangle \\
    & = \sum_n\pi^{(n)} \sum_{\ell}   (\mathrm{T}_{\theta}^{(n)} - p^{(n)})^\top W_{\mathrm{{o}v}} x_\ell^{(n)}  \varphi_\ell^{(n,\theta)} \left(x_\ell^{(n)} - \sum_{\ell'}\varphi_{\ell'}^{(n,\theta)} x_{\ell'}^{(n)} \right)^\top W_{\mathrm{kq}}' X_{-1}^{(n)} \\
    & \overset{(a)}= \sum_n\pi^{(n)} \sum_{\ell}   \sum_i[\mathrm{T}_{\theta}^{(n)}]_i(e_i - e_{\mathrm{In}(X^{(n)})} )^\top W_{\mathrm{{o}v}} x_\ell^{(n)} \varphi_\ell^{(n,\theta)} \left(x_\ell^{(n)} - \sum_{\ell'}\varphi_{\ell'}^{(n,\theta)} x_{\ell'}^{(n)} \right)^\top W_{\mathrm{kq}}' X_{-1}^{(n)} \\
    & = \sum_n\pi^{(n)} \sum_{\ell \in l(n)}   \sum_i[\mathrm{T}_{\theta}^{(n)}]_i(e_i - e_{\mathrm{In}(X^{(n)})} )^\top W_{\mathrm{{o}v}} x_\ell^{(n)} \varphi_\ell^{(n,\theta)} \left(x_\ell^{(n)} - \sum_{\ell'}\varphi_{\ell'}^{(n,\theta)} x_{\ell'}^{(n)} \right)^\top W_{\mathrm{kq}}' X_{-1}^{(n)} \\
    &\quad + \sum_n\pi^{(n)} \sum_{\ell \notin l(n)}   \sum_i[\mathrm{T}_{\theta}^{(n)}]_i(e_i - e_{\mathrm{In}(X^{(n)})} )^\top W_{\mathrm{{o}v}} x_\ell^{(n)} \varphi_\ell^{(n,\theta)} \left(x_\ell^{(n)} - \sum_{\ell'}\varphi_{\ell'}^{(n,\theta)} x_{\ell'}^{(n)} \right)^\top W_{\mathrm{kq}}' X_{-1}^{(n)} \\
    & = \sum_n\pi^{(n)} \sum_{\ell \in l(n)}   \sum_{i\neq \mathrm{In}(X^{(n)})} [\mathrm{T}_{\theta}^{(n)}]_i(-\Delta)  \varphi_\ell^{(n,\theta)} \left(x_\ell^{(n)} - \sum_{\ell'}\varphi_{\ell'}^{(n,\theta)} x_{\ell'}^{(n)} \right)^\top W_{\mathrm{kq}}' X_{-1}^{(n)} \\
    &\quad + \sum_n\pi^{(n)} \sum_{\ell \notin l(n)}    [\mathrm{T}_{\theta}^{(n)}]_{\mathrm{In}(x_\ell^{(n)})} \varphi_\ell^{(n,\theta)} \left(x_\ell^{(n)} - \sum_{\ell'}\varphi_{\ell'}^{(n,\theta)} x_{\ell'}^{(n)} \right)^\top W_{\mathrm{kq}}' X_{-1}^{(n)} \\
    & = \Delta \sum_n\pi^{(n)}    ( [\mathrm{T}_{\theta}^{(n)} ]_{ \mathrm{In}( X^{(n)}) } -1 ) \sum_{\ell_*\in l(n)} \varphi_{\ell_*}^{(n,\theta)} \left(x_{\ell_*}^{(n)} - \sum_{\ell'} \varphi_{\ell'}^{(n,\theta)} x_{\ell'}^{(n)} \right)^\top W_{\mathrm{kq}}' X_{-1}^{(n)} \\
    &\quad + \Delta \sum_n\pi^{(n)} \sum_{\ell \notin l(n) }  [\mathrm{T}_{\theta}^{(n)}]_{\mathrm{In}(x_\ell^{(n)}) } \varphi_\ell^{(n,\theta)} \left(x_\ell^{(n)} - \sum_{\ell'}\varphi_{\ell'}^{(n,\theta)} x_{\ell'}^{(n)} \right)^\top W_{\mathrm{kq}}' X_{-1}^{(n)},
\end{align*}
where $(a)$ is due to the fact that $\sum_{i\in[|\Vc|]}[\mathrm{T}_{\theta}^{(n)}]_i = 1$ for any $\theta, n$.

\end{proof}

\if{0}
\begin{align*}
    &\nabla_{W_{\mathrm{{o}v}}}\Lc(\theta) = \sum_n \pi^{(n)} \left( \mathrm{T}_{\theta}(X^{(n)}) -p^{(n)} \right)\left(\sum_\ell x_\ell\phi_\ell\left( (X^{(n)})^\top W_{\mathrm{kq}}X^{(n)}_{-1} \right)\right)^\top\\
    & = \sum_n \pi^{(n)} \left( \mathrm{T}_{\theta}(X^{(n)}) -p^{(n)} \right) \phi^{(n)}(\theta)^\top  (X^{(n)})^\top \\
    &\nabla_{W_{\mathrm{kq}}}\Lc(\theta) = \sum_n \pi^{(n)} X^{(n)}\left( \mathrm{diag}(\phi^{(n)}(\theta)) - \phi^{(n)}(\theta)\phi^{(n)}(\theta)^\top \right)(X^{(n)})^\top W_{\mathrm{{o}v}}^\top\left( \mathrm{T}_{\theta}(X^{(n)}) -p^{(n)} \right)(X_{-1}^{(n)})^\top
\end{align*}
\fi

{\bf Main Steps}

The proof consists of three main steps. First, we show that the optimal token weight has a lower bound. Then, we show that the gradient aligns with the optimal direction. Third, we show that the norm of Key-Query matrix grows linearly. Combining these three steps, we can prove the \Cref{thm: IB rate} and \Cref{thm:loss}.

Recall that the updating rule for $W_{\mathrm{kq}}^{(t)}$ is 
\begin{align}
    W_{\mathrm{kq}}^{(t+1)} = W_{\mathrm{kq}}^{(t)} - \eta \frac{ \nabla_{W_{\mathrm{kq}}}\Lc(\theta^{(t)}) }{\|\nabla_{W_{\mathrm{kq}}}\Lc(\theta^{(t)})\|}. \label{eqn: KQ update rule}
\end{align}

\subsection{Step 1} We first show that the optiaml token weight has a lower bound during the training.
\begin{lemma}[Lower bound of optimal token weight]\label{lemma: opt weight lower bound}
    Under the zero initialization and updating rule \Cref{eqn: KQ update rule}, for any iteration $t$, and any sample $X^{(n)}$, if $l(n)$ is the set of indices of the optimal token in $X^{(n)}$, the following inequality holds.
    \[\varphi_{\ell}^{(n,t)} \geq \varphi_{\ell}^{(n,0)}  \geq  1/L_{\max}, \quad \forall \ell\in l(n)\]
\end{lemma}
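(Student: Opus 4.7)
The plan is to prove both inequalities by induction on $t$. The base case is immediate: since $W_{\mathrm{kq}}^{(0)}=0$, the softmax is uniform, so $\varphi_{\ell}^{(n,0)} = 1/L^{(n)} \geq 1/L_{\max}$ for every position $\ell$ in every sentence $X^{(n)}$, which both handles $t=0$ and gives the second inequality in the statement.

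For the inductive step, I would aim to show the monotonicity $\varphi_{\ell_*}^{(n,t+1)} \geq \varphi_{\ell_*}^{(n,t)}$ for every $\ell_* \in l(n)$. Rewriting the softmax in pairwise form $\varphi_{\ell_*}^{(n,t)} = \bigl(\sum_{\ell'}\exp(-(x_{\ell_*}^{(n)}-x_{\ell'}^{(n)})^\top W_{\mathrm{kq}}^{(t)} X_{-1}^{(n)})\bigr)^{-1}$, it suffices to show that every pairwise margin is non-decreasing along the NGD step, which by the update rule \Cref{eqn: KQ update rule} reduces to
\[
(x_{\ell_*}^{(n)} - x_{\ell'}^{(n)})^\top \nabla_{W_{\mathrm{kq}}}\Lc(\theta^{(t)}) X_{-1}^{(n)} \leq 0, \qquad \forall \ell_* \in l(n),\ \ell' \in [L^{(n)}].
\]
Pairs with $\ell' \in l(n)$ are trivial because, by the collocation and injectivity of $\mathrm{n}$, all optimal positions within a sentence carry the same token $x_*:=x_{\ell_*}^{(n)}$, so the left-hand side vanishes. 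For $\ell'\notin l(n)$ I would establish the stronger splitting suggested by the sketch,
\[
(x_{\ell_*}^{(n)})^\top \nabla_{W_{\mathrm{kq}}}\Lc(\theta^{(t)}) X_{-1}^{(n)} \leq 0 \leq (x_{\ell'}^{(n)})^\top \nabla_{W_{\mathrm{kq}}}\Lc(\theta^{(t)}) X_{-1}^{(n)}.
\]

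To evaluate each scalar, I would apply \Cref{lemma: proj of GD of KQ} with test matrix $W_{\mathrm{kq}}' = x_{\ell}^{(n)}(X_{-1}^{(n)})^\top$. Orthonormality (\Cref{assm:orthornomal}) collapses the outer sum over training samples to those $m$ with $X_{-1}^{(m)}=X_{-1}^{(n)}$ and turns each token inner product into a $0/1$ indicator. Combined with the collocation, the sum over such $m$ splits into two categories: (i) samples $m$ whose unique optimal token equals $x_*$ (same next token as $n$), contributing terms with a manifestly correct sign, and (ii) samples $m$ in which $x_*$ appears only as a non-optimal token (different next token, same query). Within each category, the probability factors $1 - [\mathrm{T}_\theta^{(m)}]_{\mathrm{In}(X^{(m)})}$ and $[\mathrm{T}_\theta^{(m)}]_{\mathrm{In}(x_\ell^{(m)})}$ are in $[0,1]$, while the softmax weights at optimal positions are controlled by the inductive hypothesis $\varphi_{\ell_*'}^{(m,t)} \geq 1/L_{\max}$.

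The main obstacle is the sign analysis in category (ii). The non-optimal occurrences of $x_*$ in such a sample $m$ contribute a ``wrong-sign'' term to $(x_{\ell_*}^{(n)})^\top \nabla_{W_{\mathrm{kq}}}\Lc(\theta^{(t)}) X_{-1}^{(n)}$. This is precisely where \Cref{assm: opt number large} is used: in each such $m$, the number of optimal positions is at least the number of positions of $x_*$, so combined with the inductive lower bound $\varphi_{\ell_*'}^{(m,t)}\geq 1/L_{\max}$ on each optimal position, the negative contribution from the optimal positions dominates the positive contribution from the non-optimal $x_*$ positions. A fully symmetric argument, exchanging the roles of $x_{\ell_*}^{(n)}$ and $x_{\ell'}^{(n)}$, yields the companion inequality $0 \leq (x_{\ell'}^{(n)})^\top \nabla_{W_{\mathrm{kq}}}\Lc(\theta^{(t)}) X_{-1}^{(n)}$ and closes the induction.
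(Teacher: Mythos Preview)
Your overall strategy—induction on $t$, splitting the pairwise margin into the two scalar projections and controlling each sign via \Cref{lemma: proj of GD of KQ}—matches the paper. But the induction hypothesis you carry is too weak to close the argument. You only assume $\varphi_{\ell_*}^{(n,t)}\geq 1/L_{\max}$, whereas the paper inducts on the stronger statement $\varphi_{\ell_*}^{(n,t)}\geq \varphi_{\ell}^{(n,t)}$ for every $\ell_*\in l(n)$ and $\ell\notin l(n)$ (equivalently, the ratio $\varphi_{\ell_*}/\varphi_\ell$ is nondecreasing). This pairwise comparison, combined with \Cref{assm: opt number large}, is precisely what is needed to show $(x_{\ell_0}^{(n)})^\top \nabla_{W_{\mathrm{kq}}}\Lc(\theta^{(t)}) X_{-1}^{(n)}\geq 0$ for a non-optimal $x_{\ell_0}^{(n)}$: after the orthonormality reductions one is left with bounding, for each $m$ with the same query,
\[
(1-[\mathrm{T}_{\theta^{(t)}}^{(m)}]_{\mathrm{In}(X^{(m)})})\,\varphi_+^{(m,t)}\;-\;\sum_{\ell\notin l(m)} [\mathrm{T}_{\theta^{(t)}}^{(m)}]_{\mathrm{In}(x_\ell^{(m)})}\,\varphi_\ell^{(m,t)}\;\geq\;0,
\]
and this fails in general under only $\varphi_{\ell_*}\geq 1/L_{\max}$. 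For a concrete obstruction, take $L^{(m)}=6$, $|l(m)|=2$, two copies of a non-optimal token $y$ and two copies of another non-optimal token $z$, with $\varphi_{x_*}=1/6$, $\varphi_y=1/4$, $\varphi_z=1/12$ (consistent with your hypothesis and \Cref{assm: opt number large}); then choosing $[\mathrm{T}]$ concentrated on $\mathrm{In}(y)$ makes the displayed quantity negative.

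There is also a mislocation of the difficulty. Your ``category (ii)''—samples $m$ with the same query in which $x_*$ appears as a non-optimal token—is empty: under \Cref{assm: realizable dataset} a token that is optimal under $>_{x^q}$ can never occur at a non-optimal position in any sentence with query $x^q$ (this is exactly the ``no confused tokens'' clause). So the optimal-side inequality $(x_{\ell_*}^{(n)})^\top\nabla\Lc\, X_{-1}^{(n)}\leq 0$ is in fact the easy part, and the real work is the non-optimal side, which is not symmetric and requires the strengthened hypothesis above. Once you add $\varphi_{\ell_*}^{(n,t)}\geq\varphi_\ell^{(n,t)}$ to the induction (and verify it is preserved by checking that the same two sign inequalities force $\varphi_{\ell_*}^{(n,t+1)}/\varphi_\ell^{(n,t+1)}\geq\varphi_{\ell_*}^{(n,t)}/\varphi_\ell^{(n,t)}$), the rest of your plan goes through.
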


\begin{proof}

First, we introduce the notation that $\varphi_+^{(n,t)} = \sum_{\ell_*\in l(n)}\varphi_{\ell_*}^{(n,t)}$ as the summation of optimal token weights, and $\varphi_{-}^{(n,t)} = 1-\varphi_{+}^{(n,t)}$ as the summation of non-optimal token weights.

At $t=0$, due to zero initialization, we have $\varphi_\ell^{(n,0)} = 1/L^{(n)}$. Moreover, by \Cref{assm: opt number large}, for any $\ell\notin l(n)$, we have $\varphi_{+}^{(n,0)} \geq q_n(x_\ell) \varphi_\ell^{(n,0)}$.

We perform induction the hypothesis: $\varphi_{+}^{(n,t)} \geq \varphi_{+}^{(n,t-1)}$ and $ \varphi_{\ell_*}^{(n,t)} \geq  \varphi_\ell^{(n,t)} $ for all $\ell_*\in l(n) $ and $\ell\notin l(n)$.
 
Suppose the hypothesis holds for iteration $t$. Let $x_{\ell_*}^{(n)}$ be the optimal token in the sequence $X^{(n)}$.

Fix a sample $X^{(n')}$. we have
\begin{align*}
    &(x_{\ell_*}^{(n')} )^\top \nabla_{W_{\mathrm{kq}}}\Lc^{(t)}(\theta^{(t)}) X_{-1}^{(n')} \\
    & \quad = \sum_{n}\pi^{(n)} ([\mathrm{T}_{\theta}^{(n)}]_{\mathrm{In}(X^{(n)})} - 1) \varphi_{+}^{(n,t)} \left( \sum_{\ell'\notin l(n)} \varphi_{\ell'}^{(n,t)} (x_{\ell_*}^{(n)} - x_{\ell'}^{(n)})^\top x_{\ell_*}^{(n')} \right) \left\langle X_{-1}^{(n)}, X_{-1}^{(n')}\right\rangle \\
    &\quad\quad + \sum_{n}\pi^{(n)} \sum_{\ell\notin l(n)}[\mathrm{T}_{\theta}^{(n)}]_{\mathrm{In}(x_\ell^{(n)})} \varphi_{\ell}^{(n,t)} \left( \sum_{\ell'} \varphi_{\ell'}^{(n,t)} (x_{\ell}^{(n)} - x_{\ell'}^{(n)})^\top x_{\ell_*}^{(n')} \right) \left\langle X_{-1}^{(n)}, X_{-1}^{(n')}\right\rangle 
\end{align*}
Because $\sum_{\ell'\notin l(n)} (x_{\ell'}^{(n)}) x_{\ell_*}^{(n')} = 0$ due to \Cref{assm: realizable dataset}, and $(x_{\ell_*}^{(n)})^\top x_{\ell_*}^{(n')} \geq 0$, we immediately have 
\[ (x_{\ell_*}^{(n')} )^\top \nabla_{W_{\mathrm{kq}}}\Lc^{(t)}(\theta^{(t)}) X_{-1}^{(n')}\leq 0. \]

Let $x_{\ell_0}^{(n')}$ be any non-optiaml token in the sequence $X^{(n')}$. Then, we have
\begin{align*}
    & (x_{\ell_0}^{(n')})^\top \nabla_{W_{\mathrm{kq}}}\Lc^{(t)}(\theta^{(t)}) X_{-1}^{(n')}  \\
    & =  \sum_{n}\pi^{(n)} ([\mathrm{T}_{\theta}^{(n)}]_{\mathrm{In}(X^{(n)})} - 1) \varphi_{+}^{(n,t)} \left( \sum_{\ell'\notin l(n)} \varphi_{\ell'}^{(n,t)} (x_{\ell_*}^{(n)} - x_{\ell'}^{(n)})^\top x_{\ell_0}^{(n')} \right) \left\langle X_{-1}^{(n)}, X_{-1}^{(n')}\right\rangle \\
    &\quad + \sum_{n} \pi^{(n)} \sum_{\ell\notin l(n)}[\mathrm{T}_{\theta}^{(n)}]_{\mathrm{In}(x_\ell^{(n)})} \varphi_{\ell}^{(n,t)} \left( \sum_{\ell'} \varphi_{\ell'}^{(n,t)} (x_{\ell}^{(n)} - x_{\ell'}^{(n)})^\top x_{\ell_0}^{(n')} \right) \left\langle X_{-1}^{(n)}, X_{-1}^{(n')}\right\rangle  \\
    & = \sum_{n}\pi^{(n)} ([\mathrm{T}_{\theta}^{(n)}]_{\mathrm{In}(X^{(n)})} - 1) \varphi_{+}^{(n,t)} \left( \sum_{\ell'\notin l(n)} \varphi_{\ell'}^{(n,t)} ( - x_{\ell'}^{(n)})^\top x_{\ell_0}^{(n')} \right) \left\langle X_{-1}^{(n)}, X_{-1}^{(n')}\right\rangle  \\
    &\quad + \sum_{n}\pi^{(n)} \sum_{\ell\notin l(n)}[\mathrm{T}_{\theta}^{(n)}]_{\mathrm{In}(x_\ell^{(n)})} \varphi_{\ell}^{(n,t)} \left( \sum_{\ell' \notin l(n)} \varphi_{\ell'}^{(n,t)} (x_{\ell}^{(n)} - x_{\ell'}^{(n)})^\top x_{\ell_0}^{(n')} \right) \left\langle X_{-1}^{(n)}, X_{-1}^{(n')}\right\rangle  \\
    &\geq \sum_{n}\pi^{(n)} \left( (1 - [\mathrm{T}_{\theta}^{(n)}]_{\mathrm{In}(X^{(n)})} ) \varphi_{\ell_+}^{(n,t)} - \sum_{\ell\notin l(n)}[\mathrm{T}_{\theta}^{(n)}]_{\mathrm{In}(x_\ell^{(n)})} \varphi_{\ell}^{(n,t)}  \right) \left( \sum_{\ell'\notin l(n)} \varphi_{\ell'}^{(n,t)} (x_{\ell'}^{(n)})^\top x_{\ell_0}^{(n')} \right) \left\langle X_{-1}^{(n)}, X_{-1}^{(n')}\right\rangle  \\
    &\geq 0,
\end{align*}
where the last inequality is due to \Cref{assm: opt number large}, the induction hypothesis and $\sum_{i\in[|\Vc|]}[\mathrm{T}_{\theta}^{(n)}]_i = 1$

Therefore, for any $n$, we have

\begin{align*}
    \varphi_{\ell_*}^{(n,t+1)} &= \frac{ \exp\left( (x_{\ell_*}^{(n)} )^\top W_{\mathrm{kq}}^{(t+1)} X_{-1}^{(n)} \right) }{ \sum_{\ell} \exp\left( (x_{\ell}^{(n)})^\top W_{\mathrm{kq}}^{(t+1)} X_{-1}^{(n)}\right) } \\
    & = \frac{ \exp\left( (x_{\ell_*}^{(n)})^\top W_{\mathrm{kq}}^{(t)} X_{-1}^{(n)} - \eta  (x_{\ell_*}^{(n)})^\top \nabla_{W_{\mathrm{kq}}} \Lc^{(t)}(\theta^{(t)}) X_{-1}^{(n)} /\|\nabla_{W_{\mathrm{kq}}}\Lc^{(t)}(\theta^{(t)})\| \right) }{ \sum_{\ell} \exp\left( (x_{\ell}^{(n)})^\top W_{\mathrm{kq}}^{(t)} X_{-1}^{(n)} - \eta  (x_{\ell}^{(n)} )^\top \nabla_{W_{\mathrm{kq}}}\Lc^{(t)}(\theta^{(t)}) X_{-1}^{(n)} /\|\nabla_{W_{\mathrm{kq}}}\Lc^{(t)}(\theta^{(t)})\| \right) } \\
    & = \frac{ \exp\left( (x_{\ell_*}^{(n)})^\top W_{\mathrm{kq}}^{(t)} X_{-1}^{(n)} \right) }{ \sum_{\ell} \exp\left( (x_{\ell}^{(n)})^\top W_{\mathrm{kq}}^{(t)} X_{-1}^{(n)} + \eta  (x_{\ell_*}^{(n)} - x_{\ell}^{(n)} )^\top \nabla_{W_{\mathrm{kq}}}\Lc^{(t)}(\theta^{(t)}) X_{-1}^{(n)} /\|\nabla_{W_{\mathrm{kq}}}\Lc^{(t)}(\theta^{(t)})\| \right) } \\
    & \geq  \frac{ \exp\left( (x_{\ell_*}^{(n)})^\top W_{\mathrm{kq}}^{(t)} X_{-1}^{(n)} \right) }{ \sum_{\ell} \exp\left( (x_{\ell}^{(n)})^\top W_{\mathrm{kq}}^{(t)} X_{-1}^{(n)} \right) } \\
    & = \varphi_{\ell_*}^{(n,t)},
\end{align*}

which implies that $\varphi_{+}^{(n, t+1)} \geq \varphi_+^{(n,t)}$. 

For the second argument in the hypothesis, we examine $\varphi_{\ell_*}^{(n,t+1)}/\varphi_{\ell}^{(n,t+1)}$ for any $\ell\notin l(n)$. We have
\begin{align*}
    \frac{\varphi_{\ell_*}^{(n,t+1)}}{\varphi_{\ell}^{(n,t+1)}} &= \exp\left( (x_{\ell_*}^{(n)} - x_\ell^{(n)})^\top W_{\mathrm{kq}}^{(t+1)} X_{-1}^{(n)}
    \right) \\
    & = \exp\left( (x_{\ell_*}^{(n)} - x_\ell^{(n)})^\top W_{\mathrm{kq}}^{(t)} X_{-1}^{(n)}
    \right) \exp\left( -\frac{\eta}{\|\nabla_{W_{\mathrm{kq}}}\Lc^{(t)}(\theta^{(t)})\|} (x_{\ell_*}^{(n)} - x_\ell^{(n)})^\top \nabla_{W_{\mathrm{kq}}}\Lc^{(t)}(\theta^{(t)}) X_{-1}^{(n)}
    \right)  \\
    & \geq \frac{\varphi_{\ell_*}^{(n,t)}}{\varphi_{\ell}^{(n,t)}}\\
    &\geq 1.
\end{align*}

The proof is finished.

\end{proof}

\subsection{Step 2} The following lemma shows that the norm of the Key-Query Matrix increases linearly with the number of iterations. 

\begin{lemma}\label{lemma: GD of KQ grows}
Under the initialization $W_{\mathrm{kq}}^{(0)}$ and the updating rule \Cref{eqn: KQ update rule}, for each iteration $t$, the following inequality holds.

    \[ t\eta + \|W_{\mathrm{kq}}^{(0)}\| \geq \|W_{\mathrm{kq}}^{(t)}\| \geq \frac{ t \eta  }{2 L_{\max} \|W_{\mathrm{kq}}^*\| } - \|W_{\mathrm{kq}}^{(0)}\|. \]

\end{lemma}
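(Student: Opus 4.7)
\textbf{Proof plan for Lemma \ref{lemma: GD of KQ grows}.}
The upper bound $\|W_{\mathrm{kq}}^{(t)}\| \leq \|W_{\mathrm{kq}}^{(0)}\| + t\eta$ is immediate from the triangle inequality, since each NGD update adds a matrix of Frobenius norm exactly $\eta$. For the lower bound, the plan is to mirror the proof of Lemma \ref{lemma: GD of V grows}: establish at every iteration the cosine-similarity bound
\[
\left\langle -\frac{\nabla_{W_{\mathrm{kq}}}\Lc(\theta^{(t)})}{\|\nabla_{W_{\mathrm{kq}}}\Lc(\theta^{(t)})\|},\; \frac{W_{\mathrm{kq}}^*}{\|W_{\mathrm{kq}}^*\|}\right\rangle \;\geq\; \frac{1}{2L_{\max}\|W_{\mathrm{kq}}^*\|},
\]
and then unroll the update and project onto $W_{\mathrm{kq}}^*/\|W_{\mathrm{kq}}^*\|$ to get $\|W_{\mathrm{kq}}^{(t)}\| \geq \langle W_{\mathrm{kq}}^{(t)}, W_{\mathrm{kq}}^*/\|W_{\mathrm{kq}}^*\|\rangle \geq -\|W_{\mathrm{kq}}^{(0)}\| + t\eta/(2L_{\max}\|W_{\mathrm{kq}}^*\|)$, exactly as in Lemma \ref{lemma: GD of V grows}.

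The cosine bound is obtained by applying Lemma \ref{lemma: proj of GD of KQ} twice. First, with $W_{\mathrm{kq}}'=W_{\mathrm{kq}}^*$, I would rewrite $\sum_{\ell_*\in l(n)}\varphi_{\ell_*}(x_{\ell_*}-\sum_{\ell'}\varphi_{\ell'}x_{\ell'}) = \sum_{\ell_*,\ell'}\varphi_{\ell_*}\varphi_{\ell'}(x_{\ell_*}-x_{\ell'})$ and invoke the margin constraint $(x_{\ell_*}-x_{\ell})^\top W_{\mathrm{kq}}^* X_{-1}^{(n)}\geq 1$ for $\ell_*\in l(n)$, $\ell\notin l(n)$. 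Combined with $[\mathrm{T}_\theta^{(n)}]_{\mathrm{In}(X^{(n)})}-1\le 0$, this produces a strictly negative optimal-term contribution of magnitude at least $\Delta\sum_n\pi^{(n)}(1-[\mathrm{T}_\theta^{(n)}]_{\mathrm{In}(X^{(n)})})\varphi_+^{(n,t)}\varphi_-^{(n,t)}$, where $\varphi_+^{(n,t)}$ and $\varphi_-^{(n,t)}$ denote the total optimal and non-optimal attention weights. For the non-optimal term I would use the splitting $x_\ell-\sum_{\ell'}\varphi_{\ell'}x_{\ell'} = \sum_{\ell_*}\varphi_{\ell_*}(x_\ell-x_{\ell_*}) + \sum_{\ell''\notin l(n)}\varphi_{\ell''}(x_\ell-x_{\ell''})$; the first piece again gives a negative contribution by the margin, while the residual non-optimal-vs-non-optimal piece is absorbed using Step~1 (Lemma \ref{lemma: opt weight lower bound}) together with Assumption \ref{assm: opt number large}, which imply $\sum_{\ell':\mathrm{In}(x_{\ell'})=\mathrm{In}(x_\ell)}\varphi_{\ell'}\le \varphi_+^{(n,t)}$ and hence $[\mathrm{T}_\theta^{(n)}]_{\mathrm{In}(x_\ell)}\le [\mathrm{T}_\theta^{(n)}]_{\mathrm{In}(X^{(n)})}$. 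Second, applying Lemma \ref{lemma: proj of GD of KQ} with $W_{\mathrm{kq}}'=-\nabla_{W_{\mathrm{kq}}}\Lc(\theta^{(t)})/\|\nabla_{W_{\mathrm{kq}}}\Lc(\theta^{(t)})\|$ and using $\|x_\ell-\sum_{\ell'}\varphi_{\ell'}x_{\ell'}\|\le\sqrt{2}$ under Assumption \ref{assm:orthornomal}, yields an upper bound on $\|\nabla_{W_{\mathrm{kq}}}\Lc(\theta^{(t)})\|$ of the same functional form. Finally, Step~1's guarantee $\varphi_+^{(n,t)}\geq 1/L_{\max}$ converts the $\varphi_+^{(n,t)}$ factor in the negative projection into $1/L_{\max}$, yielding the ratio $1/(2L_{\max}\|W_{\mathrm{kq}}^*\|)$.

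\textbf{Main obstacle.} The delicate part is the second term of Lemma \ref{lemma: proj of GD of KQ}: non-optimal-vs-non-optimal token pairs are not governed by the margin defining $W_{\mathrm{kq}}^*$ in \Cref{eqn: opt KQ}, so $(x_\ell-x_{\ell''})^\top W_{\mathrm{kq}}^* X_{-1}^{(n)}$ is a priori uncontrolled in sign. Showing that this residual is dominated by the optimal-vs-non-optimal contribution is precisely where Step~1's monotonicity $\varphi_{\ell_*}\geq\varphi_\ell$ and the multiplicity bound in Assumption \ref{assm: opt number large} are essential, via the induced inequality $[\mathrm{T}_\theta^{(n)}]_{\mathrm{In}(x_\ell)}\le [\mathrm{T}_\theta^{(n)}]_{\mathrm{In}(X^{(n)})}$.
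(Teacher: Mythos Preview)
Your overall strategy is correct and coincides with the paper's: bound $-\langle\nabla_{W_{\mathrm{kq}}}\Lc(\theta^{(t)}),W_{\mathrm{kq}}^*\rangle$ from below and $\|\nabla_{W_{\mathrm{kq}}}\Lc(\theta^{(t)})\|$ from above via \Cref{lemma: proj of GD of KQ}, take the ratio, use $\varphi_+^{(n,t)}\ge 1/L_{\max}$ from \Cref{lemma: opt weight lower bound}, and then unroll the NGD recursion. The upper bound via triangle inequality and the unrolling step are exactly as you say.

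The genuine gap is in your treatment of the non-optimal-versus-non-optimal residual. You correctly flag it as the obstacle, but the proposed fix does not work. The inequality $[\mathrm{T}_\theta^{(n)}]_{\mathrm{In}(x_\ell^{(n)})}\le [\mathrm{T}_\theta^{(n)}]_{\mathrm{In}(X^{(n)})}$ (which indeed follows from \Cref{lemma: opt weight lower bound} and \Cref{assm: opt number large}) compares each non-optimal coordinate of $\mathrm{T}_\theta^{(n)}$ to the optimal one, but it says nothing about the relative ordering of two \emph{non-optimal} coordinates, and more importantly it gives you no handle on the sign of $(x_\ell^{(n)}-x_{\ell''}^{(n)})^\top W_{\mathrm{kq}}^* X_{-1}^{(n)}$ for $\ell,\ell''\notin l(n)$. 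Even after antisymmetrizing, the residual
\[
\tfrac{1}{2}\sum_{\ell,\ell''\notin l(n)}\bigl([\mathrm{T}_\theta^{(n)}]_{\mathrm{In}(x_\ell^{(n)})}-[\mathrm{T}_\theta^{(n)}]_{\mathrm{In}(x_{\ell''}^{(n)})}\bigr)\varphi_\ell\varphi_{\ell''}\,(x_\ell^{(n)}-x_{\ell''}^{(n)})^\top W_{\mathrm{kq}}^* X_{-1}^{(n)}
\]
has no definite sign, and a crude absolute-value bound would introduce an extra $\|W_{\mathrm{kq}}^*\|$ factor that destroys the target constant $1/(2L_{\max}\|W_{\mathrm{kq}}^*\|)$.

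The paper does not absorb this term; it \emph{eliminates} it. The missing ingredient is the explicit structure of the max-margin solution under \Cref{assm:orthornomal} (stated as \Cref{prop:generalization}/\Cref{prop:opt kq property}): for every query $X_{-1}^{(n)}$ one has $(x_\ell^{(n)}-x_{\ell''}^{(n)})^\top W_{\mathrm{kq}}^* X_{-1}^{(n)}=0$ whenever $\ell,\ell''\notin l(n)$. With this in hand the non-optimal term in $\langle\nabla_{W_{\mathrm{kq}}}\Lc,W_{\mathrm{kq}}^*\rangle$ collapses to $\sum_{\ell\notin l(n)}[\mathrm{T}_\theta^{(n)}]_{\mathrm{In}(x_\ell^{(n)})}\varphi_\ell\sum_{\ell'\in l(n)}\varphi_{\ell'}(a_\ell^{*}-a_{\ell'}^{*})\le -\varphi_+^{(n,t)}\sum_{\ell\notin l(n)}[\mathrm{T}_\theta^{(n)}]_{\mathrm{In}(x_\ell^{(n)})}\varphi_\ell$, which is exactly the form needed to pair against the norm bound and extract the factor $\min_n\varphi_+^{(n,t)}\ge 1/L_{\max}$. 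So the only repair you need is to replace the ``absorption via $[\mathrm{T}_\theta^{(n)}]_{\mathrm{In}(x_\ell)}\le[\mathrm{T}_\theta^{(n)}]_{\mathrm{In}(X^{(n)})}$'' step by an appeal to \Cref{prop:opt kq property}; after that your plan goes through verbatim.
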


\begin{proof}

We examine the gradient $\nabla_{W_{\mathrm{kq}}} \Lc(\theta)$ projected onto the optimal direction $W_{\mathrm{kq}}^{*}/\|W_{\mathrm{kq}}^*\|$. 
\begin{align*}
    &\left\langle \nabla_{W_{\mathrm{kq}}} \Lc(\theta^{(t)}) , W_{\mathrm{kq}}^* \right\rangle \\
    &\quad = \sum_n\pi^{(n)} \Delta  \sum_{\ell_* \in l(n)} (  [\mathrm{T}_{\theta^{(t)}}^{(n)}]_{\mathrm{In}(x_{\ell_*}^{(n)})} - 1 )  \varphi_{\ell_*}^{(n,t)} \left(a_{\ell_*}^{(n,*)} - \sum_{\ell'}\varphi_{\ell'}^{(n, t)} a_{\ell'}^{(n,*)} \right)  \\
    & \quad \quad + \sum_n\pi^{(n)} \Delta  \sum_{\ell \notin l(n)}    [\mathrm{T}_{\theta^{(t)}}^{(n)}]_{ \mathrm{In}(x_\ell^{(n)})}       \varphi_\ell^{(n,t)}   \left( a_{\ell}^{(n,*)} - \sum_{\ell'}\varphi_{\ell'}^{(n,t)}a_{\ell'}^{(n,*)} \right) \\ 
    & \quad =  \sum_n\pi^{(n)} \Delta    (  [\mathrm{T}_{\theta^{(t)}}^{(n)}]_{\mathrm{In}(X^{(n)})} - 1 )  \varphi_+^{(n,t)} \left( \sum_{\ell' \notin l(n)} \varphi_{\ell'}^{(n,t)} (a_{\ell_*}^{(n,*)} - a_{\ell'}^{(n,*)} ) \right)\\
    & \quad \quad + \sum_n\pi^{(n)} \Delta  \sum_{\ell \notin l(n)}    [\mathrm{T}_{\theta^{(t)}}^{(n)}]_{ \mathrm{In}(x_\ell^{(n)})}       \varphi_\ell^{(n,t)}  \left(\sum_{\ell'\in l(n)}\varphi_{\ell'}^{(n,t)}( a_\ell^{(n,*)} - a_{\ell'}^{(n,*)} ) \right)  \\ 
    & \quad \leq  \sum_n\pi^{(n)} \Delta  ( [\mathrm{T}_{\theta^{(t)}}^{(n)}]_{\mathrm{In}(X^{(n)})} - 1 )  \varphi_+^{(n,t)} \varphi_{-}^{(n,t)} \\
    & \quad \quad + \sum_n\pi^{(n)} \Delta  \sum_{\ell \notin l(n)}    [\mathrm{T}_{\theta^{(t)}}^{(n)}]_{ \mathrm{In}(x_\ell^{(n)})}       \varphi_\ell^{(n,t)}  (- \varphi_{+}^{(n,t)}),
\end{align*}
where $\varphi_+^{(n,t)} = \sum_{\ell_*\in l(n)}\varphi_{\ell_*}^{(n,t)}$ is the summation of optimal token weights, and $\varphi_{-}^{(n,t)} = 1-\varphi_{+}^{(n,t)}$ is the summation of non-optimal token weights.

On the other hand,

\begin{align*}
    &\|\nabla_{W_{\mathrm{kq}}}\Lc(\theta^{(t)})\| \\
    &\quad = \left\langle \nabla_{W_{\mathrm{kq}}}\Lc(\theta^{(t)}), \frac{\nabla_{W_{\mathrm{kq}}}\Lc(\theta^{(t)})}{\|\nabla_{W_{\mathrm{kq}}}\Lc(\theta^{(t)})\|}  \right\rangle \\
    & \quad = \sum_n\pi^{(n)} \Delta  \sum_{\ell_* \in l(n)} (  [\mathrm{T}_{\theta^{(t)}}^{(n)}]_{\mathrm{In}(x_{\ell_*}^{(n)})} - 1 )  \varphi_{\ell_*}^{(n,t)} \left(x_{\ell_*}^{(n)} - \sum_{\ell'}\varphi_{\ell'}^{(n, t)} x_{\ell'}^{(n)} \right)^\top \frac{\nabla_{W_{\mathrm{kq}}}\Lc(\theta^{(t)})}{\|\nabla_{W_{\mathrm{kq}}}\Lc(\theta^{(t)})\|} X_{-1}^{(n)}  \\
    &\quad \quad \quad + \sum_n\pi^{(n)} \Delta  \sum_{\ell \notin l(n)}    [\mathrm{T}_{\theta^{(t)}}^{(n)}]_{ \mathrm{In}(x_\ell^{(n)})}       \varphi_\ell^{(n,t)}   \left( x_{\ell}^{(n)} - \sum_{\ell'}\varphi_{\ell'}^{(n,t)} x_{\ell'}^{(n)} \right)^\top \frac{\nabla_{W_{\mathrm{kq}}}\Lc(\theta^{(t)})}{\|\nabla_{W_{\mathrm{kq}}}\Lc(\theta^{(t)})\|} X_{-1}^{(n)} \\ 
    &\quad \leq 2\sum_n \pi^{(n)} \Delta  (1- [\mathrm{T}_{\theta}^{(n)}]_{\mathrm{In}(X^{(n)})} ) \varphi_{+}^{(n,t)} \varphi_{-}^{(n,t)} + 2\sum_n \pi^{(n)} \Delta \sum_{\ell\notin l(n)}[\mathrm{T}_{\theta}^{(n)}]_{\mathrm{In}(x_\ell^{(n)})} \varphi_{\ell}^{(n,t)}
\end{align*}

Thus, we have

\begin{align*}
    \left\langle \frac{ \nabla_{W_{\mathrm{kq}}} \Lc(\theta) }{\left\| \nabla_{W_{\mathrm{kq}}} \Lc(\theta) \right\| }  , \frac{ W_{\mathrm{kq}}^* }{ \| W_{\mathrm{kq}}^* \| }  \right\rangle \leq -\frac{ \min_n\varphi_{+}^{(n,t)} }{2\|W_{\mathrm{kq}}^*\|} \leq -\frac{1}{2L_{\max}\|W_{\mathrm{kq}}^*\|}
\end{align*}

By the updating rule \Cref{eqn: KQ update rule}, we have
 \begin{align*}
     \|W_{\mathrm{kq}}^{(t)}\| &  = \left\|W_{\mathrm{kq}}^{(0)} -  \sum_{t'=0}^{t-1} \eta \frac{\nabla_{W_{\mathrm{kq}}}\Lc(\theta^{(t')})}{\left\|\nabla_{W_{\mathrm{kq}}}\Lc(\theta^{(t')})\right\|} \right\| \\
     &\geq \left\langle W_{\mathrm{kq}}^{(0)}, \frac{ W_{\mathrm{kq}}^* }{ \| W_{\mathrm{kq}}^* \| }  \right\rangle -  \sum_{t'\leq t-1} \eta \left\langle \frac{\nabla_{W_{\mathrm{kq}}}\Lc(\theta^{(t')})}{\left\|\nabla_{W_{\mathrm{kq}}}\Lc(\theta^{(t')})\right\|}, \frac{ W_{\mathrm{kq}}^* }{  \| W_{\mathrm{kq}}^* \| }  \right\rangle \\
     &\geq \sum_{t'<t} \frac{ \eta  }{ 2L_{\max}\|W_{\mathrm{kq}}^*\| } - \|W_{\mathrm{kq}}^{(0)}\|\\
     & = \frac{ t\eta }{ 2L_{\max}\|W_{\mathrm{kq}}^*\| } - \|W_{\mathrm{kq}}^{(0)}\|.
 \end{align*}

In addition, by the triangle inequality,
\begin{align*}
    \|W_{\mathrm{kq}}^{(t)}\| & = \left\|W_{\mathrm{kq}}^{(0)} -  \sum_{t'=0}^{t-1} \eta \frac{\nabla_{W_{\mathrm{kq}}}\Lc(\theta^{(t')})}{\left\|\nabla_{W_{\mathrm{kq}}}\Lc(\theta^{(t')})\right\|} \right\| \\
    &\leq t\eta + \|W_{\mathrm{kq}}^{(0)}\|.
\end{align*}
    The proof is completed.
\end{proof}

\subsection{Step 3} We next show that the gradient $\nabla_{W_{\mathrm{kq}}}\Lc(\theta^{(t)})$ is close to the optimal direction $W_{\mathrm{kq}}^*$.

\begin{lemma}[Gradient aligns with the optimal direction]\label{lemma: GD of KQ align w optimal}

Let $t_0 =  \lceil\frac{8L_{\max}\|W_{\mathrm{kq}}^*\|^2}{\eta} \rceil$. Then, for any $t\geq t_0$, we have
    \begin{align*}
        \left\langle \nabla_{W_{\mathrm{kq}}}\Lc(\theta^{(t)}), W_{\mathrm{kq}}^{(t)} 
        \right\rangle 
        \geq (1+\alpha_t)
        \left\langle \nabla_{W_{\mathrm{kq}}}\Lc(\theta^{(t)}), W_{\mathrm{kq}}^{*} 
        \right\rangle \frac{\|W_{\mathrm{kq}}^{(t)}\|}{\|W_{\mathrm{kq}}^*\|}
    \end{align*}
    where
    \[\alpha_t =  \frac{ 4N L_{\max}^2 \|W_{\mathrm{kq}}^*\|^2 }{  \|W_{\mathrm{kq}}^{(t)}\| } \left( 1 + \log \left(2L_{\max} \|W_{\mathrm{kq}}^{(t)}\| \right) \right) \]
\end{lemma}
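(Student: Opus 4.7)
The plan is to establish the claimed inequality by directly computing both inner products in terms of the attention weights $\varphi^{(n,t)}$, the output probabilities $[\mathrm{T}_{\theta^{(t)}}^{(n)}]$, and the structure of the max-margin solution $W_{\mathrm{kq}}^*$. Unlike the stage-1 analysis in \Cref{lemma: GD of V align w opt}, the loss $\Lc(\theta)$ is \emph{not} convex in $W_{\mathrm{kq}}$, so the clean Bregman-style argument used there is unavailable; instead, I would recover the inequality through a careful algebraic comparison of the two projections.

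First I would invoke \Cref{lemma: proj of GD of KQ} with $W_{\mathrm{kq}}' = W_{\mathrm{kq}}^*$ and with $W_{\mathrm{kq}}' = W_{\mathrm{kq}}^{(t)}$. For $W_{\mathrm{kq}}^*$, combining the constraints defining \Cref{eqn: opt KQ} with \Cref{assm:orthornomal} (together with the closed form in \Cref{prop:generalization}) gives, for $\ell_*\in l(n)$ and $\ell\notin l(n)$,
\[
(x_{\ell_*}^{(n)} - \textstyle\sum_{\ell'}\varphi_{\ell'}^{(n,t)} x_{\ell'}^{(n)})^\top W_{\mathrm{kq}}^* X_{-1}^{(n)} = \varphi_-^{(n,t)}, \quad (x_{\ell}^{(n)} - \textstyle\sum_{\ell'}\varphi_{\ell'}^{(n,t)} x_{\ell'}^{(n)})^\top W_{\mathrm{kq}}^* X_{-1}^{(n)} = -\varphi_+^{(n,t)},
\]
where $\varphi_+^{(n,t)} := \sum_{\ell_*\in l(n)}\varphi_{\ell_*}^{(n,t)}$ and $\varphi_-^{(n,t)} := 1-\varphi_+^{(n,t)}$. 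For $W_{\mathrm{kq}}^{(t)}$, the softmax identity $(x_\ell^{(n)})^\top W_{\mathrm{kq}}^{(t)} X_{-1}^{(n)} = \log\varphi_\ell^{(n,t)} + \log Z_n^{(t)}$ yields
\[
(x_\ell^{(n)} - \textstyle\sum_{\ell'}\varphi_{\ell'}^{(n,t)} x_{\ell'}^{(n)})^\top W_{\mathrm{kq}}^{(t)} X_{-1}^{(n)} = \log\varphi_\ell^{(n,t)} + H_n^{(t)},
\]
with $H_n^{(t)} := -\sum_{\ell'}\varphi_{\ell'}^{(n,t)}\log\varphi_{\ell'}^{(n,t)} \geq 0$ the entropy of the $n$-th attention distribution.

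Next I would set $s := (1+\alpha_t)\|W_{\mathrm{kq}}^{(t)}\|/\|W_{\mathrm{kq}}^*\|$ and recast the claim as $\langle \nabla_{W_{\mathrm{kq}}}\Lc(\theta^{(t)}),W_{\mathrm{kq}}^{(t)}\rangle - s\langle \nabla_{W_{\mathrm{kq}}}\Lc(\theta^{(t)}),W_{\mathrm{kq}}^*\rangle \geq 0$. Substituting the identities above decomposes the left-hand side into a per-sample sum whose optimal-token contribution reduces to $(1-[\mathrm{T}_{\theta^{(t)}}^{(n)}]_{\mathrm{In}(X^{(n)})})\sum_{\ell_*}\varphi_{\ell_*}^{(n,t)}[-(\log\varphi_{\ell_*}^{(n,t)}+H_n^{(t)}) + s\varphi_-^{(n,t)}]$ and whose non-optimal contribution reduces to $\sum_{\ell\notin l(n)}[\mathrm{T}_{\theta^{(t)}}^{(n)}]_{\mathrm{In}(x_\ell^{(n)})}\varphi_\ell^{(n,t)}[(\log\varphi_\ell^{(n,t)}+H_n^{(t)}) + s\varphi_+^{(n,t)}]$. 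Using $-\sum_{\ell_*}\varphi_{\ell_*}\log\varphi_{\ell_*} = H_n^{(t)} + \sum_{\ell\notin l(n)}\varphi_\ell\log\varphi_\ell$, I would then combine the two contributions and collect the terms into a form whose non-negativity can be certified.

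The main obstacle is that the naive per-term bound $\log\varphi_\ell^{(n,t)} + H_n^{(t)} + s\varphi_+^{(n,t)}\geq 0$ would require $s\varphi_+^{(n,t)}$ to dominate $-\log\varphi_\ell^{(n,t)}$, which can be as large as $2\|W_{\mathrm{kq}}^{(t)}\| + \log L_{\max}$ for poorly aligned $W_{\mathrm{kq}}^{(t)}$—this would force $\alpha_t$ to be bounded below by a positive constant rather than exhibiting the claimed $\log\|W_{\mathrm{kq}}^{(t)}\|/\|W_{\mathrm{kq}}^{(t)}\|$ decay. To recover the logarithmic rate, I expect to exploit the sum structure rather than argue termwise: the elementary inequality $\varphi_\ell^{(n,t)}(-\log\varphi_\ell^{(n,t)}) \leq 1/e$ keeps the exponentially small attention weights from contributing catastrophically, so that after aggregating over the at most $NL_{\max}$ non-optimal indices, the total negative contribution is controlled by $O(NL_{\max}^2\|W_{\mathrm{kq}}^*\|^2(1+\log(2L_{\max}\|W_{\mathrm{kq}}^{(t)}\|)))$. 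Combining this with the lower bound $\varphi_+^{(n,t)}\geq 1/L_{\max}$ from \Cref{lemma: opt weight lower bound} and the linear norm growth from \Cref{lemma: GD of KQ grows} yields exactly the $\alpha_t$ in the statement.
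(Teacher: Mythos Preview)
Your entropy/softmax identities are correct and give an elegant reformulation: $A^{(n,t)}=-c_n\sum_{\ell_*}\varphi_{\ell_*}\sum_{\ell'\notin l(n)}\varphi_{\ell'}\log(\varphi_{\ell_*}/\varphi_{\ell'})$ is exactly the paper's starting expression for the optimal-token piece. The gap is in how you bound it. The inequality $\varphi_\ell(-\log\varphi_\ell)\le 1/e$ gives $A^{(n,t)}\ge -c_n\varphi_+\cdot O(L_{\max})$, a constant-size bound that \emph{drops the factor $\varphi_-^{(n,t)}$}. Since $A^{(n,*)}=-c_n\varphi_+\varphi_-$, your per-sample ratio $A^{(n,t)}/A^{(n,*)}$ is then $O(L_{\max}/\varphi_-^{(n,t)})$, which can be exponentially large; this cannot be absorbed into any $\alpha_t=O(\log\|W_{\mathrm{kq}}^{(t)}\|/\|W_{\mathrm{kq}}^{(t)}\|)$. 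The paper instead uses a threshold trick---split the $\ell'$ by whether $a_{\ell_*}^{(n,t)}-a_{\ell'}^{(n,t)}$ exceeds a tunable $\mathtt{T}$, use $\varphi_{\ell'}\le e^{-\mathtt{T}}$ on the tail, then optimize $\mathtt{T}$---to obtain $A^{(n,t)}\ge -c_n\varphi_+\varphi_-\bigl(1+\log(2L_{\max}\|W_{\mathrm{kq}}^{(t)}\|/\varphi_-)\bigr)$, which \emph{retains} $\varphi_-$ with only a $\log(1/\varphi_-)$ correction.

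Even with that sharper bound, the residual $\log(1/\varphi_-^{(n,t)})$ can still be $\Theta(\|W_{\mathrm{kq}}^{(t)}\|)$ on some samples, so a per-sample argument cannot close. The paper resolves this by a two-type case split: Type~1 samples have $\varphi_-^{(n,t)}\ge\exp(-(1+\beta_0/2)\|W_{\mathrm{kq}}^{(t)}\|/\|W_{\mathrm{kq}}^*\|)$ and satisfy the per-sample inequality with slack; Type~2 samples are controlled in \emph{absolute value} and then charged against a pivot Type~1 sample $n_0$, whose existence is guaranteed because if $W_{\mathrm{kq}}^{(t)}/\|W_{\mathrm{kq}}^{(t)}\|\neq W_{\mathrm{kq}}^*/\|W_{\mathrm{kq}}^*\|$ some margin constraint must be violated. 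Your aggregation sketch (``after aggregating over at most $NL_{\max}$ indices the negative contribution is $O(\ldots)$'') does not identify this pivot and does not explain how the possibly exponentially small quantity $s\,|\langle\nabla\Lc,W_{\mathrm{kq}}^*\rangle|$ dominates an $O(1)$ aggregate; as stated, the final step does not go through.
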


\begin{proof}
During the proof, we denote 
\begin{align*}
    \left\{
    \begin{aligned}
        &a_\ell^{(n,t)} = (x_\ell^{(n)})^\top W_{\mathrm{kq}}^{(t)} X_{-1}^{(n)} \\
        & a_\ell^{(n,*)} = (x_\ell^{(n)})^\top W_{\mathrm{kq}}^* X_{-1}^{(n)}\frac{\|W_{\mathrm{kq}}^{(t)}\|}{\|W_{\mathrm{kq}}^*\|}
    \end{aligned}
    \right.
\end{align*}

\[ \beta_0  = \frac{2L_{\max}^2\|W_{\mathrm{kq}}^*\|^2}{\|W_{\mathrm{kq}}^{(t)}\|}(1+\log(2L_{\max} \|W_{\mathrm{kq}}^{(t)}\|)). \]

We point out a few facts that will be frequently used in the proof.

If $a_{\ell}^{(n,t)} \leq a_{\ell'}^{(n,t)} - C_0$, then we have
\begin{equation}
\varphi_{\ell}^{(n,t)}   = \varphi_{\ell'}^{(n,t)} \exp\left( a_{\ell}^{(n,t)} -a_{\ell'}^{(n,t)} \right)  \leq \exp(-C_0) \label{eqn: fundamental ineq}
\end{equation}
The same result holds if $a_{\ell'}^{(n,t)}$ is replaced any convex combination of a set of $a_{\ell'}^{(n,t)}$'s.

We start the proof by noting that $W_{\mathrm{kq}}^*$ is the minimum unique solution to the problem
\begin{align}
        W_{\mathrm{kq}}^* &= \arg\min \|W\|,\quad \text{s.t.} \quad ( x_{\ell_*}^{(n)} - x_{\ell}^{(n)}) W X_{-1}^{(n)} \geq 1, \quad \forall \ell_*\in l^{(n)}, \ell\notin l^{(n)}, \forall n.
\end{align}

Therefore, if $W_{\mathrm{kq}}^{(t)}\frac{\|W_{\mathrm{kq}}^*\|}{\|W_{\mathrm{kq}}^{(t)}\|}= W_{\mathrm{kq}}^*$, the results is trivial since $ \left\langle \nabla_{W_{\mathrm{kq}}}\Lc(\theta^{(t)}), W_{\mathrm{kq}}^{*} \right\rangle\leq 0 $. 

In the following, we focus on the case when
$W_{\mathrm{kq}}^{(t)}\frac{\|W_{\mathrm{kq}}^*\|}{\|W_{\mathrm{kq}}^{(t)}\|}\neq W_{\mathrm{kq}}^*$. Then, there must be at least a sentence $X^{(n)}$, such that $W_{\mathrm{kq}}^{(t)}\frac{\|W_{\mathrm{kq}}^*\|}{\|W_{\mathrm{kq}}^{(t)}\|}$ violates the contraint on $X^{(n)}$. In other words, we must have

\[a_{\ell_*}^{(n,t)} - a_{\ell}^{(n,t)} =  ( x_{\ell_*}^{(n)} - x_{\ell}^{(n)}) W_{\mathrm{kq}}^{(t)} X_{-1}^{(n)} \leq \frac{\|W_{\mathrm{kq}}^{(t)}\|}{\|W_{\mathrm{kq}}^*\|}.\]

This implies that for those $n$, we must have $\varphi_{\ell}^{(n,t)}\geq \exp(- \frac{\|W_{\mathrm{kq}}^{(t)}\|}{\|W_{\mathrm{kq}}^*\|}  )$

Thus, we consider two types of samples in the folloiwng.

{\bf Type 1.} Let us consider $X^{(n)}$ such that $\varphi_{-}^{(n,t)} \geq \exp(-(1+\beta_0/2)\|W_{\mathrm{kq}}^{(t)}\|/\|W_{\mathrm{kq}}^*\|)$.

Recall that the inner product between the gradient $\nabla_{W_{\mathrm{kq}}} \Lc(\theta^{(t)})$ and any other Key-Query matrix $\theta'=W_{\mathrm{kq}}'$ has the following form (\Cref{lemma: proj of GD of KQ}).

\begin{align*}
    &\left\langle \nabla_{W_{\mathrm{kq}}} \Lc(\theta^{(t)}), W_{\mathrm{kq}}' \right\rangle \\
    &\quad =\Delta \sum_n\pi^{(n)}    ( [\mathrm{T}_{\theta}^{(n)} ]_{ \mathrm{In}( X^{(n)}) } -1 ) \sum_{\ell_*\in l(n)} \varphi_{\ell_*}^{(n,\theta)} \left(x_{\ell_*}^{(n)} - \sum_{\ell'} \varphi_{\ell'}^{(n,\theta)} x_{\ell'}^{(n)} \right)^\top W_{\mathrm{kq}}' X_{-1}^{(n)} \\
    &\quad\quad  + \Delta \sum_n\pi^{(n)} \sum_{\ell \notin l(n) }  [\mathrm{T}_{\theta}^{(n)}]_{\mathrm{In}(x_\ell^{(n)}) } \varphi_\ell^{(n,\theta)} \left(x_\ell^{(n)} - \sum_{\ell'}\varphi_{\ell'}^{(n,\theta)} x_{\ell'}^{(n)} \right)^\top W_{\mathrm{kq}}' X_{-1}^{(n)}
\end{align*}

Let $L_n(\theta) = -\log e_{\mathrm{In}(X^{(n)})}^\top \mathrm{T}_\theta(X^{(n)})$ be the loss on sample $X^{(n)}$.

To proceed, we examine the gradient on each sample $X^{(n)}$ with $\varphi_{-}^{(n,t)} \geq \exp(-(1+\beta_0/2)\|W_{\mathrm{kq}}^{(t)}\|/\|W_{\mathrm{kq}}^*\|)$, which can be divided into two parts. $\left\langle\nabla_{W_{\mathrm{kq}}}L_n(\theta^{(t)}), W_{\mathrm{kq}}\right\rangle = \Delta(A^{(n,t)} + B^{(n,t)})$, where
\begin{align*}
    \left\{
    \begin{aligned}
        & A^{(n,t)} = \sum_{\ell_*\in l(n)} ([\mathrm{T}_{\theta^{(t)}}^{(n)}]_{\mathrm{In}(x_{\ell_*}^{(n)})} - 1)\varphi_{\ell_*}^{(n,t)} \left( a_{\ell_*}^{(n,t)} - \sum_{\ell'}\varphi_{\ell'}^{(n,t)} a_{\ell'}^{(n,t)} \right),\\
        & B^{(n,t)} =  \sum_{\ell\notin l(n)} [\mathrm{T}_{\theta^{(t)}}^{(n)}]_{\mathrm{In}(x_\ell^{(n)})}  \varphi_\ell^{(n,t)} \left( a_\ell^{(n,t)} - \sum_{\ell'}\varphi_{\ell'}^{(n,t)} a_{\ell'}^{(n,t)} \right).
    \end{aligned}
    \right.
\end{align*}

We further let
\[ A^{(n,*)} = \sum_{\ell_*\in l(n)} ([\mathrm{T}_{\theta^{(t)}}^{(n)}]_{\mathrm{I}_{\mathrm{n}}(x_\ell^{(n)})} - 1)\varphi_{\ell_*}^{(n,t)} \left( a_{\ell_*}^{(n,*)} - \sum_{\ell'}\varphi_{\ell'}^{(n,t)} a_{\ell'}^{(n,*)} \right),\]

and 
\[ B^{(n,*)} =  \sum_{\ell\notin l(n)} [\mathrm{T}_{\theta^{(t)}}^{(n)}]_{\mathrm{I}_{\mathrm{n}}(x_\ell^{(n)})}  \varphi_\ell^{(n,t)} \left( a_\ell^{(n,*)} - \sum_{\ell'}\varphi_{\ell'}^{(n,t)} a_{\ell'}^{(n,*)} \right).\]

Thus, we aim to find the relationship $A^{(n,t)} + B^{(n,t)}  $ between $  A^{(n,*)} + B^{(n,*)}$.

We first provide the upper bounds for $A^{(n,*)}$ and $B^{(n,*)}$. 

\begin{align*}
     A^{(n,*)} &= \sum_{\ell_*\in l(n)} ([\mathrm{T}_{\theta^{(t)}}^{(n)}]_{\mathrm{In}(x_{\ell_*}^{(n)})} - 1)\varphi_{\ell_*}^{(n,t)} \left( a_{\ell_*}^{(n, *)} - \sum_{\ell'}\varphi_{\ell'}^{(n,t)} a_{\ell'}^{(n,*)} \right) \\
     & = \sum_{\ell_*\in l(n)} ([\mathrm{T}_{\theta^{(t)}}^{(n)}]_{\mathrm{In}(x_{\ell_*}^{(n)})} - 1)\varphi_{\ell_*}^{(n,t)} \left( \sum_{\ell'\notin l(n) } \varphi_{\ell'}^{(n,t)} (a_{\ell_*}^{(n, *)} -  a_{\ell'}^{(n,*)} )\right) \\
     &\overset{(a)}\leq ([\mathrm{T}_{\theta^{(t)}}^{(n)}]_{\mathrm{In}(X^{(n)})} - 1)\varphi_{+}^{(n,t)} \varphi_{-}^{(n,t)} \frac{\|W_{\mathrm{kq}}^{(t)}\|}{\|W_{\mathrm{kq}}^*\|},
\end{align*}
where $(a)$ is due to the fact that $(x_{\ell_*}^{(n)} -x_{\ell}^{(n)})W_{\mathrm{kq}}^* X_{-1}^{(n)}\geq 1$, and $a_\ell^{(n,*)} = (x_{\ell}^{(n)})^\top W_{\mathrm{kq}}^* X_{-1}^{(n)} \|W_{\mathrm{kq}}^{(t)}\|/\|W_{\mathrm{kq}}^*\|.$

On the other hand
\begin{align*}
    A^{(n,t)} & =  \sum_{\ell\in l(n)} ([\mathrm{T}_{\theta^{(t)}}^{(n)}]_{\mathrm{In}(x_\ell^{(n)})} - 1)\varphi_\ell^{(n,t)} \left( a_\ell^{(n,t)} - \sum_{\ell'}\varphi_{\ell'}^{(n,t)} a_{\ell'}^{(n,t)} \right)\\
    & = \sum_{\ell\in l(n)} ([\mathrm{T}_{\theta^{(t)}}^{(n)}]_{\mathrm{In}(x_\ell^{(n)})} - 1)\varphi_\ell^{(n,t)} \left( \sum_{\ell'\notin l(n)}\varphi_{\ell'}^{(n,t)} ( a_\ell^{(n,t)} -  a_{\ell'}^{(n,t)} ) \right) \\
    &= \max_{\mathtt{T}} \left\{\sum_{\ell\in l(n)} ([\mathrm{T}_{\theta^{(t)}}^{(n)}]_{\mathrm{In}(x_\ell^{(n)})} - 1)\varphi_\ell^{(n,t)} \left( \mathop{\sum}_{ \ell'\notin l(n)\atop \text{diff}<\mathtt{T} }\varphi_{\ell'}^{(n,t)} \underbrace{( a_\ell^{(n,t)} -  a_{\ell'}^{(n,t)} )}_{\text{diff}} \right) \right. \\
    &\quad + \left. \sum_{\ell\in l(n)} ([\mathrm{T}_{\theta^{(t)}}^{(n)}]_{\mathrm{In}(x_\ell^{(n)})} - 1)\varphi_\ell^{(n,t)} \left( \mathop{\sum}_{ \ell'\notin l(n)\atop \text{diff}>\mathtt{T} }\varphi_{\ell'}^{(n,t)} \underbrace{( a_\ell^{(n,t)} -  a_{\ell'}^{(n,t)} )}_{\text{diff}} \right)  \right\}.\\
    &\overset{(a)}\geq \max_{\mathtt{T}} \left\{ \sum_{\ell\in l(n)} ([\mathrm{T}_{\theta^{(t)}}^{(n)}]_{\mathrm{In}(x_\ell^{(n)})} - 1)\varphi_\ell^{(n,t)} \left( \mathop{\sum}_{ \ell'\notin l(n)  }\varphi_{\ell'}^{(n,t)} \mathtt{T}\right) \right.\\
    &\quad + \left. \sum_{\ell\in l(n)} ([\mathrm{T}_{\theta^{(t)}}^{(n)}]_{\mathrm{In}(x_\ell^{(n)})} - 1)\varphi_\ell^{(n,t)} \left( 2 \mathop{\sum}_{ \ell'\notin l(n) }\exp(-\mathtt{T}) \|W_{\mathrm{kq}}^{(t)}\| \right) \right\} \\
    & \geq \max_{\mathtt{T}} \left\{ \sum_{\ell\in l(n)} ([\mathrm{T}_{\theta^{(t)}}^{(n)}]_{\mathrm{In}(x_\ell^{(n)})} - 1)\varphi_\ell^{(n,t)} \left( \varphi_{-}^{(n,t)} \mathtt{T} + 2 L_{\max} \exp(-\mathtt{T}) \|W_{\mathrm{kq}}^{(t)}\| \right) \right\} \\
    &\overset{(b)}\geq \sum_{\ell\in l(n)} ([\mathrm{T}_{\theta^{(t)}}^{(n)}]_{\mathrm{In}(x_\ell^{(n)})} - 1)\varphi_\ell^{(n,t)} \varphi_{-}^{(n,t)} \left(  1 + \log\frac{2L_{\max}\|W_{\mathrm{kq}}^{(t)}\|}{\varphi_{-}^{(n,t)}} \right),
\end{align*}
where $(a)$ is due to \Cref{eqn: fundamental ineq}, and $(b)$ is obtained by choosing $\mathtt{T} = \log\frac{2L_{\max}\|W_{\mathrm{kq}}^{(t)}\|}{\varphi_{-}^{(n,t)}}.$

Recall that $\varphi_-^{(n,t)}\geq \exp\left(-(1+\beta_0/2)\|W_{\mathrm{kq}}^{(t)}\|/\|W_{\mathrm{kq}}^*\|\right)$ and $\beta_0 \geq \frac{ 2\|W_{\mathrm{kq}}^*\|(1+\log(2L_{\max}\|W_{\mathrm{kq}}^{(t)}\|)) }{\|W_{\mathrm{kq}}^{(t)}\|}$.
Thus, we further have
\begin{align*}
    A^{(n,t)} &\geq  ([\mathrm{T}_{\theta^{(t)}}^{(n)}]_{\mathrm{In}(X^{(n)})} - 1)\varphi_+^{(n,t)} \varphi_{-}^{(n,t)} \left(  1 + \log\frac{2L_{\max}\|W_{\mathrm{kq}}^{(t)}\|}{\varphi_{-}^{(n,t)}} \right)\\
    &\geq ([\mathrm{T}_{\theta^{(t)}}^{(n)}]_{\mathrm{In}(X^{(n)})} - 1)\varphi_+^{(n,t)} \varphi_{-}^{(n,t)} \left(  1 + \log (2L_{\max}\|W_{\mathrm{kq}}^{(t)}\|) + (1+\beta_0/2)\frac{\|W_{\mathrm{kq}}^{(t)}\|}{\|W_{\mathrm{kq}}^*\|} \right) \\
    &\geq ([\mathrm{T}_{\theta^{(t)}}^{(n)}]_{\mathrm{In}(X^{(n)})} - 1)\varphi_+^{(n,t)} \varphi_{-}^{(n,t)} \left(   \frac{\beta_0\|W_{\mathrm{kq}}^{(t)}\|}{2\|W_{\mathrm{kq}}^*\|} + (1+\beta_0/2)\frac{\|W_{\mathrm{kq}}^{(t)}\|}{\|W_{\mathrm{kq}}^*\|} \right)\\
    & =  (1+\beta_0) ([\mathrm{T}_{\theta^{(t)}}^{(n)}]_{\mathrm{In}(X^{(n)})} - 1)\varphi_+^{(n,t)} \varphi_{-}^{(n,t)}    \frac{\|W_{\mathrm{kq}}^{(t)}\|}{\|W_{\mathrm{kq}}^*\|}  \\
    &\geq (1+\beta_0) A^{(n,*)}
\end{align*}

Next, we analyze $B^{(n,t)}$, and further divide $B^{(n,\theta)}$ into $B_{+}^{(n,\theta)}$ and $B_{-}^{(n,\theta)}$: 

\begin{align*}
    \left\{
    \begin{aligned}
    &B_+^{(n,\theta)} = \sum_{\ell\notin l(n)} [\mathrm{T}_{\theta^{(t)}}^{(n)}]_{\mathrm{In}(x_\ell^{(n)})}  \varphi_\ell^{(n,t)} \left(\varphi_{+}^{(n,t)} a_\ell^{(n,\theta)} - \sum_{\ell'\in l(n)} \varphi_{\ell'}^{(n,t)} a_{\ell'}^{(n,\theta)} \right) \\
    &B_-^{(n,\theta)} = \sum_{\ell\notin l(n)} [\mathrm{T}_{\theta^{(t)}}^{(n)}]_{\mathrm{In}(x_\ell^{(n)})}  \varphi_\ell^{(n,t)} \left(\varphi_{-}^{(n,t)} a_\ell^{(n,\theta)} - \sum_{\ell'\notin l(n)} \varphi_{\ell'}^{(n,t)} a_{\ell'}^{(n,\theta)} \right)
    \end{aligned}
    \right.
\end{align*}

Due to \Cref{prop:opt kq property}, we have $B_-^{(n,*)} = 0$, and thus
\[B^{(n,*)} = B_+^{(n,*)} \leq \sum_{\ell\notin l(n)} [\mathrm{T}_{\theta^{(t)}}^{(n)}]_{\mathrm{I}_{\mathrm{n}}(x_\ell^{(n)})}  \varphi_\ell^{(n,t)} (- \varphi_{+}^{(n,t)} ) \frac{\|W_{\mathrm{kq}}^{(t)}\|}{\|W_{\mathrm{kq}}^*\|} \leq 0.\]

We then analyze: 

\begin{align*}
    & B_{+}^{(n,t)} - (1+\beta_0)B_{+}^{(n,*)} \\
    & \quad= \sum_{\ell\notin l(n)} [\mathrm{T}_{\theta^{(t)}}^{(n)}]_{\mathrm{In}(x_\ell^{(n)})}  \varphi_\ell^{(n,t)} \left( \varphi_{+}^{(n,t)} a_\ell^{(n,t)} - \varphi_{+}^{(n,t)} a_{\ell_*}^{(n,t)} - (1+\beta_0) \left( \varphi_{+}^{(n,t)} a_\ell^{(n,*)} - \varphi_{+}^{(n,t)} a_{\ell_*}^{(n,*)} \right) \right) \\
    & \quad= \sum_{\ell\notin l(n)} [\mathrm{T}_{\theta^{(t)}}^{(n)}]_{\mathrm{In}(x_\ell^{(n)})}  \varphi_\ell^{(n,t)} \varphi_{+}^{(n,t)} \left( \underbrace{a_\ell^{(n,t)} -  a_{\ell_*}^{(n,t)} }_{b_{t,\ell}} - \underbrace{ (1+\beta_0) \left( a_\ell^{(n,*)} - a_{\ell_*}^{(n,*)} \right)}_{b_{*,\ell}} \right) \\
    & \quad \geq \mathop{\sum}_{ { \ell\notin l(n)  } \atop b_{t,\ell} <  b_{*,\ell}} [\mathrm{T}_{\theta^{(t)}}^{(n)}]_{\mathrm{In}(x_\ell^{(n)})}  \varphi_\ell^{(n,t)} \varphi_{+}^{(n,t)} \left( b_{t,\ell} -  b_{*,\ell} \right) \\
    & \quad \overset{(a)}\geq \mathop{\sum}_{ { \ell\notin l(n)  }  } [\mathrm{T}_{\theta^{(t)}}^{(n)}]_{\mathrm{In}(x_\ell^{(n)})} \varphi_{+}^{(n,t)}   \exp\left(-(1+\beta_0)\frac{\|W_{\mathrm{kq}}^{(t)}\|}{\|W_{\mathrm{kq}}^*\|}\right) \left( -2\|W_{\mathrm{kq}}^{(t)}\|  \right)   \\
    & \quad= -2 \sum_{\ell\notin l(n)} [\mathrm{T}_{\theta^{(t)}}^{(n)}]_{\mathrm{In}(x_\ell^{(n)})} \varphi_{+}^{(n,t)}  \exp\left( - (1+\beta_0/2) \frac{\|W_{\mathrm{kq}}^{(t)}\|}{\|W_{\mathrm{kq}}^*\|} \right)    \|W_{\mathrm{kq}}^{(t)}\|  \exp\left( -  \frac{\beta_0}{2} \frac{\|W_{\mathrm{kq}}^{(t)}\|}{\|W_{\mathrm{kq}}^*\|} \right) \\
    &\quad \overset{(b)}\geq -2 L_{\max}(1-[\mathrm{T}_{\theta^{(t)}}^{(n)}]_{\mathrm{In}(X^{(n)})} ) \varphi_{+}^{(n,t)}  \varphi_{-}^{(n,t)}  \frac{\|W_{\mathrm{kq}}^{(t)}\|}{\|W_{\mathrm{kq}}^*\|}  \exp\left( - \frac{\beta_0}{2} \frac{\|W_{\mathrm{kq}}^{(t)}\|}{\|W_{\mathrm{kq}}^*\|} \right)  \|W_{\mathrm{kq}}^*\|  \\
    &\quad \geq 2L_{\max} \exp\left( -  \frac{\beta_0}{2} \frac{\|W_{\mathrm{kq}}^{(t)}\|}{\|W_{\mathrm{kq}}^*\|} \right)  \|W_{\mathrm{kq}}^*\|  A^{(n,*)}\\
    &\quad \overset{(c)}\geq \frac{\|W_{\mathrm{kq}}^*\|}{\|W_{\mathrm{kq}}^{(t)}\|} A^{(n,*)}\\
    &\quad \geq \beta_0 A^{(n,*)},
\end{align*}
where $(a)$ follows from that $b_{*,\ell}\leq - (1+\beta_0)\|W_{\mathrm{kq}}^{(t)}\|/\|W_{\mathrm{kq}}^*\|$ and \Cref{eqn: fundamental ineq}, and $(b)$ is due to the fact that  $\sum_{i\in[|\Vc|]}[\mathrm{T}_{\theta}^{(n)}]_i=1$ for any $\theta,n$, and $(c)$ follows from $\beta_0/2 \geq \frac{\|W_{\mathrm{kq}}^*\|}{\|W_{\mathrm{kq}}^{(t)}\|} \log(2L_{\max}\|W_{\mathrm{kq}}^{(t)}\|)$

For the term $B_-^{(n,t)}$, we have 
\begin{align*}
    B_{-}^{(n,t)}
    & = \sum_{\ell\notin l(n)} [\mathrm{T}_{\theta^{(t)}}^{(n)}]_{\mathrm{In}(x_\ell^{(n)})}  \varphi_\ell^{(n,t)} \left(\varphi_{-}^{(n,t)} a_\ell^{(n,t)} - \sum_{\ell'\notin l(n)} \varphi_{\ell'}^{(n,t)} a_{\ell'}^{(n,t)}   \right) \\
    &=  \sum_{\ell\notin l(n)} [\mathrm{T}_{\theta^{(t)}}^{(n)}]_{\mathrm{In}(x_\ell^{(n)})}  \varphi_\ell^{(n,t)} \varphi_{-}^{(n,t)}\left( a_\ell^{(n,t)} - \sum_{\ell'\notin l(n)} \frac{\varphi_{\ell'}^{(n,t)}}{\varphi_{-}^{(n,t)} } a_{\ell'}^{(n,t)}   \right) \\
    &= \max_{\mathtt{T}>0} \left\{ \mathop{\sum}_{\ell\notin l(n) \atop \text{diff}>-\mathtt{T} } [\mathrm{T}_{\theta^{(t)}}^{(n)}]_{\mathrm{In}(x_\ell^{(n)})}  \varphi_\ell^{(n,t)} \varphi_{-}^{(n,t)}\left( \underbrace{a_\ell^{(n,t)} - \sum_{\ell'\notin l(n)} \frac{\varphi_{\ell'}^{(n,t)}}{\varphi_{-}^{(n,t)} } a_{\ell'}^{(n,t)} }_{\text{diff}}  \right) \right.\\
    &\quad\quad\quad \quad   \left. + \mathop{\sum}_{\ell\notin l(n) \atop \text{diff}<-\mathtt{T} } [\mathrm{T}_{\theta^{(t)}}^{(n)}]_{\mathrm{In}(x_\ell^{(n)})}  \varphi_\ell^{(n,t)} \varphi_{-}^{(n,t)}\left( \underbrace{a_\ell^{(n,t)} - \sum_{\ell'\notin l(n)} \frac{\varphi_{\ell'}^{(n,t)}}{\varphi_{-}^{(n,t)} } a_{\ell'}^{(n,t)} }_{\text{diff}}  \right)  \right\} \\
    &\overset{(a)}\geq \max_{\mathtt{T}>0} \left\{ \sum_{\ell\notin l(n)} [\mathrm{T}_{\theta^{(t)}}^{(n)}]_{\mathrm{In}(x_\ell^{(n)})} \varphi_{-}^{(n,t)} \left( -\varphi_\ell^{(n,t)} \mathtt{T}  - 2\|W_{\mathrm{kq}}^{(t)}\|\exp(-\mathtt{T})  \right) \right\}\\
    &\overset{(b)}\geq - L_{\max} (1-[\mathrm{T}_{\theta^{(t)}}^{(n)}]_{\mathrm{In}(x_{\ell_*}^{(n)})} )\varphi_{-}^{(n,t)} \left( 1+ \log (2\|W_{\mathrm{kq}}^{(t)}\| )\right) \\
    &\geq \frac{L_{\max}\|W_{\mathrm{kq}}^*\|}{\varphi_{+}^{(n,t)} \|W_{\mathrm{kq}}^{(t)}\| } \left( 1+ \log (2\|W_{\mathrm{kq}}^{(t)}\| )\right) A^{(n,*)} \\
    &\overset{(c)}\geq \frac{L_{\max}^2\|W_{\mathrm{kq}}^*\|}{ \|W_{\mathrm{kq}}^{(t)}\| } \left( 1+ \log (2\|W_{\mathrm{kq}}^{(t)}\| )\right) A^{(n,*)}\\
    &\overset{(d)}\geq \beta_0 A^{(n,*)},
\end{align*}
where $(a)$ follows from \Cref{eqn: fundamental ineq}, $(b)$ is optained by choosing $\mathtt{T} = \log(2\|W_{\mathrm{kq}}^{(t)}\|)$, $(c)$ follows from \Cref{lemma: opt weight lower bound}, and $(d)$ is due to the fact that $\beta_0\geq \frac{L_{\max}^2\|W_{\mathrm{kq}}^*\|}{\|W_{\mathrm{kq}}^{(t)}\|}(1+\log(2\|W_{\mathrm{kq}}^{(t)}\|)).$

So far, we have shown that for if $\varphi_{-}^{(n,t)} \geq \exp(-(1+\beta_0/2) \|W_{\mathrm{kq}}^{(t)}\|/\|W_{\mathrm{kq}}^*\|)$, then
\begin{align*}
A^{(n,t)} + B^{(n,t)} &=  A^{(n,t)} + B_+^{(n,t)} + B_-^{(n,t)} \\
&\geq (1+\beta_0 ) A^{(n,*)} + \left( (1+\beta_0) B^{(n,*)} + \beta_0 A^{(n,*)} \right) + \beta_0 A^{(n,*)}  \\
&= (1+3\beta_0) A^{(n,*)} + (1+\beta_0) B^{(n,*)}\\
&\geq (1+3\beta_0) (A^{(n,*)} + B^{(n,*)}).
\end{align*}

\vspace{1cm}

{\bf Type 2.} Now consider sentence  $X^{(n)}$ such that $\varphi_{-}^{(n,t)} <  \exp(- (1+\beta_0/2)\|W_{\mathrm{kq}}^{(t)}\|/\|W_{\mathrm{kq}}^*\|)$. 

Let $n_0$ be the type 1 sample such that $\varphi_{-}^{(n_0,t)} \geq \exp(- \|W_{\mathrm{kq}}^{(t)}\|/\|W_{\mathrm{kq}}^*\|) $

Then, we aim to show that

\[  A^{(n,t)} + B^{(n,t)} \geq \beta_0 A^{(n_0, *)}  \]

Note that
\begin{align*}
    A^{(n,t)} &\geq \sum_{\ell_*\in l(n)} ([\mathrm{T}_{\theta^{(t)}}^{(n)}]_{\mathrm{In}(x_{\ell_*}^{(n)})} - 1)\varphi_{\ell_*}^{(n,t)} \varphi_{-}^{(n,t)} \left(  1 + \log\frac{L_{\max}\|W_{\mathrm{kq}}^{(t)}\|}{\varphi_{-}^{(n,t)}} \right) \\
    &\geq  ([\mathrm{T}_{\theta^{(t)}}^{(n)}]_{\mathrm{In}(x_{\ell_*}^{(n)})} - 1) \exp\left(-(1+\beta_0/2)\frac{\|W_{\mathrm{kq}}^{(t)}\|}{\|W_{\mathrm{kq}}^{*}\|}\right)  \left( 1 +  (1+\beta_0/2)\frac{\|W_{\mathrm{kq}}^{(t)}\|}{\|W_{\mathrm{kq}}^{*}\|} + \log( L_{\max}\|W_{\mathrm{kq}}^{(t)}\| ) \right) \\
    &\geq  (1+\beta_0) ([\mathrm{T}_{\theta^{(t)}}^{(n)}]_{\mathrm{In}(x_{\ell_*}^{(n)})} - 1) \exp\left(-(1+\beta_0/2)\frac{\|W_{\mathrm{kq}}^{(t)}\|}{\|W_{\mathrm{kq}}^{*}\|}\right)  \frac{\|W_{\mathrm{kq}}^{(t)}\|}{\|W_{\mathrm{kq}}^{*}\|}
\end{align*}
and 

\begin{align*}
    B^{(n,t)} & = \sum_{\ell\notin l(n)} [\mathrm{T}_{\theta^{(t)}}^{(n)}]_{\mathrm{In}(x_\ell^{(n)})}  \varphi_\ell^{(n,t)} \left( a_\ell^{(n,t)} - \sum_{\ell'} \varphi_{\ell'}^{(n,t)} a_{\ell'}^{(n,t)}   \right)  \\
    &\geq - 2 (1- [\mathrm{T}_{\theta^{(t)}}^{(n)}]_{\mathrm{In}(x_{\ell_*}^{(n)})} ) \exp\left(- (1+\beta_0/2) \frac{\|W_{\mathrm{kq}}^{(t)}\|}{\|W_{\mathrm{kq}}^{*}\|}\right) \|W_{\mathrm{kq}}^{(t)}\|
\end{align*}

Since 
\begin{align*}
A^{(n_0, *)} &\leq   ([\mathrm{T}_{\theta^{(t)}}^{(n_0)}]_{\mathrm{In}(x_{\ell_*}^{(n_0)})} - 1) \varphi_{+}^{(n_0,t)} \varphi_{-}^{(n_0,t)} \frac{\|W_{\mathrm{kq}}^{(t)}\|}{\|W_{\mathrm{kq}}^*\|} \\
&\leq ([\mathrm{T}_{\theta^{(t)}}^{(n_0)}]_{\mathrm{In}(x_{\ell_*}^{(n_0)})} - 1) \varphi_{+}^{(n_0,t)}\exp\left(-\frac{\|W_{\mathrm{kq}}^{(t)}\|}{\|W_{\mathrm{kq}}^*\|} \right) \frac{\|W_{\mathrm{kq}}^{(t)}\|}{\|W_{\mathrm{kq}}^*\|}
\end{align*}

We further note that $ [\mathrm{T}_{\theta^{(t)}}^{(n_0)}]_{\mathrm{In}(x_{\ell_*}^{(n_0)})} <  [\mathrm{T}_{\theta^{(t)}}^{(n)}]_{\mathrm{In}(x_{\ell_*}^{(n)})} $ due to $\varphi_{+}^{(n_0,t)} < \varphi_{+}^{(n,t)}$.
Thus,

\begin{align*}
    & A^{(n,t)} + B^{(n,t)} \\
    & \geq  \exp\left(-\beta_0/2 \frac{\|W_{\mathrm{kq}}^{(t)}\|}{\|W_{\mathrm{kq}}^*\|} \right) \left( 1+\beta_0 + 2\|W_{\mathrm{kq}^*}\|  \right)\frac{A^{(n_0, *)}}{ \varphi_{+}^{(n_0 ,t)}}\\
    &\overset{(a)}\geq \beta_0 A^{(n_0,*)},
\end{align*}
where $(a)$ is due to that $\beta_0\geq \frac{2\|W_{\mathrm{kq}}^*\|(1+2\|W_{\mathrm{kq}}^*\| )}{\|W_{\mathrm{kq}}^{(t)}\|} \log( 1+ \frac{\|W_{\mathrm{kq}}^{(t)}\|}{2\|W_{\mathrm{kq}}^{*}\|})$,  $\|W_{\mathrm{kq}}^{(t)}\|\geq 2(e-1)\|W_{\mathrm{kq}}^*\|$, and
\[\beta_0\geq \frac{2\|W_{\mathrm{kq}}^*\|}{\|W_{\mathrm{kq}}^{(t)}\|} \log \frac{1+\beta_0 + 2\|W_{\mathrm{kq}}^{*}\|}{\beta_0}.\]

In summary, we have that

\begin{align*}
    &\sum_{n} \pi^{(n)}   (A^{(n,t)} + B^{(n,t)}) \\
    & = \sum_{n \text{ is type 2}} \pi^{(n)}  (A^{(n,t)} + B^{(n,t)}) + \sum_{n \text{ is type 1}} \pi^{(n)} (A^{(n,t)} + B^{(n,t)})\\
    &\geq \max_{n_0 \text{ is type 1}} \beta_0 A^{(n_0,*)} +  \sum_{n \text{ is type 1}} \pi^{(n)} ((1+3\beta_0) A^{(n,*)} + (1+\beta_1)B^{(n,*)}) \\
    &\geq \sum_{n \text{ is type 1}} N\pi^{(n)} \beta_0 ( A^{(n,*)} +  B^{(n,*)}) + (1+3\beta_0)\sum_{n \text{ is type 1}} \pi^{(n)} ( A^{(n,*)} +  B^{(n,*)}) \\
    &\geq  (1+(N+3)\beta_0)\sum_{n \text{ is type 1}} \pi^{(n)} ( A^{(n,*)} +  B^{(n,*)}) \\
    &\geq (1+\alpha_t)\sum_{n  \text{ is type 2}} \pi^{(n)} (A^{(n,*)} + B^{(n,*)}) +  (1+\alpha_t)  \sum_{n \text{ is type 1}} \pi^{(n)} (A^{(n_0,*)} +  B^{(n_0,*)}) \\
    & = (1+\alpha_t) \sum_{n } \pi^{(n)} (A^{(n,*)} + B^{(n,*)}),
\end{align*}
  where $\alpha_t \geq (N+3)\beta_0.$ The proof is finished.

\end{proof}

Now, we are ready to prove \Cref{thm: IB rate}. 
 
\subsection{Proof of \Cref{thm: IB rate}}
\begin{proof}[Proof of \Cref{thm: IB rate}]

Recall that $\alpha_t =  \frac{ 4N L_{\max}^2 \|W_{\mathrm{kq}}^*\|^2 }{  \|W_{\mathrm{kq}}^{(t)}\| } \left( 1 + \log \left(2L_{\max} \|W_{\mathrm{kq}}^{(t)}\| \right) \right)$. By \Cref{lemma: GD of KQ align w optimal}, we have
\begin{align*}
    &\left\langle W_{\mathrm{kq}}^{(t+1)} - W_{\mathrm{kq}}^{(t)}, \frac{W_{\mathrm{kq}}^*}{\|W_{\mathrm{kq}}^*\|} \right\rangle \\
    & \quad = - \eta \left\langle \nabla_{W_{\mathrm{kq}}}\Lc(\theta^{(t)}), \frac{W_{\mathrm{kq}}^*}{\|W_{\mathrm{kq}}^*\|} \right\rangle \\
    &\quad \geq - \frac{\eta}{1+\alpha_t} \left\langle \nabla_{W_{\mathrm{kq}}}\Lc(\theta^{(t)}), \frac{W_{\mathrm{kq}}^{(t)}}{\|W_{\mathrm{kq}}^{(t)}\|} \right\rangle \\
    &\quad  = \frac{1}{ 1 + \alpha_t } \left\langle W_{\mathrm{kq}}^{(t+1)} - W_{\mathrm{kq}}^{(t)}, \frac{W_{\mathrm{kq}}^{(t)}}{\|W_{\mathrm{kq}}^{(t)}\|} \right\rangle \\
    &\quad = \frac{1}{2\|W_{\mathrm{kq}}^{(t)}\|}\left(\|W_{\mathrm{kq}}^{(t+1)}\|^2 - \|W_{\mathrm{kq}}^{(t+1)} - W_{\mathrm{kq}}^{(t)}\|^2 - \|W_{\mathrm{kq}}^{(t)}\|^2\right) - \frac{\alpha_t }{ 1+\alpha_t } \left\langle W_{\mathrm{kq}}^{(t+1)} - W_{\mathrm{kq}}^{(t)}, \frac{W_{\mathrm{kq}}^{(t)}}{\|W_{\mathrm{kq}}^{(t)}\|} \right\rangle \\
    &\quad = \frac{\|W_{\mathrm{kq}}^{(t)}\|^2 - \|W_{\mathrm{kq}}^{(t)}\|^2}{2\|W_{\mathrm{kq}}^{(t)}\|} - \frac{\eta^2}{2\|W_{\mathrm{kq}}^{(t)}\|} + \frac{\eta \alpha_t}{ 1 + \alpha_t } \left\langle \frac{ \nabla_{W_{\mathrm{kq}}}\Lc(\theta^{(t)}) }{\|\nabla_{W_{\mathrm{kq}}}\Lc(\theta^{(t)})\|}, \frac{W_{\mathrm{kq}}^{(t)}}{\|W_{\mathrm{kq}}^{(t)}\|} \right\rangle \\
    &\quad \geq \|W_{\mathrm{kq}}^{(t+1)}\| - \|W_{\mathrm{kq}}^{(t)}\| - \frac{\eta^2}{2\|W_{\mathrm{kq}}^{(t)}\|} - \frac{\eta \alpha_t}{ 1+\alpha_t }
\end{align*}

Let $t_0  = \lceil \frac{8L_{\max}\|W_{\mathrm{kq}}^*\|^2}{\eta} \rceil$ be defined in \Cref{lemma: GD of KQ align w optimal}. Summing over $t$ from $t_0$, we have
\begin{align*}
    \left\langle W_{\mathrm{kq}}^{(t)} - W_{\mathrm{kq}}^{(t_0)}, \frac{W_{\mathrm{kq}}^*}{\|W_{\mathrm{kq}}^*\|} \right\rangle \geq \|W_{\mathrm{kq}}^{(t)}\| - \|W_{\mathrm{kq}}^{(t_0)}\| - \sum_{t'=t_0}^{t-1} \frac{\eta^2}{2\|W_{\mathrm{kq}}^{(t')}\|}   - \sum_{t'=t_0}^{t-1}\frac{\eta \alpha_{t'} }{1+\alpha_{t'}}
\end{align*}

By \Cref{lemma: GD of KQ grows}, we have
\begin{align*}
    \sum_{t'=t_0}^{t-1} \frac{1}{\|W_{\mathrm{kq}}^{(t')}\|} &\leq   \sum_{t'=t_0}^{t-1} \frac{ 2L_{\max}\|W_{\mathrm{kq}}^*\|/\eta }{ t'   } \\
    &\leq \frac{ 2L_{\max}\|W_{\mathrm{kq}}^*\| }{\eta} \log t.
\end{align*}

Furthermore,
\begin{align*}
    \sum_{t'=t_0}^{t-1}\frac{\alpha_{t'}}{1+\alpha_{t'}} &\leq \sum_{t'=t_0}^{t-1} \alpha_{t'}\\
    & = \sum_{t'=t_0}^{t-1} \frac{ 4N L_{\max}^2 \|W_{\mathrm{kq}}^*\|^2 }{  \|W_{\mathrm{kq}}^{(t')}\| } \left( 1 + \log \left(2L_{\max} \|W_{\mathrm{kq}}^{(t')}\| \right) \right) \\
    &= \sum_{t'=t_0}^{t-1} \frac{ 4N L_{\max}^2 \|W_{\mathrm{kq}}^*\|^2 }{  \|W_{\mathrm{kq}}^{(t')}\| } \left( 1 + \log \left(2L_{\max}\right) \right)  + \sum_{t'=t_0}^{t-1} \frac{ 4N L_{\max}^2 \|W_{\mathrm{kq}}^*\|^2 }{  \|W_{\mathrm{kq}}^{(t')}\| }  \log \|W_{\mathrm{kq}}^{(t')}\| \\
    &\leq \sum_{t'=t_0}^{t-1} \frac{8NL_{\max}^3\|W_{\mathrm{kq}}^*\|^3/\eta}{t' } \log(2eL_{\max})  + \sum_{t'=t_0}^{t-1} \frac{8NL_{\max}^3\|W_{\mathrm{kq}}^*\|^3/\eta}{t'  }\log\frac{ t' }{2L_{\max}\|W_{\mathrm{kq}}^*\|/\eta}\\
    & = \sum_{t'=t_0}^{t-1} \frac{8NL_{\max}^3\|W_{\mathrm{kq}}^*\|^3/\eta}{t' } \log(e\eta/\|W_{\mathrm{kq}}^*\|)  + \sum_{t'=t_0}^{t-1} \frac{8NL_{\max}^3\|W_{\mathrm{kq}}^*\|^3/\eta}{t'  } \log ( t')\\
    &\overset{(a)}\leq \frac{8NL_{\max}^3\|W_{\mathrm{kq}}^*\|^3 }{\eta} \log^2 t,
\end{align*}
where $(a)$ follows from $\eta\leq \|W_{\mathrm{kq}}^*\|/e$.

Therefore, we have
\begin{align*}
    \left\langle W_{\mathrm{kq}}^{(t)} - W_{\mathrm{kq}}^{(t_0)}, \frac{W_{\mathrm{kq}}^*}{\|W_{\mathrm{kq}}^*\|} \right\rangle &\geq \|W_{\mathrm{kq}}^{(t)}\| - \|W_{\mathrm{kq}}^{(t_0)}\| - \sum_{t'=t_0}^{t-1} \frac{\eta^2}{2\|W_{\mathrm{kq}}^{(t')}\|}   - \sum_{t'=t_0}^{t-1}\frac{\eta \alpha_{t'} }{1+\alpha_{t'}} \\
    &\geq \|W_{\mathrm{kq}}^{(t)}\| - \|W_{\mathrm{kq}}^{(t_0)}\| - L_{\max}\|W_{\mathrm{kq}}^*\|\log t   -  8NL_{\max}^3\|W_{\mathrm{kq}}^*\|^3 \log^2 t.
\end{align*}

Finally, by \Cref{lemma: GD of KQ grows}, and $t_0 \leq 1 + 8L_{\max}\|W_{\mathrm{kq}}^*\|^2 /\eta$, we have $\|W_{\mathrm{kq}}^{(t_0)}\|\leq 9L_{\max}\|W_{\mathrm{kq}}^*\|^2$, and
\begin{align*}
    \left\langle \frac{W_{\mathrm{kq}}^{(t)}}{ \|W_{\mathrm{kq}}^{(t)}\| }, \frac{W_{\mathrm{kq}}^*}{\|W_{\mathrm{kq}}^*\|} \right\rangle &\geq 1 - \frac{ 2\|W_{\mathrm{kq}}^{(t_0)}\| + L_{\max}\|W_{\mathrm{kq}}^*\|\log t   +  8NL_{\max}^3\|W_{\mathrm{kq}}^*\|^3 \log^2 t }{\|W_{\mathrm{kq}}^{(t)}\|}\\
    &\geq 1- \frac{ 54 N L_{\max}^4\|W_{\mathrm{kq}}^*\|^4 \log^2 t }{ t\eta }.
\end{align*}
The proof is finished.
    
\end{proof}

\subsection{Proof of \Cref{thm:loss}}
\begin{proof}[Proof of \Cref{thm:loss}]

Recall that $T\geq 384\|W_{\mathrm{ov}}^*\|^5\log(2|\Vc|)\log T/\eta_0$, and $\Delta = T\eta_0/(4\|W_{\mathrm{ov}}^*\|^2)$ due to \Cref{coro:V nice} and 
\[ (e_{\mathrm{In}(x)} - e_v)^\top W_{\mathrm{ov}}^{(T)} x = \frac{T\eta_0}{4\|W_{\mathrm{ov}}^*\|^2} \]

Note that for all $\ell_*\in l(n)$ and $\ell\notin l(n)$
\begin{align}
    &(x_\ell^{(n)} - x_{\ell*}^{(n)})^\top W_{\mathrm{kq}}^{(t)} X_{-1}^{(n)} \nonumber \\
    & = \frac{\|W_{\mathrm{kq}}^{(t)}\| }{\|W_{\mathrm{kq}}^*\|} (x_\ell^{(n)} - x_{\ell*}^{(n)})^\top  W_{\mathrm{kq}}^* X_{-1}^{(n)}   +  (x_\ell^{(n)} - x_{\ell*}^{(n)})^\top \left( W_{\mathrm{kq}}^{(t)}   - \frac{\|W_{\mathrm{kq}}^{(t)} \|}{\|W_{\mathrm{kq}}^*\|} W_{\mathrm{kq}^*} \right) X_{-1}^{(n)}  \nonumber \\
    & \leq -\frac{\|W_{\mathrm{kq}}^{(t)}\|}{\|W_{\mathrm{kq}}^*\|}   + 2  \|W_{\mathrm{kq}}^{(t)}\| \left\| \frac{W_{\mathrm{kq}}^{(t)}}{\|W_{\mathrm{kq}}^{(t)} \|}   - \frac{W_{\mathrm{kq}}^*}{\|W_{\mathrm{kq}}^*\|} \right\| \label{eqn: useful inequalilty} \\
    & \leq -\frac{t\eta }{2L_{\max}\|W_{\mathrm{kq}}^*\|^2}   +   \frac{\sqrt{2} t\eta}{ L_{\max}\|W_{\mathrm{kq}}^*\| }  \sqrt{ \frac{ 54 N L_{\max}^4\|W_{\mathrm{kq}}^*\|^4 \log^2 t }{ t\eta }  }\nonumber \\
    &\overset{(a)}\leq  -\frac{t\eta }{4L_{\max}\|W_{\mathrm{kq}}^*\|^2}, \nonumber
\end{align}
where $(a)$ follows from $t\geq 1696 N L_{\max}^4\|W_{\mathrm{kq}}^*\|^6 \log^2 t/\eta$.

Therefore,
\begin{align*}
    \sum_{\ell_*\in l(n)}\varphi_{\ell_*}^{(n,t)} & = \frac{|l(n)|}{|l(n)| + \sum_{\ell'\notin l(n)} \exp\left((x_\ell^{(n)} - x_{\ell*}^{(n)})^\top W_{\mathrm{kq}}^{(t)} X_{-1}^{(n)} \right)  } \\
    &\geq \frac{|l(n)|}{|l(n)| + (L^{(n)} - |l(n)|)\exp\left(-\frac{t\eta }{4L_{\max}\|W_{\mathrm{kq}}^*\|^2}\right)} \\
    &\geq \frac{1}{1 + L_{\max}\exp\left(-\frac{t\eta }{4L_{\max}\|W_{\mathrm{kq}}^*\|^2}\right)} \\
    &\geq \frac{1}{1+\epsilon},
\end{align*}
where the last inequality follows from $t\geq \frac{4L_{\max}\|W_{\mathrm{kq}}^*\|}{\eta}\log\frac{L_{\max}}{\epsilon}$.

Hence, the loss on the sentence $X^{(n)}$ satisfies that

\begin{align*}
    -\log & \left(e^\top_{\mathrm{In}(X^{(n)})}\mathrm{T}_{\theta^{(t)}}(X^{(n)}) \right)\\
    & = -\log\frac{\exp\left( e^\top_{\mathrm{In}(X^{(n)})} W_{\mathrm{ov}}^{(T)} \sum_{\ell}x_\ell^{(n)} \varphi_{\ell}^{(n,t)} \right) }{ \sum_{v\leq |\Vc|} \exp\left( e^\top_{v} W_{\mathrm{ov}}^{(T)} \sum_{\ell}x_\ell^{(n)} \varphi_{\ell}^{(n,t)} \right)  } \\
    & = -\log\frac{1 }{ 1 + \sum_{v\neq \mathrm{In}(X^{(n)})} \exp\left( (e_{v} -e_{\mathrm{In}(X^{(n)})})^\top W_{\mathrm{ov}}^{(T)} \sum_{\ell}x_\ell^{(n)} \varphi_{\ell}^{(n,t)} \right)  } \\
    &= \log \left( 1 + \sum_{v\neq \mathrm{In}(X^{(n)})} \exp\left( (e_{v} -e_{\mathrm{In}(X^{(n)})})^\top W_{\mathrm{ov}}^{(T)} \sum_{\ell\notin l(n)}x_\ell^{(n)} \varphi_{\ell}^{(n,t)} \right)  \right) \\
    &= \log \left( 1 + \sum_{v\neq \mathrm{In}(X^{(n)})} \exp\left(-\Delta\sum_{\ell_*\in l(n)}\varphi_{\ell_*}^{(n,t)} + \Delta \sum_{\ell\notin l(n)}  \varphi_{\ell}^{(n,t)} \right)  \right) \\
    &\leq |\Vc|\exp\left(-\Delta(2\varphi_+^{(n,t)}-1) \right) \\
    &\leq |\Vc|\exp\left( -\Delta +  \frac{2 \Delta L_{\max}}{ L_{\max} + \exp\left(\frac{t\eta}{4L_{\max}\|W_{\mathrm{kq}}^*\|^2}\right) } \right).
\end{align*}
Thus, the average loss has upper bound, which is
\[  |\Vc|\exp\left( -\Delta +  \frac{2 \Delta L_{\max}}{ L_{\max} + \exp(C_1 t) } \right), \]
for $C_1 = \eta/(4L_{\max}\|W_{\mathrm{kq}}^*\|^2)$, and $\Delta = C_0 T$ for $C_0 = \eta_0/(4\|W_{\mathrm{ov}}^*\|^2)$.

\end{proof}

\section{Proof of \Cref{prop:generalization} and \Cref{thm:generalization}}

\subsection{Proof of \Cref{prop:generalization}}

\begin{proposition}[Restatement of \Cref{prop:generalization}]\label{prop:opt kq property}
    Under \Cref{assm:orthornomal}, if $W_{\mathrm{kq}}^*$ satisfies \Cref{eqn: opt KQ}, i.e.,
    \begin{align*}
        W_{\mathrm{kq}}^* &= \arg\min \|W\|,  \quad  \text{s.t.} \quad~ (x_{\ell_*}^{(n)} - x_\ell^{(n)})^\top W x \geq 1, \quad \forall \ell\notin l(n),\forall n.
    \end{align*}
    In addition, for each query $x^q$, if there are $k$ optimal tokens under a $x^q$-partial order, $m$ non-optimal tokens under $x^q$-partial order, then, for any optimal token $x_*$, non-optimal token $x$, and non-comparable token $x_0$, we have
\begin{align*}
        x_*^\top W_{\mathrm{kq}}^ * x^q = \frac{m}{k+m}, \quad x^\top W_{\mathrm{kq}}^ * x^q = - \frac{k}{k+m},\quad x_0^\top W_{\mathrm{kq}}^ * x^q = 0.
\end{align*}
    
A direct result is that
    \[  (x_{\ell}^{(n)} - x_{\ell'}^{(n)})^\top W_{\mathrm{kq}}^* X_{-1}^{(n)} = 0, \quad \forall \ell, \ell'\notin  l(n). \]
\end{proposition}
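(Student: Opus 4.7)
Plan: decouple \Cref{eqn: opt KQ} query by query via the orthonormality of $\Vc$, identify the KKT support of the minimizer, and then verify the stated values by a symmetry/KKT-multiplier argument together with a scalar Lagrange computation.

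First I would extend $\Vc$ to an orthonormal basis of $\Rb^d$ and use $\|W\|_F^2 = \sum_{v \in \Vc}\|Wv\|^2 + \|W P_\perp\|^2$, where $P_\perp$ projects onto the complement of $\mathrm{span}(\Vc)$. Every constraint in \Cref{eqn: opt KQ} is of the form $(x_{\ell_*}^{(n)} - x_\ell^{(n)})^\top W X_{-1}^{(n)} \geq 1$ with $X_{-1}^{(n)} \in \Vc$, so the problem separates into independent minimum-norm subproblems, one per query $x^q$. Setting $w^q := W_{\mathrm{kq}}^* x^q$, the KKT stationarity condition gives $w^q = \sum_{n, \ell_*, \ell} \lambda_{n,\ell_*,\ell}(x_{\ell_*}^{(n)} - x_\ell^{(n)})$ with $\lambda \geq 0$. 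Under \Cref{assm: realizable dataset} and the no-confused-token discussion in \Cref{sec:dataset}, each $x_{\ell_*}^{(n)}$ appearing here is globally in $\mathcal{O}_{x^q}$ and each $x_\ell^{(n)}$ is globally in $\mathcal{N}_{x^q}$, so $w^q \in \mathrm{span}(\mathcal{O}_{x^q} \cup \mathcal{N}_{x^q})$. Orthonormality immediately gives $x_0^\top w^q = 0$ for every $x_0 \in \mathcal{M}_{x^q}$.

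Next I would write $w^q = \sum_{x_* \in \mathcal{O}_{x^q}} \alpha_{x_*} x_* + \sum_{x \in \mathcal{N}_{x^q}} \beta_x x$, which by orthonormality reduces the subproblem to $\min \sum \alpha_{x_*}^2 + \sum \beta_x^2$ subject to the scalar constraints $\alpha_{x_*} - \beta_x \geq 1$ for each co-occurring pair. The candidate $\alpha_{x_*} = m/(k+m)$, $\beta_x = -k/(k+m)$ makes every such constraint tight. To certify it as the unique minimizer I would construct non-negative KKT multipliers $\lambda_{n,\ell_*,\ell}$ whose $\mathcal{O}_{x^q}$-row sums equal $m/(k+m)$ and whose $\mathcal{N}_{x^q}$-column sums equal $k/(k+m)$; under the realizable structure each pair $(x_*, x) \in \mathcal{O}_{x^q} \times \mathcal{N}_{x^q}$ co-occurs in some training sentence with query $x^q$, so the bipartite co-occurrence graph is essentially complete and assigning $\lambda = 1/(k+m)$ on one representative triple per pair realizes the required marginals. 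Equivalently, permutation invariance of the reduced problem in $\mathcal{O}_{x^q}$ and in $\mathcal{N}_{x^q}$ together with strict convexity forces $\alpha_{x_*} \equiv \alpha$ and $\beta_x \equiv \beta$, and the scalar problem $\min k\alpha^2 + m\beta^2$ s.t. $\alpha - \beta \geq 1$ then delivers $\alpha = m/(k+m)$, $\beta = -k/(k+m)$ by Lagrange multipliers.

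The final identity $(x_\ell^{(n)} - x_{\ell'}^{(n)})^\top W_{\mathrm{kq}}^* X_{-1}^{(n)} = 0$ for $\ell, \ell' \notin l(n)$ is then immediate, since both tokens lie in $\mathcal{N}_{X_{-1}^{(n)}}$ and therefore share the same inner product $-k/(k+m)$ with $w^{X_{-1}^{(n)}}$. The hardest step is the dual-multiplier construction / symmetry reduction: rigorously establishing that the realizable-dataset structure truly supplies enough co-occurrences to realize the prescribed uniform marginals (or equivalently that the minimizer inherits the full permutation symmetry of $\mathcal{O}_{x^q}$ and $\mathcal{N}_{x^q}$) is where the collocation and no-confused-token assumptions must be invoked with care.
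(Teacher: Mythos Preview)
Your proposal follows essentially the same route as the paper: decouple the problem query-by-query via orthonormality (the paper does this with an explicit rotation $U$ sending tokens to standard basis vectors, so each column of $\tilde W^* = UW_{\mathrm{kq}}^*U^\top$ is optimized independently), reduce each column to the quadratic program $\min \sum w_i^2$ subject to $w_i - w_j \ge 1$ on co-occurring optimal/non-optimal pairs, and then use KKT/Lagrangian duality to show all constraints are tight and collapse to the scalar problem $\min_x k(x+1)^2 + mx^2$. Your KKT-stationarity observation that $w^q \in \mathrm{span}(\mathcal{O}_{x^q}\cup\mathcal{N}_{x^q})$ is exactly how the paper obtains $x_0^\top W_{\mathrm{kq}}^* x^q = 0$ for non-comparable tokens.

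One caution on your alternative symmetry route: the reduced problem is \emph{not} in general permutation-invariant in $\mathcal{O}_{x^q}$ or $\mathcal{N}_{x^q}$, because the constraint set only ranges over pairs that actually co-occur in some sentence (the paper writes this as $j\in A_i$). So strict convexity plus symmetry does not by itself force $\alpha_{x_*}\equiv\alpha$, $\beta_x\equiv\beta$; you need the dual argument. The paper's Lagrangian computation shows $w_i - w_j = 1$ for every pair with $j\in A_i$, and then (implicitly) uses connectivity of the bipartite co-occurrence graph to propagate equality across all optimal and all non-optimal tokens. Your claim that ``each pair $(x_*,x)\in\mathcal{O}_{x^q}\times\mathcal{N}_{x^q}$ co-occurs'' is stronger than what is needed and not obviously implied by \Cref{assm: realizable dataset}; the paper is also terse on this point, so your flagging it as the delicate step is well placed.
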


\begin{proof}
   Let $U\in\Rb^d$ be the rotation matrix such that $Ux  = e_{\mathrm{I}(x)}$. Because $U$ preserves Frobenius norm, the optimization problem in \Cref{eqn: opt KQ} can be written as
   \begin{align}
       \tilde{W}^* = \arg\min \|W\|,\quad \text{s.t.} (e_{\mathrm{I}(x_{\ell_*}^{(n)})}  - e_{\mathrm{I}(x_\ell^{(n)})})^\top W e_{\mathrm{I}(X_{-1}^{(n)})} \geq 1. \label{eqn: equiv opt prob}
   \end{align}
Notably $\tilde{W}^* = U W_{\mathrm{kq}}^* U^\top $.

Note that $\{ e_{\mathrm{I}(X_{-1}^{(n)})} \}_{n}$ forms a standard basis. It suffices to minimize the norm of each column of $W$ subject to the constraint $  (e_{\mathrm{I}(x_{\ell_*}^{(n)})}  - e_{\mathrm{I}(x_\ell^{(n)})})^\top W e_{\mathrm{I}(X_{-1}^{(n)})} \geq 1.$

Let us consider any column $c$ of $W$, denoted as $[w_{1},\ldots, w_{d}]^\top$. Without loss of generality, we assume that, for all $X^{(n)}$ with $\mathrm{I}(X_{-1}^{(n)})=c$, the set of indices of the optimal tokens of those samples are $\{1,\ldots,k\}$, and the set of indices of the non-optimal tokens are $\{k+1,\ldots,k+m\}$.  Then, the optimization problem \Cref{eqn: equiv opt prob} reduces to the following problem
\begin{align}
    \min w_{1}^2+\ldots+w_d^2,\quad \text{s.t.}\quad  w_i - w_j \geq 1,\quad \forall i\leq k, j\in A_i\subset\{k+1,\ldots,k+m\}, \label{eqn: opt prob equv 2}
\end{align}
where $A_i$ is the set of indices of the non-optimal tokens in some samples whose optimal token has index $i$.

In other words, each column of the solution of \Cref{eqn: equiv opt prob} is the solution of \Cref{eqn: opt prob equv 2}.

Note that \Cref{eqn: opt prob equv 2} is a convex problem with linear constraints. The Lagrangian function is 

\[ L(\lambda) = \sum_{i=1}^{k+m} w_i^2 + 2\sum_{i=1}^m \sum_{j\in A_i} \lambda_{ij}(1-w_i + w_j), \]
where we directly set $w_j=0$ for all $j\in\{k+m+1,\ldots,d\}.$ That is, non-comparable tokens have value 0.

By KKT-condition, we have

\begin{align*}
    \left\{
    \begin{aligned}
        &w_i =  \sum_{j\in A_i} \lambda_{ij},\quad \quad\quad \quad\quad \forall i\leq k\\
        &w_{j} = -\sum_{i=1}^k\lambda_{ij}\mathbbm{1}\{j\in A_i\}, ~~~\forall k+1\leq j \leq k+m
    \end{aligned}
    \right.
\end{align*}

Thus, 
\begin{align*}
    \min~ &w_{1}^2+\ldots+w_d^2 \\
    & = \max_\lambda \left\{-\sum_{i=1}^k\left(\sum_{j\in A_i}\lambda_{ij}\right)^2 - \sum_{j=k+1}^{k+m}\left(\sum_{i=1}^k\lambda_{ij}\mathbbm{1}\{j\in A_i\}\right)^2  + 2\sum_{i=1}^m\sum_{j\in A_i}\lambda_{ij}\right\} 
\end{align*}

Let 
\[L^*(\lambda) = -\sum_{i=1}^k\left(\sum_{j\in A_i}\lambda_{ij}\right)^2 - \sum_{j=k+1}^{k+m}\left(\sum_{i=1}^k\lambda_{ij}\mathbbm{1}\{j\in A_i\}\right)^2  + 2\sum_{i=1}^m\sum_{j\in A_i}\lambda_{ij}, \]
where $\lambda\geq 0$. The maximum of $L^*$ is achieved when $\nabla_{\lambda} L^* = 0.$ This implies that
\begin{align*}
    \sum_{j\in A_{i_0}} \lambda_{i_0 j} + \sum_{i=1}^k\lambda_{ij}\mathbbm{1}\{j_0\in A_i\} = 1,\quad \forall 1\leq i_0\leq k< j_0\leq k+m.
\end{align*}

Hence, we have $w_i - w_j = 1$ for all $1\leq i\leq k < j\leq k+m$, which means the optimum of the original problem is achieved on the boundary. Therefore, we reduce the original problem to
\[\min_x k(x+1)^2 + mx^2,\]
where $x=w_{k+1}=\ldots=w_{k+m}$. Hence, the optimal solution is $w_1=\ldots=w_k = m/(m+k)$, and $w_{k+1}=\ldots=w_{k+m}=-k/(k+m)$.

Therefore, the solution of \Cref{eqn: opt prob equv 2} satisfies that the ``optimal values'' are the same and the ``non-optimal values'' are the same as well. This fact proves that 
\[ (x_\ell^{(n)} - x_{\ell'}^{(n)}) W_{\mathrm{kq}}^* X_{-1}^{(n)} = 0, \forall \ell,\ell'\notin l(n).\]

And moreover, if there are $k$ optimal tokens under a $x^q$-partial order, $m$ non-optimal tokens under $x^q$-partial order, then, for any optimal token $x_*$ and non-optimal token $x$, we have

\begin{align*}
        x_*^\top W_{\mathrm{kq}}^ * x^q = \frac{m}{k+m}, \quad x^\top W_{\mathrm{kq}}^ * x^q = - \frac{k}{k+m}.
\end{align*}

\end{proof}

\subsection{Proof of \Cref{thm:generalization}}
\begin{proof}
    The proof follows similar logic to \Cref{thm:loss}.
    By \Cref{eqn: useful inequalilty}, we have for any $x,x'\in\Vc$
    \begin{align*}
        (x &- x')^\top W_{\mathrm{kq}}^{(t)} x^q\\
        &\geq \frac{\|W_{\mathrm{kq}}^{(t)}\|}{ \|W_{\mathrm{kq}}^*\|} (x - x')^\top W_{\mathrm{kq}}^* x^q -   \frac{\sqrt{2} t\eta}{ L_{\max}\|W_{\mathrm{kq}}^*\| }  \sqrt{ \frac{ 54 N L_{\max}^4\|W_{\mathrm{kq}}^*\|^4 \log^2 t }{ t\eta }  }  \\
        & \overset{(a)}\geq \frac{t\eta}{ 2L_{\max}\|W_{\mathrm{kq}}^*\|^2} (x - x')^\top W_{\mathrm{kq}}^* x^q -     \sqrt{  108 t\eta N L_{\max}^2\|W_{\mathrm{kq}}^*\|^2 \log^2 t    },
    \end{align*}
    where $(a)$ follows from \Cref{lemma: GD of KQ grows}. The first part of \Cref{thm:generalization} follows from \Cref{thm:generalization}.

    For the second part of \Cref{thm:generalization}, Let $X = [x_1,\ldots,x_L]$ such that for $\ell_0\in l_0\subset\{1,\ldots,L\}$, $x_{\ell_0}=x$ is a non-comparable token, and other tokens are non-optimal under the $x_L$-partial order. 
    
    Let $\varphi_{\ell} \propto \exp(x_\ell W_{\mathrm{kq}}^{(t)}x_L)$ for sufficiently large $t=\Omega(\log(1/\epsilon))$ such that $\sum_{\ell_0\in l_0 }\varphi_{\ell_0}\geq 1-\epsilon$.  
    
    Then, we have
    \begin{align*}
        e_{\mathrm{In}(x)}^\top\mathrm{T}_{\theta^{(t)}}(X) &= \frac{ \exp\left( e_{\mathrm{In}(x)}^\top W_{\mathrm{ov}}^{(T)} \sum_{\ell}x_\ell\varphi_\ell \right) }{ \sum_{v\leq |\Vc|} \exp\left( e_v^\top W_{\mathrm{ov}}^{(T)} \sum_{\ell}x_\ell\varphi_\ell \right) } \\
        &= \frac{1}{1  + \sum_{v\neq \mathrm{In}(x)} \exp\left( (e_v - e_{\mathrm{In}(x)})^\top W_{\mathrm{ov}}^{(T)} \sum_{\ell}x_\ell\varphi_\ell \right)  } \\
        & = \frac{1}{1  + \sum_{v\neq \mathrm{In}(x)} \exp\left(-\Delta\sum_{\ell_0\in l_0}\varphi_{\ell_0} + \Delta \sum_{\ell\notin l_0} \varphi_\ell \right)  } \\
        &\geq \frac{1}{1  + |\Vc| \exp\left(-\Delta (1-2\epsilon) \right)  }\\
        &\geq 1-\epsilon_0,
    \end{align*}
    where the last inequality follows from $T = O(\log(1/\epsilon_0))$. Therefore, the trained transformer will predict $n(x)$, the next token of the non-comparable token.
    
\end{proof}


\end{document}